\newcommand{\Z}{\mathbb{Z}}
\newcommand{\R}{\mathbb{R}}
\newcommand{\B}{\mathcal{B}}
\newcommand{\D}{\mathcal{D}}
\newcommand{\PPB}{\mathcal{P}(\mathcal{B})}
\newcommand{\PPBbar}{\overline{\mathcal{P}}(\mathcal{B})}
\newcommand{\CBONE}{\mathcal{C}^{1}_{\PPB}}
\newcommand{\CBZERO}{\mathcal{C}^{0}_{\PPB}}
\newcommand{\hx}{\hat{x}}
\newcommand{\hz}{\hat{z}}
\newcommand{\bias}{p}
\newcommand{\CN}{\tau_f}
\DeclareMathOperator*{\vol}{Vol}
\title[]{A lattice-based approach\\ to the expressivity of deep ReLU neural networks}
\author{\Name{Vincent Corlay} \Email{v.corlay@fr.merce.mee.com}\\
 \addr{Telecom ParisTech, Paris, \\ Mitsubishi Electric R\&D Centre Europe, Rennes.}
 \AND
 \Name{Joseph J. Boutros} 
  \addr{Texas A\&M University, Doha.}
 \AND
 \Name{Philippe Ciblat} 
 \addr Telecom ParisTech, Paris.
\AND
 \Name{Lo\"ic Brunel} 
 \addr Mitsubishi Electric R\&D Centre Europe, Rennes.
}
\begin{document}

\maketitle


\begin{abstract}%
We present new families of continuous  piecewise  linear (CPWL) 
functions in $\R^n$ having a number of affine pieces growing exponentially in $n$. 
We show that these functions can be seen as the high-dimensional generalization 
of the triangle wave function used by Telgarsky in 2016. 
We prove that they can be computed by ReLU networks 
with quadratic depth and linear width in the space dimension.
We also investigate the approximation error of one of these functions by shallower networks
and prove a separation result.
The main difference between our functions and other constructions is their practical interest: they arise in the scope of channel coding. 
Hence, computing such functions amounts to performing a decoding operation. 
\end{abstract}

\begin{keywords}%
Neural networks, representation, approximation, depth hierarchy, Euclidean lattices.%
\end{keywords}

\section{Introduction and Main Results}
This paper follows two recent articles (but is self-contained), \cite{Corlay2018} and \cite{Corlay2019}, 
where we jointly study point lattices in Euclidean space and neural networks.
Our aim is twofold. Firstly, apply neural networks paradigm to find new
efficient decoding algorithms.
Secondly, contribute to the understanding of the efficiency of deep learning.
In this work, we emphasize the second aspect and highlight a direct contribution
of lattice coding theory to deep learning.

More  specifically, we  focus on  the  {\em expressive  power of  deep
  neural networks}.  Typically,  the goal of this line  of research is
to show that there exist functions  that can be well approximated by a
deep  network  with  a  polynomial number  of  parameters  whereas  an
exponential number  of parameters is  required for a  shallow network.
Many   results    in   the   literature    like   \cite{Montufar2014},
\cite{Telgarsky2016}, \cite{Arora2018}  utilize functions that  can be
addressed  via  ``conventional  methods"  (i.e. not  via  deep  neural
networks):  they  are  mostly  based on  one  dimensional  approaches.
Please,  see Appendices~\ref{App_expre_power_NN}~and~\ref{App_Telgarsky}
for a  survey on recent  results on this  topic and an  elucidation of
main  techniques.   Functions associated  to  point  lattices are  too
complicated to be computed via conventional methods, thus illustrating
the benefit  of both  neural networks  and depth.   They arise  in the
context of the sphere  packing problem and lattices \cite{Conway1999}.
We  argue  that  these  functions enlighten  the  missing  dimensional
dependency  in  the  bound  of  \cite{Telgarsky2016}.   Moreover,  for
dimensional dependent separation bounds  to hold in higher dimensions,
our  investigation highlights  the need  for sophisticated  functions.
Such functions can be found thanks to {\em dense} lattices.

Short-length error-correcting codes used to protect digital information transmission
are discrete sets mainly built via Algebra: 
e.g. vector spaces over finite fields or modules over rings.
The decoding operation in a discrete set consists in finding the closest element
to a noisy received signal. This is a classification problem.

The channel  coding community  recently started  to use  deep learning
techniques  to tackle  this classification  problem.  The  interest in
deep   learning  for   channel   coding   is  growing   exponentially.
However, the first  attempts to perform decoding
operations with ``raw" neural  networks (i.e. without using underlying
graph  structures  of  existing  sub-optimal algorithms,  as  done  in
\cite{Nachmani2018}) were  unsuccessful. For instance,  an exponential
number of  neurons in  the network is  needed in  \cite{Gruber2017} to
achieve satisfactory  performance.  So far,  it was not  clear whether
such a behavior is  due  to  an  unadapted  learning  algorithm  or  a
consequence  of a poor  function class.   This work  is a  first
theoretical step towards a better  understanding of the function class
that should be more suitable for usage in these decoding problems.

In \cite{Corlay2019}, we rigorously  presented the duality between the
decoding operation for lattices,  the so-called closest vector problem
(CVP), and  a classification problem in  the fundamental parallelotope
with  a CPWL  function  defining the  decoding boundary.   Preliminary
results for one of the most famous root lattices, namely $A_n$, were also
presented:  for a  given basis  of  $A_n$, the  function defining  the
boundary has $\Omega(2^n)$ affine pieces.   We managed to reduce the
number   of  pieces   to  be   computed  down to  a   linear  number via
$\mathcal{O}(n^2)$   reflections   with   respect  to   the   bisector
hyperplane  of pairs of  vectors in the lattice basis.
Hence,  the  evaluation of this decision boundary function can  be
performed  by a  ReLU network  of depth  $\mathcal{O}(n^2)$ and  width
$\mathcal{O}(n)$.  We  also proved that  a ReLU network with  only one
hidden layer requires $\Omega\left(2^{n}\right)$  neurons to compute this
function.  We did not quantify the approximation error.

\vspace{-1mm}
\subsection{Main Results}
In this paper, we complete the initial results of \cite{Corlay2019} with the following contributions:
\begin{enumerate}
\item We show that the CPWL boundary function $f$, obtained from $A_n$,
is a $n$-dimensional generalization of the triangle wave function used by \cite{Telgarsky2016}.

\item We investigate the approximation error of $f$ by a function $g$ having a restricted number of pieces. 
We prove that, for a large enough dimension and within the fundamental parallelotope, 
$f$ can be approximated by a one-neuron linear network with a negligible error.
This emphasize the need for more sophisticated functions to illustrate the benefit of depth in high dimensions for a fixed size 
of the domain of $f$.

\item However, if $f$ is not limited to this parallelotope but to a larger compact set, whose size increases exponentially
with the depth of the network used for approximation, we get a separation result. 
Theorem~\ref{th_zero_error} (with the parameter $M=n^2$) has the following consequence: there exists a function $f:\R^{n-1} \rightarrow \R$
computed by a standard ReLU neural network in $\mathcal{O}(n^2)$ layers and $\mathcal{O}(n^3)$ neurons where
any function $g$ computed by a ReLU neural network with $\leq n$ layers and $\leq 2^{n-1}$ neurons induces a $L_1$ 
approximation error
$||f-g||_1 = \Omega(2^{(n-1)^3-n\log_2(n)})$.

\item We present new sophisticated CPWL functions arising from root lattices
$D_n$, $n \ge 2$, and $E_n$, $6 \leq n \leq 8$. 
The exact numbers of pieces of these functions are provided by explicit formulas.
These numbers are exponential in the space dimension. 

\item We show that each of these functions can be computed by a ReLU network
with polynomial depth and linear width. This is achieved by $folding$ the input space: 
i.e. we perform reflections in the input space as pre-processing. 
After a polynomial number of reflections, the functions can be evaluated 
by computing a number of affine functions growing only linearly in the space dimension.
\end{enumerate}



\vspace{-5mm}
\section{Lattices, Polytopes, and the decision boundary function}
\label{sec_nota}
This section is highly inspired from \cite{Corlay2019}. 
We establish the notations and state existing results used in the sequel. 

\vspace{-3mm}
\subsection{Lattices and polytopes}

A lattice $\Lambda$ is a discrete additive subgroup of $\R^n$.
For a rank-$n$ lattice in $R^n$, the rows of a $n\times n$ generator matrix $G$ constitute
a basis of $\Lambda$ and any lattice point $x$ is obtained via $x=zG$, where $z \in \Z^n$.
If needed, $x$ also denotes the corresponding vector.
Also, $\Gamma = GG^{T}$ is the Gram matrix (see Appendix~\ref{App_latt} for more details on $\Gamma$).
For a given basis $\mathcal{B}=\{ b_i \}_{i=1}^n$, $\PPB$ denotes
the fundamental parallelotope of $\Lambda$ and $\mathcal{V}(x)$ the Voronoi cell
of a lattice point $x$ (see Appendix~\ref{App_latt} for formal definitions of these fundamental regions of a lattice).
The minimum Euclidean distance of $\Lambda$ is $d_{min}(\Lambda)=2\rho$, where $\rho$ is the packing radius.

A vector  $v \in \Lambda$ is  called Voronoi vector if  the half-space
$\{y \in \mathbb{R}^{n} \ : \ y  \cdot v \leq \frac{1}{2}v \cdot v \}$
has a non-empty intersection with $\mathcal{V}(0)$. The vector is said
relevant  if   the  intersection  is  an   $n-1$-dimensional  face  of
$\mathcal{V}(0)$. We denote  by  $\tau_{f}$ the  number of  relevant
Voronoi  vectors, referred  to  in  the sequel as  the {\em  Voronoi  number}
of the lattice. The Voronoi number and the kissing number are equal for root lattices.
The set of relevant Voronoi vectors is denoted $\CN(0)$.
The set of lattice points having a common Voronoi facet with $x \in \Lambda$
becomes $\CN(x)=\CN(0)+x$.


Lattice decoding refers to the method of finding the closest lattice point,
the closest in Euclidean distance sense.
This problem is also known as the closest vector problem.
Our functions are mostly studied in the compact region $\PPB$,
thus it is important to characterize $\PPB$ as made below.

Let $\PPBbar$ be the topological closure of $\PPB$.
A $k$-dimensional element of $\PPBbar\setminus \PPB$
is referred to as $k$-face of $\PPB$.
There are $2^n$ 0-faces, called corners or vertices.
This set of corners is denoted $\mathcal{C}_{\PPB}$.
Moreover, the subset of $\mathcal{C}_{\PPB}$ obtained with $z_i=1$ is
$\mathcal{C}^{i,1}_{\PPB}$ and $\mathcal{C}^{i,0}_{\PPB}$ for $z_i=0$. 
The remaining faces of $\PPB$ are parallelotopes. For instance, a $n-1$-dimensional facet of $\PPB$ 
is itself a parallelotope of dimension $n-1$ defined by $n-1$ vectors of $\B$. 
Throughout the paper, the term facet refers to a $n-1$-face.
Also, for the sake of simplicity, $\PPB$ refers to $\PPBbar$.

The following definition ensures optimality when decoding via the decision boundary in $\PPB$.
\begin{definition}
\label{def_Voronoi-reduced}
Let $\B$ be the $\Z$-basis of a rank-$n$ lattice $\Lambda$ in~$\R^n$.
$\B$ is said Voronoi-reduced (VR) if, for any point $y \in \PPB$,
the closest lattice point $\hx$ to $y$ is one of the $2^n$ corners of $\PPB$,
i.e. $\hx=\hz G$ where $\hz \in \{0, 1\}^n$.
\end{definition}

Some of the above conditions are relaxed to yield the less restrictive 
definition of a semi-Voronoi-reduced (SVR) basis. 
A rigorous understanding of this definition is not necessary to grasp the main ideas of the  paper.
While a SVR basis does not enable perfect decoding, 
it ensures the existence of a decision boundary function (described below). 
The formal definition of a SVR basis is provided in Appendix~\ref{App_latt}.

A convex polytope (or convex polyhedron) is defined as the intersection of a finite number of half-spaces bounded by hyperplanes (\cite{Coxeter1973}):
\[
P_{o}=\{x \in \R^{n} : \ xA \leq b, \ A \in \R^{n \times m}, \ b \in \mathbb{R}^{m}\}.
\]
In this paper, parallelotopes are not the only polytopes considered as we also use simplices.
A $i$-simplex associated with $\mathcal{U}=\{u_{j}\}_{j=0}^{i}$ is given by
\begin{align}
\label{eq_fake_simp}
\mathcal{S}(\mathcal{U})= \{ & y \in \R^{n} :  \ y= \sum_{j=1}^{i} \alpha_{j}(u_j-u_0),
\sum_{j=1}^{i} \alpha_{j} \leq1, \ \alpha_{j} \geq 0 \  \forall \ j   \}.
\end{align}
By abuse of terminology, the definition of \eqref{eq_fake_simp} is maintained even if the 
vectors in the set $\mathcal{U}$ are not affinely independent. In this latter case, we refer to $i$ as the size of the simplex whereas it is its dimension otherwise.
It is clear that the corners of $S(\mathcal{U})$ are the $i+1$ points of $\mathcal{U}$. 


We say that a function $g : \mathbb{R}^{n-1} \rightarrow \mathbb{R}$ is continuous piecewise linear (CPWL) 
if there exists a finite set of polytopes covering $\mathbb{R}^{n-1}$,
and $g$ is affine over each polytope. 
The number of pieces of $g$ is the number of distinct polytopes partitioning its domain.

$\vee$ and $\wedge$ denote respectively the maximum and the minimum operator. 
We define a convex (resp. concave) CPWL function formed by a set of affine functions
related by the operator $\vee$ (resp. $\wedge$). 
If $\{g_{k}\}$ is a set of $K$ affine functions,
the function $f=g_{1} \vee ... \vee g_{K}$ is CPWL and convex.

\vspace{-3mm}
\subsection{The decision boundary function}
The notion of decision boundary function for a lattice was introduced in \cite{Corlay2019}.
Given a VR basis, after translating the point to be decoded inside $\PPB$ to get a point $y \in \PPB$,
the decoder proceeds in estimating each $z_i$-component separately.
The idea is to compute the position of $y$ relative to a boundary
to guess whether $z_i=0$, i.e. the closest lattice point belongs to $\mathcal{C}^{i,0}_{\PPB}$,
or $z_i=1$ when the closest lattice point is in $\mathcal{C}^{i,1}_{\PPB}$.
This boundary cuts $\PPB$ into two regions.
It is composed of Voronoi facets of the corner points. 
\textbf{For the rest of the paper, without loss of generality, the integer coordinate to be decoded is $z_1$}.
Also, to lighten the notations $\mathcal{C}^{1,0}_{\PPB}=\mathcal{C}^{0}_{\PPB}$ and $\mathcal{C}^{1,1}_{\PPB}=\mathcal{C}^{1}_{\PPB}$.

Any Voronoi facet is contained in a boundary hyperplane orthogonal to a vector $v_j$, the equation of which is: 
\begin{equation}
\{ y \in \R^n : \ y \cdot v_j - \bias_j =0  \}.
\end{equation}
Any boundary hyperplane contains the Voronoi facet of a point $x \in \CBONE$ and a point from $\CN(x) \cap \CBZERO$ 
(i.e. the Voronoi facet between $x$ and any point in $\CN(x) \cap \CBZERO$  lies in a boundary hyperplane).
The decision boundary cutting $\PPB$ into two regions, with $\mathcal{C}^0_{\PPB}$ on one side
and $\mathcal{C}^1_{\PPB}$ on the other side, is the union of these Voronoi facets.
Each facet can be defined by an affine function over a compact subset of $\mathbb{R}^{n-1}$
and the decision boundary is locally described by one of these functions. 

Let $\{ e_i \}_{i=1}^n$ be the canonical orthonormal basis of the vector space $\R^n$.
For $y \in \R^n$, the $i$-th coordinate is $y_i=y \cdot e_i$.
Denote $\tilde{y}=(y_2, \ldots, y_n) \in \R^{n-1}$ and let $\mathcal{H} = \{ h_j \}$ be the set of affine functions involved in the decision boundary.
The affine boundary function $h_{j}:\R^{n-1} \rightarrow \mathbb{R}$ is
\begin{equation}
\label{equ_hj}
h_{j}(\tilde{y})=y_{1}= \bigg( \bias_{j} -\sum_{k \neq 1} y_{k}v_{j}^{k} \bigg)/v_{j}^{1},
\end{equation}
where $p_j$ is a bias and $v_{j}^{k}$ is the $k$-th component of vector $v_{j}$.
For the sake of simplicity, in the sequel $h_{j}$ shall denote the function defined
in (\ref{equ_hj}) or its associated hyperplane $\{ y \in \R^n : \ y \cdot v_j - \bias_j =0  \}$
depending on the context. 
The following theorem shows the existence of such a boundary function for a VR or SVR basis.
\begin{theorem}
\label{th_func_VR}(Proved in \cite{Corlay2019})
Consider a lattice defined by a VR or a SVR basis $\B=\{b_i\}_{i=1}^n$.
Suppose that the $n-1$ points $\mathcal{B} \backslash \{ b_1\}$
belong to the hyperplane $\{y \in \R^n : \ y \cdot e_1 =0\}$.
Then, the decision boundary is given by a CPWL function
$f:\R^{n-1} \rightarrow~\R$, expressed as
\begin{equation}
\label{eq_boundary_funct}
f(\tilde{y}) = \wedge_{m=1}^{M}\{\vee_{k=1}^{l_m}g_{m,k}(\tilde{y}) \},
\end{equation}
where $g_{m,k} \in \mathcal{H}$, $1 \leq l_m< \tau_f  $, and  $1 \leq M \leq 2^{n-1}$.
\end{theorem}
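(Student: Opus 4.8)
The plan is to exhibit $f$ as the graph, over $\tilde{y} \in \R^{n-1}$, of the lower boundary of the decoding region $R_1 = \bigcup_{x \in \CBONE} \mathcal{V}(x)$, and then to read the $\wedge$--$\vee$ form directly off the facet structure of the Voronoi cells. First I would show that this boundary is the graph of a single-valued function. Since $b_2,\dots,b_n$ lie in $\{y : y\cdot e_1 = 0\}$, the $2^{n-1}$ corners of $\CBONE$ are exactly the $b_1$-translates of those of $\CBZERO$, so every vertical line $\{(y_1,\tilde{y}) : y_1\in\R\}$ meets $\PPB$ in a segment running from a $\CBZERO$-facet at the bottom to a $\CBONE$-facet at the top. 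Using convexity of each $\mathcal{V}(x)$ together with the VR property --- which guarantees that every $y\in\PPB$ decodes to a corner, so that $\PPB$ is tiled by the $2^n$ cells $\mathcal{V}(x)\cap\PPB$ --- one checks that along such a segment the decoded value of $z_1$ switches from $0$ to $1$ exactly once. The switching height defines $f(\tilde{y})$, and for an SVR basis the same argument produces a well-defined boundary function under the relaxed hypotheses.

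Next I would extract the convex ($\vee$) piece attached to each corner. Fix $x\in\CBONE$, so that $\mathcal{V}(x)=\{y : (y-x)\cdot v \le \frac{1}{2} v\cdot v,\ v\in\CN(0)\}$. A relevant vector $v$ with $v\cdot e_1<0$ yields a lower bound $y_1\ge h_v(\tilde{y})$ with $h_v\in\mathcal{H}$ of the form \eqref{equ_hj} (well defined precisely because its first component $v^1\neq 0$); the facets separating $x$ from $\CN(x)\cap\CBZERO$ are exactly these downward ones, while the lateral facets ($v^1=0$) are vertical and only restrict the domain in $\tilde{y}$. Hence the lower surface of $\mathcal{V}(x)$ equals $\vee_k g_{x,k}(\tilde{y})$, a convex CPWL function whose terms $g_{x,k}$ are the downward facets. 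Because $x$ also possesses at least one upward or lateral relevant facet (its cell being bounded above), the downward facets form a strict subset of the $\tau_f$ relevant ones, which gives $1\le l_m<\tau_f$.

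Finally I would assemble the envelope over corners. The bottom of $R_1$ at $\tilde{y}$ is the least height attained by any $\CBONE$ cell, i.e.\ the pointwise minimum over $x\in\CBONE$ of the per-corner lower surfaces, so $f(\tilde{y})=\wedge_{x\in\CBONE}\{\vee_k g_{x,k}(\tilde{y})\}$, which is exactly \eqref{eq_boundary_funct}; since $|\CBONE|=2^{n-1}$ and only the corners contributing a boundary facet actually appear, one obtains $1\le M\le 2^{n-1}$. The \emph{hard part} is this last identification: one must verify that the cell attaining the pointwise minimum is genuinely the cell bounding $R_1$ from below at $\tilde{y}$ --- that the algebraic envelope never dips below the true boundary into a region decoding to $\CBZERO$, and that the minimizing cell is nonempty there, so that no ``phantom'' affine surface undercuts the boundary. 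This is precisely where VR/SVR enters, since it forbids interfering non-corner lattice points and forces the corner cells to tile $\PPB$, so that the lowest downward surface is always a true Voronoi facet of the decision boundary; continuity and finiteness of $f$ then follow from convexity of each $\vee$-term and the continuity of CPWL functions.
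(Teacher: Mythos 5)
The paper does not prove this theorem itself (it is imported from \cite{Corlay2019}), so there is no in-paper proof to compare against; your architecture --- single-valuedness of the boundary from the VR tiling, one convex $\vee$-piece per corner of $\CBONE$, a lower envelope $\wedge$ over those corners, and the counting that gives $1\le l_m<\tau_f$, $M\le 2^{n-1}$ --- is exactly the decomposition the statement's parameters encode and the route the cited reference takes. Nevertheless, two of your steps have genuine gaps. The first is the assertion that ``the facets separating $x$ from $\CN(x)\cap\CBZERO$ are exactly these downward ones.'' That is false: the Voronoi cell of a corner can have downward facets toward lattice points that are \emph{not} corners of $\PPB$. Already for $A_2$, $\mathcal{V}(b_1+b_2)$ has a facet with relevant vector $b_2-b_1$ (and $(b_2-b_1)\cdot e_1<0$, so it is downward) toward the point $2b_2\notin\CBZERO$. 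VR only yields the weaker statement that downward facets \emph{meeting} $\PPB$ face points of $\CN(x)\cap\CBZERO$. This is not pedantic: if your $\vee$ ranges over all downward facets, some $g_{m,k}$ are not in $\mathcal{H}$, contradicting the claimed form; if it ranges only over the corner-facing facets (as it must), then outside the projection of $\mathcal{V}(x)$ your convex function is no longer the lower surface of any cell --- which is precisely what makes your ``hard part'' hard. The second gap is that this hard part is then asserted rather than proved: saying that VR/SVR ``forbids interfering non-corner lattice points'' restates the desired conclusion; it does not show that the extended function $\vee_k g_{x,k}$ never dips strictly below the true boundary at some $\tilde y\in\D(\B)$ outside the shadow of $\mathcal{V}(x)$.

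Both gaps can be closed by exploiting the height structure rather than the cell geometry. Since every corner of $\CBZERO$ lies in $\{y : y\cdot e_1=0\}$ and $\CBONE=\CBZERO+b_1$, the comparison function
\[
g(y)=\min_{x_0\in\CBZERO}\|y-x_0\|^2-\min_{x_1\in\CBONE}\|y-x_1\|^2
\]
equals $2(b_1\cdot e_1)\,y_1$ plus a piecewise linear function of $\tilde y$ alone (the quadratic terms cancel), hence is strictly increasing in $y_1$. This gives the single-crossing property rigorously --- note that your appeal to convexity of the cells is not sufficient by itself, since convex cells of a tiling can perfectly well alternate along a vertical line --- and, solving $g=0$ for $y_1$, it yields the exact identity $f(\tilde y)=\wedge_{x_1\in\CBONE}\{\vee_{x_0\in\CBZERO}\,h_{x_1,x_0}(\tilde y)\}$, where $h_{x_1,x_0}$ is the affine function whose graph is the bisector of $x_1$ and $x_0$. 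What then remains, and is the only place where VR/SVR enters beyond the tiling, is the pruning step: each inner $\vee$ must be restricted to the pairs whose bisectors carry actual Voronoi facets of the decision boundary, i.e.\ to elements of $\mathcal{H}$, with $l_m<\tau_f$ terms, and one must prove this restriction does not lower the envelope anywhere on $\D(\B)$. That pruning argument is the substantive content missing from your last paragraph.
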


From now on, {\bf the default orientation of the basis with respect to the
canonical axes of $\R^n$ is assumed to be the one of Theorem~\ref{th_func_VR}}.
We call $f$ the decision boundary function.
The domain of $f$ (its input space) is $\D(\B) \subset \mathbb{R}^{n-1}$.
The domain $\D(\B)$ is the topological closure of the projection of $\PPB$ on the hyperplane $\{e_i\}_{i=2}^n$.
It is a bounded polyhedron that can be partitioned into convex (and thus connected) regions which we call linear regions. 
For any $\tilde{y}$ in one of these regions,
$f$ is described by a unique local affine function $h_{j}$. 
The number of those regions is equal to the number of affine pieces of $f$. 

In the sequel, $L$ denotes the number of layers in the neural network evaluating
the boundary function $f$ defined on $\D(\B)$ and $w$ is the network width. 
Also, $\mathcal{N}(f)$ gives the number of pieces of a CPWL function $f$. 

\section{(In)approximability results for $A_n$}
\label{sec_approx}

Consider a basis for the lattice $A_n$ with all vectors from the first lattice shell.
Also, the angle between any two basis vectors is $\pi/3$.
Let $J_n$ denote the $n \times n$ all-one matrix and $I_n$ the identity matrix.
The Gram matrix is
\begin{equation}
\label{eq_basis_An}
\Gamma_{A_n}=GG^T= J_n+I_n.
\end{equation}

Assume that one is only given $k$ pieces to build a function $g$ approximating $f$,
with $k<\mathcal{N}(f)$. What is the minimum possible approximation error?

The  decision  boundary  function  $f$ for  $A_2$  is  illustrated  on
Figure~\ref{fig_func_A2} by  the thick  yellow line. This  function is
the  same triangle  wave function as the one used to  prove the  main separation
theorem between deep and shallow networks in \cite{Telgarsky2016}.  We
quickly  recall  the  main  ideas   of  his  proof  (a  more  detailed
explanation  is also  available  in Appendix~\ref{App_Telgarsky}).   A
triangle wave  function $f$  with $p$ periods  is considered.   It has
$2p+1$ affine  pieces.  Telgarsky  established a  lower bound  of the
average pointwise disagreement $|f(\tilde{y})  - g(\tilde{y})|$ over a
compact set between this function and a function $g$ having $k$ pieces
where $k<2p+1$.  This is achieved  by summing the triangle areas above
(resp.  below)  the dashed  black line  (see Figure~\ref{fig_func_A2} or Figure~\ref{fig_func_Tel})
whenever $g$ is below (resp. above) this same line.  Indeed, since $g$
has a limited number of pieces, it  can only cross this line a limited
number of times.

\begin{figure}[t]
\centering
\vspace{-5mm}
\includegraphics[scale=0.2]{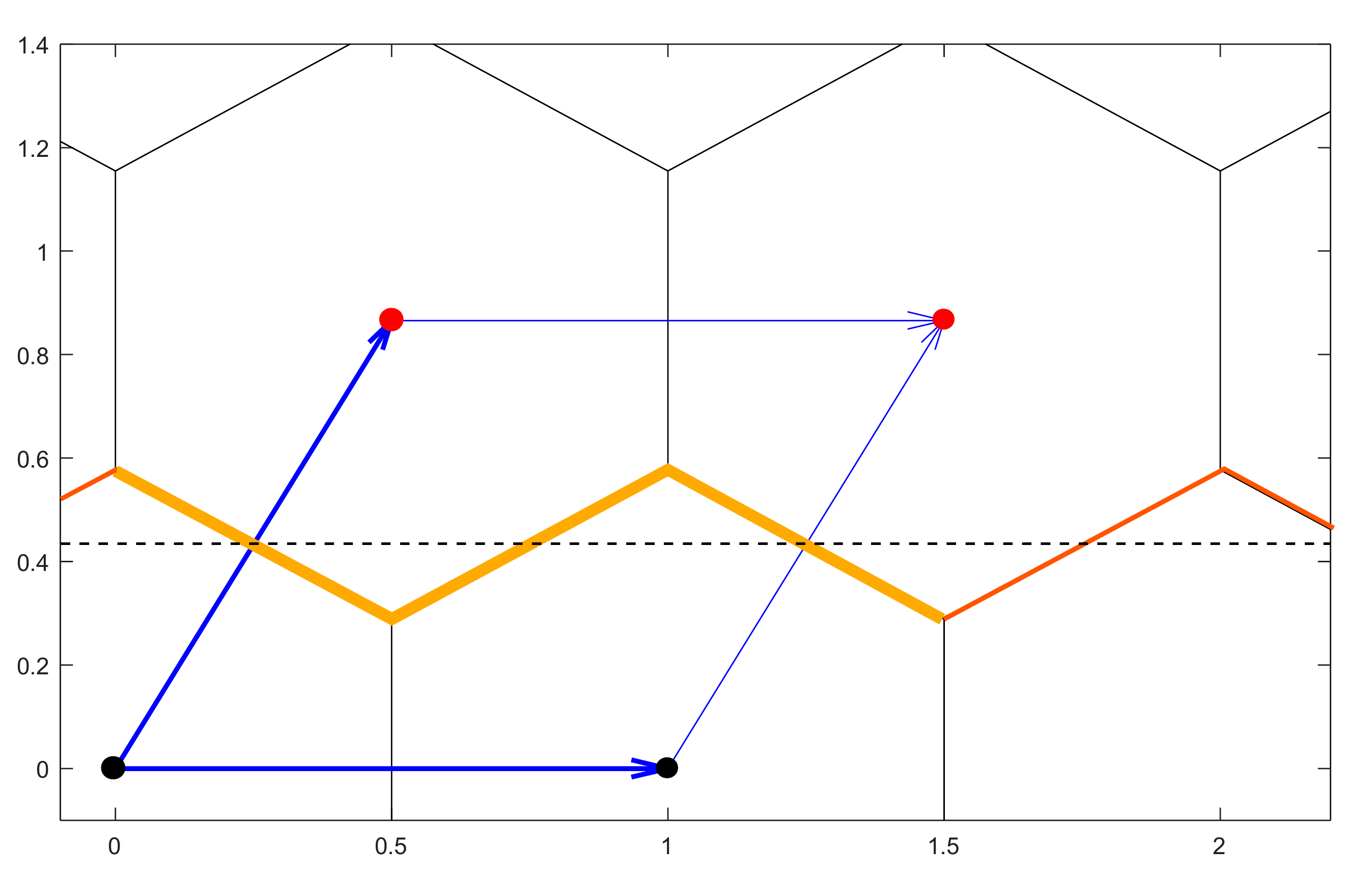}
\caption{$\PPB$ for $A_2$. The thick blue arrows represent the basis.
The CPWL boundary function $f$ defined on $\D(\B)$ is illustrated by the thick yellow line.
The corner points in $\CBONE$ are in red and the corner points in $\CBZERO$ are in black.
$f$ can also be extended to a larger compact set (the thin red lines).}
\label{fig_func_A2}
\end{figure}

Now, what happens  if we consider a similar function  in $\R^3$, where
we replace triangles by tetrahedra?  Such a function, limited to
$\D(B)$,  is  illustrated  on   Figure~\ref{fig_func_A3}.   It  is  the
decision boundary obtained for $A_3$ defined by $\eqref{eq_basis_An}$.
The dashed line of Figure~\ref{fig_func_A2}  should now be replaced by
the plane $\Phi^{n=3}=\{  y \in \R^3 :  \ y \cdot e_1 =  \frac{1}{2}\times(b_1\cdot e_1) \}$.
Similarly to  the triangle wave function, $f^{n=3}$ is
oscillating  around  $\Phi^{n=3}$:  all   pieces  of  $f^{n=3}$  cross
$\Phi^{n=3}$.   Note  that  the  number  of  pieces  is  significantly
increased compared to a simple extension of the triangle wave function
in $\R^3$ (see e.g. Figure~\ref{fig_func_Tel_2}).  Another figure with
$\Phi^{n=3}$      cutting      $f^{n=3}$     is      available      in
Appendix~\ref{App_addi_mat}.   The same  pattern is  observed for  any
space dimension $n$, where the triangles or tetrahedra become $n$-simplices.

\begin{figure}[t]
\centering
\begin{minipage}{.45\textwidth}
\centering
\includegraphics[width=0.9\linewidth]{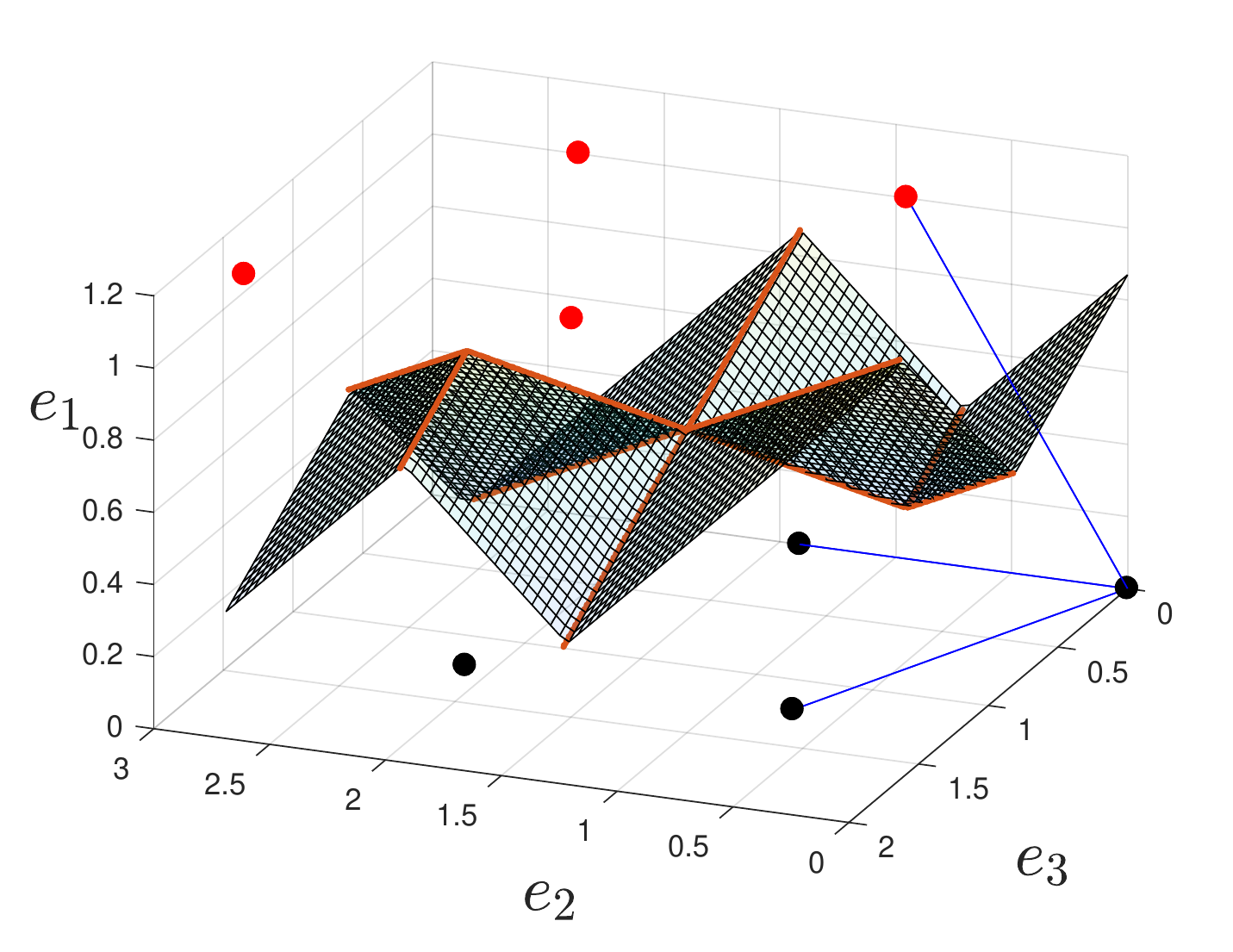}
\caption{CPWL decision boundary function for $A_3$. 
The basis vectors are represented by blue lines. 
The corner points in $\CBONE$ are in red and the corner points in $\CBZERO$ in black.}
\label{fig_func_A3}
\end{minipage}%
\hspace{8mm}
\begin{minipage}{.45\textwidth}
  \centering
   \vspace{-5mm}
   \includegraphics[width=0.8\linewidth,angle=-30]{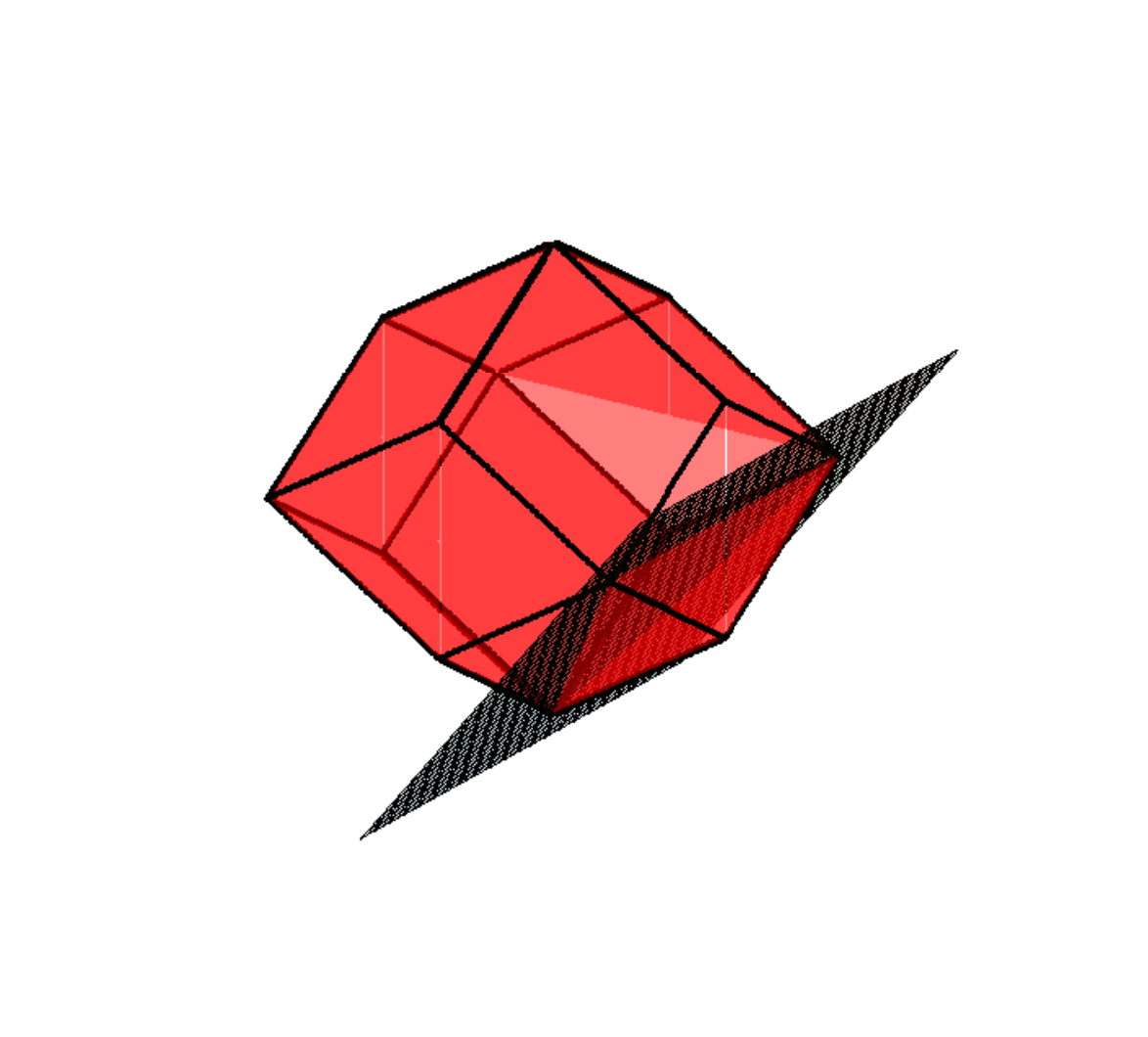}
   \vspace{-20mm}
  \caption{The Voronoi cell of $A_3$ is a rhombic dodecahedron. 
                A subset of the facets of this polytope generates some affine pieces of $f$.
	         The non-truncated tetrahedron is the part of the dodecahedron below the plane $\Phi^{n=3}$.}
  \label{fig_dode}
\end{minipage}
\end{figure}

Consider any convex part of $f$, say $f_m =\vee_{k=1}^{l_m}g_{m,k}$ (see~\eqref{eq_boundary_funct}). 
There are $2^{n-1}$ of such $f_m$. The polytope
\begin{align*}
\{ y \in \PPB : y_1 \ge f_m(\tilde{y}), \ y \cdot e_1 \le \frac{1}{2}\times(b_1 \cdot e_1) \}
\end{align*}
is a truncated simplex due to the limitation of $f$ to $\D(\B)$. 
For all $m$, $1\le m \le 2^{n-1}$, these polytopes are the truncated version of a $n$-simplex.
This simplex is illustrated for $n=3$ on Figure~\ref{fig_dode}.

This same function can be extended to $\R^n$ by periodicity, i.e. same
boundary in $\PPB+x$ as in $\PPB$,  for any lattice point $x$. Indeed,
$\PPB$ is  a fundamental region of  the lattice and one  can perform a
tessellation of  $\R^n$ with  $\PPB$.  This translates  into extending
the  boundary   function  of  (\ref{eq_boundary_funct})   as  follows:
$f(\tilde{y_0})=f(\tilde{y})$ where $y=y_0-x \in \PPB$.  We consider a
set  $\mathcal{P}(\{  b_1,  \alpha  b_2, \alpha  b_3,  \ldots,  \alpha
b_n\})$,  where $\alpha=2^M$  and $M\ge  1$  is an  integer.  The  new
scaled  region has  $2^{M(n-1)}$  copies  of $\mathcal{P}(\B)$.   This
extended function  is defined over the domain $\D(\{  b_1,  \alpha  b_2, \alpha  b_3,  \ldots,  \alpha
b_n\})$, which is the projection of the scaled region on the hyperplane $\{e_i\}_{i=2}^n$. 
If we  let $M$ grow with  $n$, the exponential increase  of the volume
yields a  total number  of pieces superexponential  in $n$.   The next
proposition, showing  that this extended function can  be efficiently  computed by a  deep and
narrow      network,       is      constructively       proved      in
Appendix~\ref{lem_super_exp}.

\vspace{-2mm}
\begin{proposition}
\label{lemma_superexp}
Consider a VR or SVR basis $\B$ defining any lattice and 
its extended decision boundary function defined on the compact set
$\D(\{ b_1, \alpha b_2, \alpha b_3, \ldots, \alpha b_n\})$,
where $\alpha=2^M$. 
Then, the boundary function has $\Omega(2^{M(n-1)})$ pieces
and it can be computed by a ReLU network of width $\max(3(n-1), w)$
and depth $3M+L$ (where $L$ and  $w$ are the parameters of the neural network evaluating
$f$ on $\D(\B)$). 
\end{proposition}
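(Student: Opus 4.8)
The plan is to prove the two assertions separately: the lower bound on the number of pieces, which is essentially a volume count, and the network realization, which is a folding construction in the spirit of the triangle‑wave composition of \cite{Telgarsky2016}.

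\textbf{Number of pieces.} Dilating the $n-1$ vectors $b_2,\ldots,b_n$ by $\alpha=2^M$ multiplies the volume of $\PPB$ by $\alpha^{n-1}=2^{M(n-1)}$. Since $\PPB$ tiles $\R^n$ by lattice translations, the scaled region $\mathcal{P}(\{b_1,\alpha b_2,\ldots,\alpha b_n\})$ is a union of $2^{M(n-1)}$ lattice‑translated copies of $\PPB$, and by definition the extended function $f(\tilde{y_0})=f(\tilde{y})$, $y=y_0-x\in\PPB$, restricts on each copy to a translate of $f$. Passing to the projection onto $\{e_i\}_{i=2}^n$, the domain $\D(\{b_1,\alpha b_2,\ldots,\alpha b_n\})$ therefore contains $2^{M(n-1)}$ copies of $\D(\B)$, each carrying a translate of $f$ with $\mathcal{N}(f)\ge 1$ affine pieces. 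Distinct copies occupy disjoint parts of the domain, so the pieces do not merge, giving at least $2^{M(n-1)}\,\mathcal{N}(f)=\Omega(2^{M(n-1)})$ pieces.

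\textbf{Network construction.} I would realize the extended function as a composition $F=f\circ\phi$, where $\phi$ is a CPWL \emph{folding} map sending the scaled domain onto $\D(\B)$ by reflections in the $n-1$ scaled directions. Along a single direction, folding the extent by a factor of two is carried out by the tent map
\begin{equation*}
\phi_L(t)=L-|t-L|,\qquad |t-L|=\max(0,t-L)+\max(0,L-t),
\end{equation*}
which maps $[0,2L]$ onto $[0,L]$ and is computed by two ReLU units followed by an affine combination. Because $\alpha=2^M$, applying $\phi_{2^{M-1}},\phi_{2^{M-2}},\ldots,\phi_1$ in succession folds the range $[0,2^M]$ down to $[0,1]$ in $M$ stages. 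The folds in the $n-1$ scaled directions act on transverse degrees of freedom and are therefore performed in parallel inside the same layers; each stage costs a constant number of layers and three neurons per scaled coordinate, which the accounting in the appendix sharpens to $3$ layers and $3(n-1)$ neurons. Prepending this depth‑$3M$, width‑$3(n-1)$ folding block (the linear change between canonical and $b_i$‑coordinates being absorbed into the affine parts of the layers) to the depth‑$L$, width‑$w$ network evaluating $f$ on $\D(\B)$ yields a ReLU network of depth $3M+L$ and width $\max(3(n-1),w)$.

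\textbf{Main obstacle.} The delicate point is the correctness of $F=f\circ\phi$, not the neuron count. A reflection fold produces a function that is \emph{even} about each folding hyperplane, whereas the extension of the previous paragraph is defined by \emph{translation}. Hence one must verify that $f$ agrees with its own reflection across the facets of $\PPB$ transverse to $b_2,\ldots,b_n$, so that every reflected copy reproduces the translated copy demanded by the periodic extension; this is exactly the higher‑dimensional analogue of the symmetry of the tent used in \cite{Telgarsky2016}. I would establish it from the description of the boundary in Theorem~\ref{th_func_VR}: since $f$ is assembled from Voronoi facets of the corner points and the Voronoi structure of the lattice is reflection‑symmetric, the boundary values of $f$ on opposite transverse facets of $\D(\B)$ coincide under reflection. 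This simultaneously guarantees continuity of $F$ and the identity $F=f\circ\phi$, and it is the step where the genuine work lies.
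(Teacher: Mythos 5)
Your piece-counting argument is essentially the paper's own (it, too, counts the $2^{M(n-1)}$ translates of $\PPB$ inside the scaled parallelotope; note only that adjacent translates of $\PPB$ can share boundary pieces because their projections onto $\{y \cdot e_1 = 0\}$ overlap, so the copies are not disjoint — harmless for the $\Omega$ bound). The genuine gap is in the network construction, and it sits exactly where you flag the ``main obstacle.'' The extension in the proposition is defined by \emph{translation}: $f(\tilde{y_0})=f(\tilde{y})$ with $y=y_0-x\in\PPB$, $x\in\Lambda$. Your tent-map folding computes the \emph{reflection-even} extension $f\circ\phi$, which is mirror-symmetric about every folding hyperplane; the two extensions coincide only if $f$ itself has the corresponding mirror symmetries. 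That symmetry is false in the generality claimed: the proposition is stated for \emph{any} lattice with a VR or SVR basis, and a generic lattice's Voronoi cell is centrally symmetric ($y\mapsto -y$) but has no hyperplane-reflection symmetry whatsoever, so its translation-periodic boundary function has no reflection symmetry either, and no choice of fold axes can reproduce it. The appeal to ``reflection symmetry of the Voronoi structure'' therefore cannot be repaired. Even in the most favorable case, $A_2$ (where the extended boundary happens to be a symmetric triangle wave), your construction still fails: since only $b_2,\ldots,b_n$ are scaled while $b_1$ contributes a transverse offset, the scaled domain is not $2^M$ end-to-end copies of $\D(\B)$, and the dyadic midplanes at which you fold sit at a quarter-period offset from the symmetry axes of the wave, so $f\circ\phi$ and the extended $f$ are genuinely different functions.

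The paper's proof (Appendix~\ref{lem_super_exp}) avoids symmetry altogether by using \emph{translation blocks} instead of reflections: each of the $M$ stages conjugates a coordinate-wise ``2-sawtooth'' activation $u\mapsto u \bmod 1$ (together with a subtraction implementing the floor) by the appropriately scaled generator matrix and its inverse. Each block moves the running point by an exact lattice vector, so after $M$ blocks ($3M$ layers, width $3(n-1)$) the input $y_0$ has been reduced to its representative $y\in\PPB$ with $y_0-y\in\Lambda$, which is precisely the translation-periodic reduction the definition demands; the depth-$L$, width-$w$ network for $f$ on $\D(\B)$ then finishes the computation. No property of $f$ beyond Theorem~\ref{th_func_VR} is used, which is what makes the statement valid for every lattice. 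The trade-off is that the sawtooth is discontinuous and is admitted as a nonstandard activation, whereas your tent maps are standard ReLU but compute the wrong extension; with standard ReLU only, reflection folding is legitimate precisely when the required mirror symmetry is proved for the specific basis at hand, which is what the paper does separately in Section~\ref{sec_big_section} via bisector hyperplanes $BH(b_j,b_k)$ for particular root-lattice bases — there the folding computes $f$ \emph{inside} $\D(\B)$, never the periodic extension.
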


\vspace{-1mm}
The boundary function $f$ limited to $\D(\B)$ and its extension
to $\D(\{ b_1, \alpha b_2, \alpha b_3, \ldots, \alpha b_n\})$
are used to prove approximability and inapproximability results for shallow networks.

\begin{theorem}
\label{th_zero_error}
Consider an $A_n$-lattice basis defined by the Gram matrix~\eqref{eq_basis_An}.
Let $f$ be the decision boundary function.

\begin{enumerate} 
\item Suppose that \eqref{eq_basis_An} is scaled by $(n+1)^{-1/n}$ to get $\text{Vol}(\PPB)=1$.
If $f$ is defined on the compact set $\D(\B)$, there exists an affine function $h$ represented as a linear network with one neuron such that:
\begin{equation}
\underset{n \rightarrow \infty }{\lim} ||f -h||_1 = \underset{n \rightarrow \infty }{\lim} \int_{\D(\B)} |f(\tilde{y})-h(\tilde{y})|d\tilde{y} = 0.
\end{equation}
\item Let $f$ be defined on the compact set $\D(\{ b_1, \alpha b_2, \alpha b_3, \ldots, \alpha b_n\})$,
where $\alpha=2^M$ and $||b_i||~=~\sqrt{2}$, $1 \le i \le n$. For $M$ large enough, any function $g$ that can be computed by a $L$-deep, 
$w$-wide ReLU neural network where  $L \log_2(w)\leq M-n$ has an error 
\begin{equation}
||f -g||_1 = \Omega \left(2^{(n-1)M-n\log_2(n)}\right),
\end{equation}
whereas if $L=3M+\mathcal{O}(n^2)$ and $w=3(n-1)$, $f$ can be computed by the network.
\end{enumerate}
\end{theorem}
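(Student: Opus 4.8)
Both statements rest on one geometric estimate. The $A_n$ boundary $f$ oscillates about the mid-level hyperplane $\Phi^n=\{y\in\R^n:\ y\cdot e_1=h_0\}$, where $h_0=\tfrac12(b_1\cdot e_1)$, and the solid trapped between the graph of $f$ and $\Phi^n$ inside one copy of $\PPB$ decomposes, as described before the statement, into the $2^{n-1}$ truncated $n$-simplices $\{y\in\PPB:\ y_1\ge f_m(\tilde y),\ y\cdot e_1\le h_0\}$ together with their mirror images through $\Phi^n$. Each such simplex has volume of the order of a fundamental simplex of $\PPB$, namely $\vol(\PPB)/n!$; since $\det(J_n+I_n)=n+1$ gives $\vol(\PPB)=\sqrt{n+1}$ (up to the scaling used), this is $\sqrt{n+1}/n!=2^{-n\log_2 n+O(n)}\sim n^{-n}$. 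Hence the trapped volume per cell is $\Delta_n=2^{n-1}\cdot 2^{-n\log_2 n+O(n)}=2^{-n\log_2 n+O(n)}$, super-exponentially small. Both parts follow by pairing this estimate with the relevant normalisation.

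\textbf{Part 1.} I would take $h$ to be the constant affine map $h(\tilde y)\equiv h_0$, realised by one linear neuron, so that
\[
\|f-h\|_1=\int_{\D(\B)}\bigl|f(\tilde y)-h_0\bigr|\,d\tilde y=\Delta_n,
\]
the per-cell trapped volume, now computed with the normalisation $\vol(\PPB)=1$ (the factor $(n+1)^{-1/n}$ applied to \eqref{eq_basis_An}). Renormalising only divides $\Delta_n$ by $\sqrt{n+1}$, so $\|f-h\|_1=2^{-n\log_2 n+O(n)}\to0$, which is the claim.

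\textbf{Part 2.} For the separation I would run a line-restricted version of Telgarsky's argument. Fix the sweep direction $e_2$; for each transverse position $\tilde y'=(y_3,\dots,y_n)$ in the $(n-2)$-dimensional transverse domain $T$ (of measure $\sim 2^{M(n-2)}$), let $\ell_{\tilde y'}$ be the segment on which $y_2$ runs across the $\sim 2^M$ copies of $\PPB$ stacked along $e_2$. Since every affine piece of $f$ crosses $\Phi^n$, the restriction $f|_{\ell_{\tilde y'}}$ is a triangle-type wave with $\Omega(2^M)$ teeth, whereas $g|_{\ell_{\tilde y'}}$ is univariate and computed by a depth-$L$, width-$w$ ReLU network, so it has at most $2^{L\log_2 w}\le 2^{M-n}$ affine pieces and crosses $\Phi^n$ at most $2^{M-n}\ll 2^M$ times; this is where the hypothesis $L\log_2 w\le M-n$ enters. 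Telgarsky's one-dimensional disagreement lemma then lower-bounds $\int_{\ell_{\tilde y'}}|f-g|$ by the combined area of the $\Omega(2^M)$ teeth that $g$ misses. Integrating over $T$ reassembles these teeth into full $n$-dimensional caps of volume $\sim n^{-n}$ (one per transverse cell; here $\|b_i\|=\sqrt2$, so no renormalisation), giving
\[
\|f-g\|_1=\Omega\!\left(2^{M}\cdot 2^{M(n-2)}\cdot n^{-n}\right)=\Omega\!\left(2^{(n-1)M-n\log_2 n}\right).
\]
The matching positive statement—that $f$ is computed by a network with $L=3M+\mathcal O(n^2)$ layers and $w=3(n-1)$—is Proposition~\ref{lemma_superexp} applied to $A_n$, whose boundary is evaluated on $\D(\B)$ in depth $\mathcal O(n^2)$ and width $\mathcal O(n)$ by Theorem~\ref{th_func_VR} and the construction of \cite{Corlay2019}.

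\textbf{Main obstacle.} The piece count and the algebra are routine. The delicate point is the per-cap volume estimate and its use in the one-dimensional lower bound: I must verify that the truncated simplices genuinely have volume $\sim n^{-n}$—here the truncation by $\PPB$ is essential, since the untruncated half-Voronoi-cell of $A_n$ has volume of order $\sqrt n$—and that along a generic $e_2$-line the teeth of $f$ have area bounded below by this per-cap volume, so that the unmatched teeth truly contribute and integration over $T$ recovers whole caps without double counting. In short, the crux is proving that the oscillations of $f$ about $\Phi^n$ are simultaneously numerous ($\sim 2^{M(n-1)}$) and individually tiny ($\sim n^{-n}$), which is exactly where the density of $A_n$ and the scaling $\alpha=2^M$ interact.
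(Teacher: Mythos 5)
Your Part 1 is correct and is essentially the paper's own argument: take $h\equiv\frac12(b_1\cdot e_1)$, observe that $\|f-h\|_1$ is exactly the volume trapped between the graph of $f$ and $\Phi$, and bound it by $2^n$ truncated simplices, each of volume at most $\vol(\PPB)/n!$, which vanishes by Stirling. The final computability claim of Part 2 is also handled as in the paper, via Proposition~\ref{lemma_superexp} plus folding. Your separation argument, however, takes a genuinely different route: you run Telgarsky's one-dimensional argument on each line parallel to $e_2$ and integrate transversally (Fubini), whereas the paper compares the \emph{global} number of cells, $2^{M(n-1)}$, against the \emph{global} linear-region bound $\mathcal{O}(2^{(n-1)L\log_2 w})$ of Raghu et al.\ for networks with $(n-1)$-dimensional input. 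Unfortunately, your route has two genuine gaps.

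First, the per-line piece count is wrong. The restriction of a depth-$L$, width-$w$ ReLU network to a line is a univariate ReLU network of the same depth and width, and the known (and essentially tight) bounds on its number of pieces are of the form $(2w)^L$ or $(w+1)^L$, i.e.\ $2^{L(1+\log_2 w)}$ up to constants, not $2^{L\log_2 w}$. The hypothesis $L\log_2 w\le M-n$ controls $L\log_2 w$ but not $L$ itself: with $w=2$ and $L=M-n$ the restriction $g|_{\ell}$ may have on the order of $3^{M-n}$ pieces, which exceeds the $\sim 2^M$ teeth of $f|_{\ell}$ as soon as $M\gtrsim 2.7\,n$. So under the stated hypothesis the per-line crossing deficit, which is the engine of your argument, need not exist at all. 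The paper's counting is immune to this because it compares $2^{(n-1)L\log_2 w}\le 2^{(n-1)(M-n)}$ regions against $2^{M(n-1)}$ cells, leaving a gap of $2^{n(n-1)}$ \emph{uniformly in} $M$; by restricting to lines you discard exactly the dimension dependence that makes this comparison work.

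Second, the integration step. Your per-line bound is a sum of areas of missed teeth, but the teeth on a line have wildly varying areas, from $\Theta(1)$ (a line through a cap's center) down to $0$ (a line grazing a cap's edge), so a network allowed even a few crossings per line could, for all the argument says, spend them on the fattest teeth, and the killed area can then exceed the total trapped area of a typical line. You flag this as the main obstacle, but the verification you propose, that ``along a generic $e_2$-line the teeth have area bounded below by the per-cap volume,'' is false as stated: a one-dimensional tooth area cannot dominate an $n$-dimensional volume, and teeth near cap edges are arbitrarily small. What actually rescues the Fubini step is the lattice periodicity of the extended $f$: choosing the sweep direction along $b_2$ (which can be arranged to be collinear with $e_2$ under the orientation of Theorem~\ref{th_func_VR}), all teeth on a fixed line are congruent, so missing all but a small fraction of them forces the loss of a constant fraction of that line's trapped area, and integration over the transverse domain then yields $\Omega(2^{M(n-1)}\cdot n^{-n})$. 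You neither state nor use this periodicity, and without it the integration does not go through. (In fairness, the paper's own proof is also a sketch at the analogous point, since the claim that ``each additional piece of $g$ cancels at most $\mathcal{O}(1)$ simplices'' is asserted rather than proved; but its global counting at least survives the small-width regime that breaks your per-line count.)
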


\vspace{-1mm}
\textbf{Sketch of proof}  1. Assume that there are $K$ distinct truncated simplices of the form 
$\{ y \in \PPB : y_1 \ge f_m(\tilde{y}), \ y \cdot e_1 \le \frac{1}{2}\times(b_1 \cdot e_1) \}$  or $\{ y \in \PPB : y_1 \le f_m(\tilde{y}), \ y \cdot e_1 \ge \frac{1}{2}\times(b_1 \cdot e_1) \}$. 
The $L_1$ difference between $f$ and the function $h_{\Phi}$, 
defined by the hyperplane $\Phi = \{ y \in \R^n : \ y \cdot e_1  = \frac{1}{2}\times(b_1 \cdot e_1) \}$, 
is bounded from above by the volume of $K$ non-truncated simplices.
Under $\text{Vol}(\PPB)=1$, the volume of a non-truncated simplex is bounded from above by $1/n!$.
There are $K=2^n$ distinct truncated simplices in $\PPB$. 
Hence, the upper-bound is asymptotic to $\frac{1}{\sqrt{2 \pi n} 2^{n \log_2(n/e)-n}}$.

2. We begin with the first part of the second result. The volume of a non-truncated simplex is $\Omega(1/n^n)$.
If $\mathcal{P}(\{ b_1, \alpha b_2, \alpha b_3, \ldots, \alpha b_n\})$ is large enough, 
there are at least as many non-truncated simplex as instance of $\PPB$ in the large compact set: i.e. $2^{M (n-1)}$.
The result is then achieved  by using the fact that no $L$-deep $w$-wide ReLU network with input in $\R^{n-1}$ 
can compute more than $\mathcal{O}(2^{(n-1)L\log_2(w)})$ pieces, combined with the ``crossing" 
argument of \cite{Telgarsky2016}. \\
The second part of the result is a direct consequence of Proposition~\ref{lemma_superexp}, 
where the part of $f$ on $\mathcal{\D}(\B)$ is evaluated via folding with $\mathcal{O}(n^2)$
reflections. 
The formal proof is available in Appendix~\ref{App_proof_theo_zero_error}. $\blacksquare$

\
\begin{corollary}
\label{coro_An}
Consider an $A_n$-lattice basis defined by the Gram matrix~\eqref{eq_basis_An}.
Let $y \in \PPB$ be drawn from a uniform distribution over $\PPB$.
The average error, when decoding the first coordinate $z_1$ of $y$ via the sign of the projection of $y$ on the normal vector to $\Phi$, is $\epsilon <\frac{1}{\sqrt{2 \pi n} 2^{n \log_2(n/e)-n}}$.
\end{corollary}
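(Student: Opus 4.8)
The plan is to derive Corollary~\ref{coro_An} directly from the first part of Theorem~\ref{th_zero_error} by identifying the decoding-error event with the region where $f$ and the separating hyperplane $\Phi$ disagree. First I would set up the decoding rule precisely: given $y \in \PPB$, the decoder estimates $z_1$ by looking at the sign of $y \cdot e_1 - \frac{1}{2}(b_1 \cdot e_1)$, i.e. it guesses $z_1 = 1$ if $y$ lies above the hyperplane $\Phi = \{ y \in \R^n : y \cdot e_1 = \frac{1}{2}(b_1 \cdot e_1) \}$ and $z_1 = 0$ otherwise. The \emph{correct} decision boundary, however, is the CPWL function $f$, so a point is misclassified exactly when it lies between $\Phi$ and the graph of $f$ — that is, when $y_1$ and $f(\tilde{y})$ lie on opposite sides of the threshold $\frac{1}{2}(b_1 \cdot e_1)$.

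\textbf{Relating the error to the $L_1$ norm.} Next I would observe that, with $\vol(\PPB) = 1$, the uniform distribution over $\PPB$ has density $1$, so the average decoding error $\epsilon$ equals the volume of this disagreement region. The key geometric step is that integrating out the $y_1$-coordinate turns this volume into an integral over $\D(\B)$: for each $\tilde{y}$, the misclassified segment in the $y_1$-direction has length exactly $|f(\tilde{y}) - \frac{1}{2}(b_1 \cdot e_1)|$, which is precisely $|f(\tilde{y}) - h(\tilde{y})|$ where $h$ is the constant affine function $h(\tilde{y}) = \frac{1}{2}(b_1 \cdot e_1)$ corresponding to the hyperplane $\Phi$. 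Hence
\[
\epsilon = \int_{\D(\B)} |f(\tilde{y}) - h(\tilde{y})| \, d\tilde{y} = \|f - h\|_1.
\]
This $h$ is exactly the one-neuron linear network appearing in part~1 of the theorem.

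\textbf{Invoking the theorem's bound.} Finally I would import the quantitative bound established in the sketch of proof of Theorem~\ref{th_zero_error}, part~1: the $L_1$ error is bounded above by the total volume of the $K = 2^n$ non-truncated simplices, each of volume at most $1/n!$, giving the asymptotic estimate $\frac{1}{\sqrt{2\pi n}\, 2^{n\log_2(n/e) - n}}$ via Stirling's approximation of $n!$. Since $\epsilon = \|f - h\|_1$ and $h$ is the hyperplane $\Phi$ used by the decoder, the same upper bound applies verbatim to $\epsilon$, yielding the stated strict inequality.

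\textbf{The main obstacle} I anticipate is the bookkeeping in the second step — verifying that the misclassification region is \emph{exactly} the union of (truncated) simplices sandwiched between $f$ and $\Phi$, so that no measure is double-counted or omitted. This requires using that $\B$ is VR (so every point of $\PPB$ decodes to a corner and the boundary genuinely separates $\CBZERO$ from $\CBONE$), and that $f$ oscillates around $\Phi$ so that the disagreement is confined to these simplices rather than spilling across the domain. Once the identification of $\epsilon$ with $\|f - h\|_1$ is clean, the numerical bound is an immediate transcription of the theorem's first part, so I expect no further difficulty there.
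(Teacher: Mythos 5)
Your proposal is correct and takes essentially the same route as the paper, which treats the corollary as an immediate probabilistic restatement of part~1 of Theorem~\ref{th_zero_error}: with $\vol(\PPB)=1$, the disagreement region between the $\Phi$-based decoder and the true boundary $f$ has volume controlled by $\|f-h_{\Phi}\|_1 < 2^n/n!$, which Stirling turns into the stated bound. One small correction to your second step: the misclassified fiber over $\tilde{y}$ is the segment between $f(\tilde{y})$ and $\frac{1}{2}(b_1\cdot e_1)$ \emph{intersected with} $\PPB$ (near the edges of $\D(\B)$ the point of $\Phi$ above $\tilde{y}$ may fall outside $\PPB$), so in general $\epsilon \le \|f-h_{\Phi}\|_1$ rather than equality --- but that is the direction the corollary needs, so the bound goes through verbatim.
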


Note that, despite the first result of Theorem~\ref{th_zero_error} and Corollary~\ref{coro_An}, 
for medium dimensions, there may be an interest to add pieces to the function $g$ approximating $f$ on $\D(\B)$.
Indeed, the decrease in the approximation error might be too slow for some applications 
(e.g. in communications error rates of at least $10^{-5}$ are expected) and could be speed up via additional pieces. 
The second result of the theorem is interesting only for networks of small or medium depth as the size of 
the compact set should increase exponentially for the inapproximability to hold.

On the other hand, if the size of the compact set is fixed, 
any shallow network can approximate the boundary function $f$ for $A_n$ due to the decrease in 
$1/n!$ of the volume of a $n$-simplex. 
Hence, we need more sophisticated functions 
to illustrate the benefit of neural networks and depth in this situation. 
We present such functions in the next section. 

\vspace{-2mm}
\section{Folding-based neural decoding of $D_n$ and $E_n$}
\label{sec_big_section}

We introduce three new CPWL functions. 
For each function, we proceed as follows.
\begin{enumerate}
\vspace{-1mm}
\item We count the number of pieces of the function defined on $\D(\B)$. It is shown to be exponential in the space dimension.
\vspace{-1mm}
\item We prove that  the function can be efficiently computed via $folding$: 
i.e. we perform a quadratic number of reflections on $\tilde{y} \in \D(\B)$ as pre-processing. 
After folding, the function can be evaluated by computing a linear number of affine pieces. 
\vspace{-1mm}
\item We then rely on the strategy detailed in Appendix~\ref{sec_folding_relu} (presented in \cite{Corlay2019}) to show 
how this translates into a ReLU neural network of depth increasing linearly with the number of reflections and a width that is linear in the dimension.
\end{enumerate}
The study of approximation of these functions by shallower networks is not provided in this section and left for future work. 
However, we conjecture that their more complex structure makes them harder to be approximated than the function of the previous section.
Hence, they could potentially be used to show gap theorems without using the oscillatory/periodic construction.
\vspace{-3mm}
\subsection{$D_n$ with the basis of Construction A}
\label{sec_const_A}

$D_n$ can be generated from the parity check code via Construction A (\cite{Conway1999}). 
This leads to a basis where the angle between any two vectors is $\pi/3$.  
Also, all vectors have the same length, except one which has twice the length of the others. 
This basis is not VR but SVR. 
The Gram matrix is:
\begin{equation}
\label{eq_Dn_const_A}
\Gamma_{D_{n}}^{(1)}=
\left(
\begin{array}{cccccccc}
4 & 2 &2 &. . . &2 \\
2 &2& 1& . . . & 1 \\
2 &1& 2&  . . . & 1 \\
. & . & .& . .  . & . \\
2 & 1 & 1  & . . . & 2
\end{array}
\right).
\end{equation}

\begin{theorem}
\label{theo_nbReg_Lin_Dn_const_A}
Consider a $D_n$-lattice basis defined by the Gram matrix~\eqref{eq_Dn_const_A}.
The decision boundary function $f$, defined on $\D(\B)$, has a number of affine pieces equal to
\small
\begin{equation}
\label{eq_Dn_constA}
 \sum_{i=0}^{n-2} \underset{(l)}{ \underbrace{ \left((n-1 -i)+\binom{n-1-i}{2}\right)}} \times \underset{(o)}{\underbrace{\binom{n-1}{i}}}.
\end{equation}
\normalsize
\vspace{-2mm}
\end{theorem}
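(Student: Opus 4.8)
The plan is to show that the affine pieces of $f$ are in bijection with the Voronoi facets separating the corners of $\CBONE$ from those of $\CBZERO$, and then to enumerate those facets combinatorially.

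First I would fix an explicit realization of the basis consistent with~\eqref{eq_Dn_const_A}, for instance $b_1 = 2e_1$ and $b_i = e_1 + e_i$ for $2 \le i \le n$, so that a corner of $\PPB$ is $x(z) = \big(2z_1 + \sum_{k \ge 2} z_k,\, z_2, \ldots, z_n\big)$ for $z \in \{0,1\}^n$, living in the standard integral coordinates of $D_n$. With the convention that $z_1$ is decoded, $\CBONE$ and $\CBZERO$ are the corners with $z_1 = 1$ and $z_1 = 0$ respectively. By Theorem~\ref{th_func_VR} and the discussion preceding it, every affine piece of $f$ lies in a boundary hyperplane $h_j$ carrying the Voronoi facet between a corner $x \in \CBONE$ and a corner $x' \in \CN(x) \cap \CBZERO$; equivalently, $x - x'$ is a relevant Voronoi vector of $D_n$.

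Next I would enumerate these pairs. Because $D_n$ is a root lattice, its Voronoi number equals its kissing number, so the relevant Voronoi vectors are exactly the $2n(n-1)$ minimal vectors $\pm e_a \pm e_b$, $a \ne b$. Writing $w_k = z_k - z'_k \in \{-1,0,1\}$ for $k \ge 2$, the first coordinate of $x - x'$ equals $2 + \sum_{k\ge2} w_k$, while the $k$-th coordinate is $w_k$. Requiring $x-x'$ to have exactly two nonzero entries, each $\pm 1$, leaves only two possibilities: (A) $x - x' = e_1 - e_k$, forcing $z_k = 0$, $z'_k = 1$ for a single free index $k$; and (B) $x - x' = -(e_a + e_b)$, forcing $z_a = z_b = 0$, $z'_a = z'_b = 1$ for two free indices $a, b$. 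Hence a corner $x \in \CBONE$ with exactly $i$ ones among its $n-1$ free coordinates (thus $n-1-i$ zeros) is Voronoi-adjacent to $(n-1-i)$ corners of $\CBZERO$ of type (A) and to $\binom{n-1-i}{2}$ of type (B). Since $\binom{n-1}{i}$ corners of $\CBONE$ carry exactly $i$ ones, summing over $i$ (the term $i=n-1$ contributing $0$) produces exactly~\eqref{eq_Dn_constA}.

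The main obstacle is to justify the bijection, i.e. that the number of affine pieces of $f$ really equals the number of facet-pairs just counted. For this I must establish that each such Voronoi facet is realized as exactly one full-dimensional linear region of $f$ over $\D(\B)$: since $v_j^1 \neq 0$ in~\eqref{equ_hj}, the facet projects onto an $(n-1)$-dimensional cell of $\D(\B)$, so it cannot be annihilated by the truncation to $\D(\B)$; I then need that no facet is occluded inside the $\wedge/\vee$ combination of~\eqref{eq_boundary_funct}, that distinct facet-pairs give distinct cells, and that the projected facets tile $\D(\B)$. The last point follows from the decision boundary being a graph $y_1 = f(\tilde{y})$ over the whole of $\D(\B)$, which forces the projected facets to partition the domain with one cell per facet; the delicate part is to verify, using the SVR property of the Construction-A basis and the explicit geometry of the $D_n$ Voronoi cell, that every pair found in step (A)--(B) does contribute a facet lying on the boundary within $\PPB$ (none being hidden behind a nearer corner). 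Once this is secured, the combinatorial count above yields the theorem.
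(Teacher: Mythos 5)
Your combinatorial enumeration is correct and lands exactly on the paper's count. In fact, your classification of the pairs via the explicit realization $b_1=2e_1$, $b_i=e_1+e_i$ and the minimal vectors $\pm e_a\pm e_b$ of $D_n$ amounts to a clean derivation of the neighbor property that the paper only asserts (Equation~\eqref{eq_prop_constA}): your type (A) pairs are the neighbors $x+b_j$ of $x+b_1$ (since $b_1-b_j=e_1-e_j$), your type (B) pairs are the neighbors $x+b_i+b_j$ (since $b_1-b_i-b_j=-(e_i+e_j)$), and your analysis also correctly recovers the feature distinguishing this basis from the $A_n$ case, namely that $x$ itself is \emph{not} a neighbor of $x+b_1$ because $b_1=2e_1$ is not a relevant vector. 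Grouping the corners of $\CBONE$ by the number $i$ of ones then reproduces \eqref{eq_Dn_constA} term by term, in the same way the paper does.

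The genuine gap is the step you yourself flag as ``the main obstacle'' and never close: you do not prove that the affine pieces of $f$ on $\D(\B)$ are in bijection with the facet pairs you enumerate; you only list what would need to be verified (no occlusion inside the $\wedge/\vee$ combination of \eqref{eq_boundary_funct}, distinctness of cells, tiling of the domain) and conclude ``once this is secured.'' This correspondence is the geometric content of the theorem --- without it you have counted Voronoi-adjacent corner pairs, not pieces of $f$. The paper supplies precisely this missing mechanism with its simplex argument (as in Appendix~\ref{App_theo4} for $A_n$): each $x\in\CBONE$ together with $\CN(x)\cap\CBZERO$ spans a simplex, and the decision boundary of that simplex separating $x$ from the other corners is a \emph{convex} CPWL function with exactly $|\CN(x)\cap\CBZERO|$ pieces, the convexity being guaranteed because, with the orientation of Theorem~\ref{th_func_VR}, no neighbor has a larger $e_1$-coordinate than $x$; these convex functions are exactly the parts $f_m=\vee_k g_{m,k}$ of \eqref{eq_boundary_funct}, one per corner of $\CBONE$, so that $f=\wedge_m f_m$ and the piece counts add over the simplices. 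Supplying this (or an equivalent non-occlusion argument) is what your proposal still needs in order to be a proof.
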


\vspace{-4mm}
\textbf{Sketch of proof}  We briefly explain what are the $(l)$ and $(o)$ terms in \eqref{eq_Dn_constA}.
On Figure~\ref{fig_neighbor_D3_const_A},  $i$-simplices are illustrated. Any $i$-simplex is defined by a point $x \in \CBONE$ and 
the $i$ points taken from $\CN(x) \cap \CBZERO$.
Any piece of $f$, depicted in Figure~\ref{fig_func_D3_const_A}, is also a piece of the decision boundary of one of the $i$-simplices: i.e. this latter boundary is a function separating the only corner of the simplex $x \in \CBONE$ from the other corners in $\CN(x) \cap \CBZERO$. 
Hence, each distinct $i$-simplex generates $i$ pieces in $f$.  
The number of pieces of $f$ is then obtained by finding the number of $i$-simplices: e.g. on Figure~\ref{fig_neighbor_D3_const_A}, 
there are two 1-simplices and one 3-simplex, thus $f$ has 5 pieces. 
In \eqref{eq_Dn_constA}, $(l)$ represents the dimension of a given simplex and $(o)$ the number of such simplices in $\PPB$.
$(l)$ and $(o)$ are found by exploiting the structure of $\PPB$ as done in Appendix~\ref{App_Dn_first_kind}. $\blacksquare$ \\

\begin{figure}[t]
\centering
\vspace{-8mm}
\begin{minipage}{.45\textwidth}
  \centering
  \vspace{-8mm}
  \includegraphics[width=.34\linewidth]{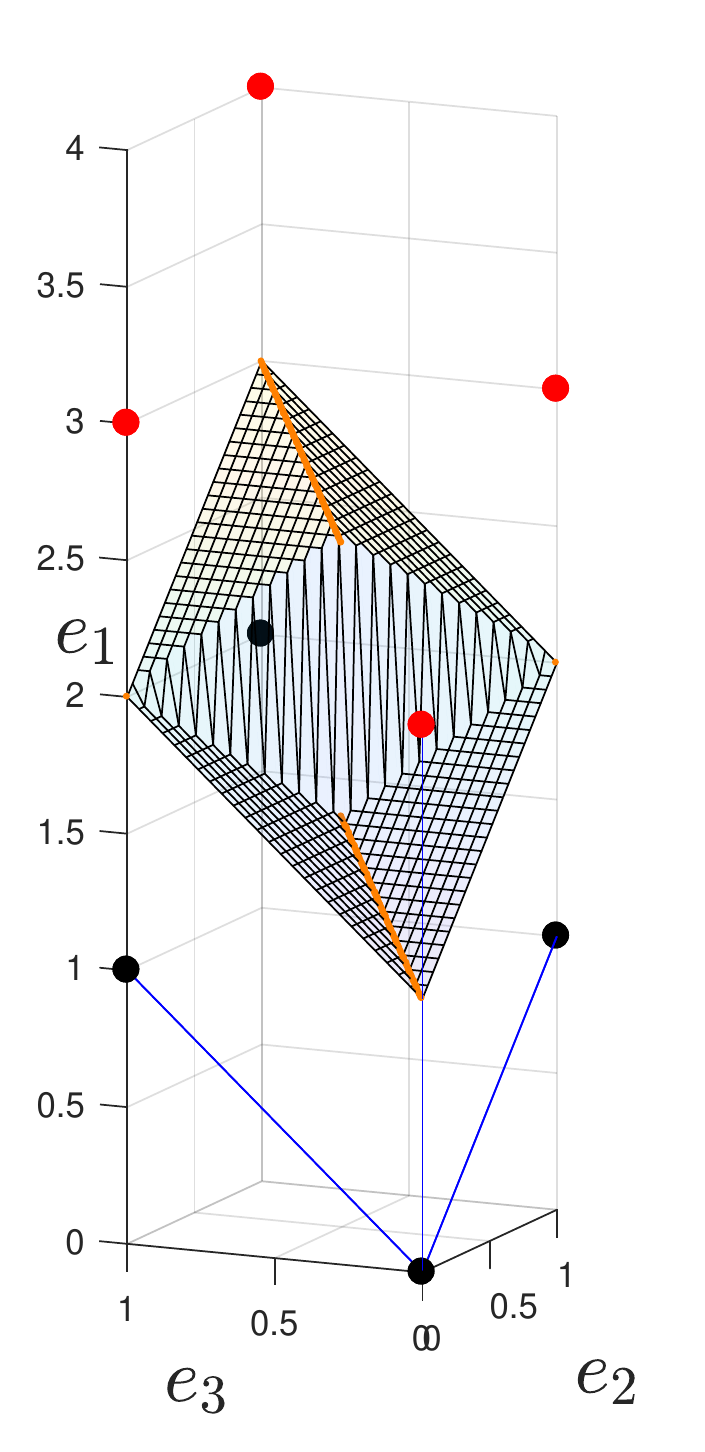}
  \caption{CPWL decision boundary function for $D_3$ defined by the basis of Construction A. 
	     The basis is rotated: $b_1$ is collinear with $e_1$.}
  \label{fig_func_D3_const_A}
\end{minipage}%
\hspace{8mm}
\begin{minipage}{.45\textwidth}
  \centering
  \includegraphics[width=.34\linewidth]{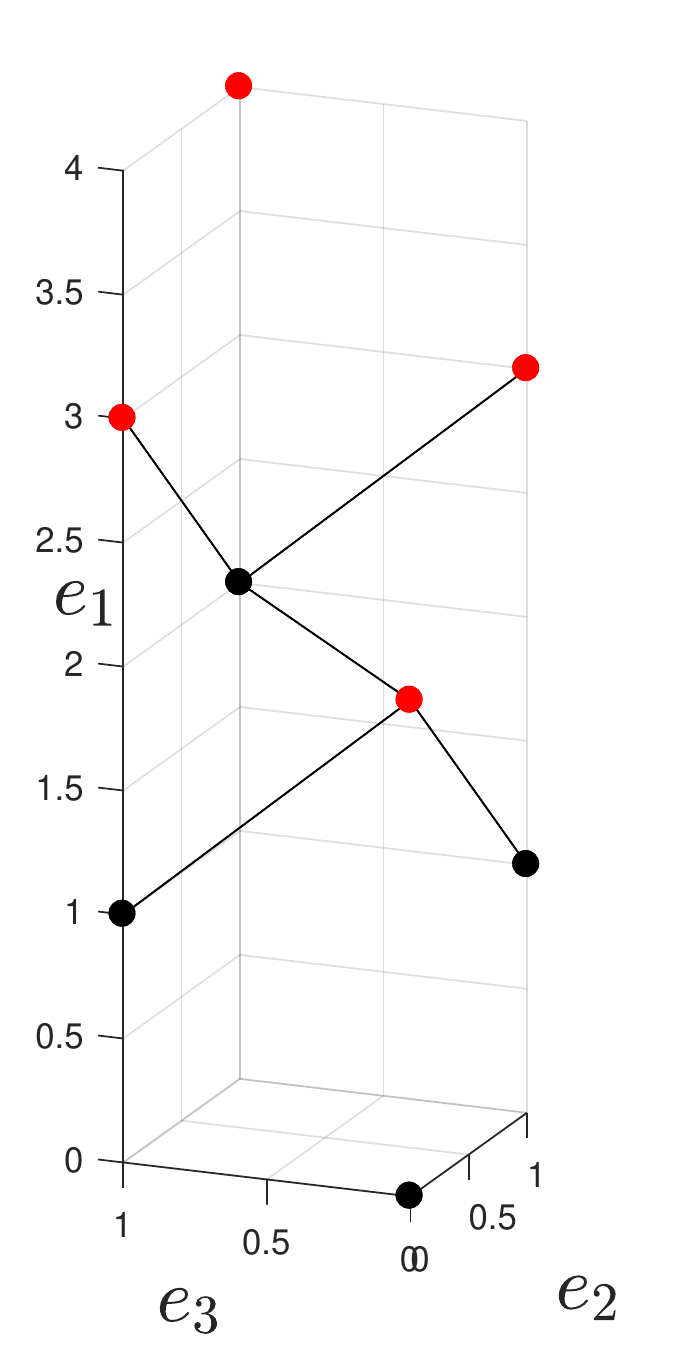}
  \caption{``Neighbor" figure of $\mathcal{C}_{\PPB}$ for $D_3$ defined by the basis of Construction A. 
	        Each edge connects a point $x \in \CBONE$ to an element of $\CN(x) \cap \CBZERO$.
	        The $i$ edges connected to a point $x \in \CBONE$ are $1$-faces of a $i$-simplex.}
  \label{fig_neighbor_D3_const_A}
\end{minipage}
\end{figure}


Given the basis orientation as in~Theorem~\ref{th_func_VR},
the projection of $b_j$ on $\mathcal{D}(\B)$ is $b_j$ itself, $2 \le j \le n$. 
We also denote the bisector hyperplane
between two vectors $b_j, b_k$ by $BH(b_j, b_k)$
and its normal vector is taken to be $v_{j,k}=~b_j-b_k$. 
We define the folding transformation $F:\D(\B) \rightarrow \D'(\B)$ as follows.  
Let $\tilde{y} \in \D(\B)$, for all $2\le j < k \le~n$,
compute $\tilde{y} \cdot v_{j,k}$ (the first coordinate of $v_{j,k}$ is zero).
If the scalar product is non-positive, replace $\tilde{y}$
by its mirror image with respect to $BH(b_j, b_k)$.
There exist $\binom{n-1}{2}$ hyperplanes for mirroring.

\begin{theorem}
\label{theo_Dn_const_A_linear}
Let us consider the lattice $D_n$ defined by the Gram matrix~\eqref{eq_Dn_const_A}. 
We have (i) for all $\tilde{y} \in \D(\B)$, $f(\tilde{y}) = f(F(\tilde{y}))$ and (ii) $f$ has exactly 
\vspace{-1.5mm}
\begin{equation}
2n-1
\vspace{-1.5mm}
\end {equation}
pieces on $\D'(\B)$.
This is to be compared with (\ref{eq_Dn_constA}). 
\end{theorem}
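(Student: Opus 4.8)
The plan is to prove the two assertions in turn. For (i), I would analyse a single folding reflection $\sigma_{j,k}$ across $BH(b_j,b_k)$, whose normal is $v_{j,k}=b_j-b_k$. On the basis it exchanges $b_j\leftrightarrow b_k$ and fixes every other $b_\ell$; since the Gram matrix \eqref{eq_Dn_const_A} is invariant under transposing any two indices in $\{2,\dots,n\}$, this reflection is an isometry preserving $\Lambda$, i.e. a lattice automorphism. The rows $b_2,\dots,b_n$ lie in $\{y\cdot e_1=0\}$, so $v_{j,k}\cdot e_1=0$ and $\sigma_{j,k}$ fixes $e_1$; and $b_1\cdot b_j=b_1\cdot b_k=2$ gives $b_1\cdot v_{j,k}=0$, so it also fixes $b_1$. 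Hence $\sigma_{j,k}$ maps $\PPB$ onto itself while merely permuting the parameters $t_j\leftrightarrow t_k$, permutes the corners inside $\CBONE$ and inside $\CBZERO$ separately (it preserves $\hz_1$), and therefore carries the decision boundary to itself. Because it also preserves $y_1$ and commutes with the projection onto $\{e_i\}_{i=2}^n$, the induced map on $\R^{n-1}$ is the reflection across $BH(b_j,b_k)$ and satisfies $f\circ\sigma_{j,k}=f$. As $F$ is a composition of such reflections, $f(F(\tilde y))=f(\tilde y)$ for all $\tilde y\in\D(\B)$, proving (i).

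For (ii), I would first note that the $\binom{n-1}{2}$ reflections generate the symmetric group $S_{n-1}$ on $\{2,\dots,n\}$, that $\D(\B)$ is $S_{n-1}$-invariant (its preimage $\PPB$ is), and that $F$ sends $\D(\B)$ onto $\D'(\B)=\D(\B)\cap W$, where $W=\{\tilde y:\tilde y\cdot(b_j-b_k)\ge 0,\ 2\le j<k\le n\}$ is the closed fundamental Weyl chamber, i.e. the region where the projections $\tilde y\cdot b_2,\dots,\tilde y\cdot b_n$ are sorted in decreasing order. The affine pieces of $f$ on $\D'(\B)$ are then exactly the linear regions of $f$ that meet the interior of $W$, so the count reduces to deciding which of the Voronoi facets of Theorem~\ref{theo_nbReg_Lin_Dn_const_A} survive the ordering constraint. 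I would organise this through the $S_{n-1}$-action: the $\binom{n-1}{i}$ simplices of each type form a single orbit (the group acts transitively on the corners $x\in\CBONE$ with a fixed number of unit coordinates among $2,\dots,n$), so in $W$ each type is represented by one ``sorted'' simplex, and it remains to count how many facets of each representative are compatible with the ordering and to sum these contributions.

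The main obstacle is this final enumeration, namely showing that the surviving facets total exactly $2n-1$. I expect to handle it by writing the relevant Voronoi vectors explicitly in the basis and splitting them by type: for the full $i{=}0$ simplex the $\binom{n}{2}$ facets fall into the $S_{n-1}$-orbits of pairs $\{1,j\}$ and of pairs $\{j,k\}\subset\{2,\dots,n\}$, and the smaller simplices contribute their own families. Tracking a generic ray in the interior of $W$ as $y_1$ increases through the boundary, I would record the distinct affine functions realised and verify that the sorted constraint collapses each symmetric family to only a constant number of survivors, so that the exponential sum \eqref{eq_Dn_constA} telescopes to the linear value $2n-1$; the explicit verification uses the description of $\PPB$ in Appendix~\ref{App_Dn_first_kind}.
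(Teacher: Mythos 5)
Your part (i) is sound and is essentially the paper's own argument (the reflection through $BH(b_j,b_k)$ is a lattice symmetry fixing $b_1$ and $e_1$, hence it permutes $\CBONE$ and $\CBZERO$ separately and leaves the decision boundary invariant), and your reduction of part (ii) to counting the facets that meet the chamber $W$ --- with the prefix sums of Lemma~\ref{lem_fold} as the sorted representatives --- is exactly the paper's framework as well. The problem is that you never carry out the decisive enumeration: the entire content of (ii) is the exact value of the count, and your proposal only promises that the sum \eqref{eq_Dn_constA} will ``telescope to $2n-1$'' after tracking survivors. That step is the crux, not a verification to be deferred.

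Moreover, the gap cannot be filled so as to produce the claimed number, because an honest execution of your plan gives $2n-3$, not $2n-1$. By \eqref{eq_prop_constA}, the neighbors of $x+b_1$ in $\CBZERO$ are only the points $x+b_j$ and $x+b_j+b_k$; the point $x$ itself is \emph{not} one of them (for this basis $b_1$ has twice the minimal length and is not a relevant Voronoi vector of $D_n$, in contrast with the $A_n$ case). Hence, for the chamber representative $x=b_2+\cdots+b_i+b_1 \in \CBONE$, the surviving neighbors are $x-b_1+b_{i+1}$ (requires $i\le n-1$) and $x-b_1+b_{i+1}+b_{i+2}$ (requires $i\le n-2$), which totals $2(n-2)+1+0=2n-3$ pieces. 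Equivalently, in your orbit language, the facets fall into $(n-1)$ orbits of type $\bigl(x+b_1,\,x+b_j\bigr)$ and $(n-2)$ orbits of type $\bigl(x+b_1,\,x+b_j+b_k\bigr)$ under $S_{n-1}$, i.e.\ $2n-3$ orbits, each contributing one piece on $\D'(\B)$; for $n=3$ one checks directly that only $3$ of the $5$ facets meet the half-plane $\tilde y\cdot(b_2-b_3)\ge 0$ in a $2$-dimensional set. The paper's own proof reaches $2n-1$ only by listing ``$x-b_1$ and $x+b_{i+1}-b_1$'' as the surviving neighbors, i.e.\ by counting the pairs $(x,x-b_1)$, which contradicts \eqref{eq_prop_constA} and even the origin case treated two sentences earlier in that same proof (the surviving neighbors of $b_1$ are $b_2$ and $b_2+b_3$, not $0$ and $b_2$). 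So your missing enumeration is a genuine gap, and completing it faithfully would contradict, rather than confirm, the stated count.
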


The folding procedure is identical to the one used for $A_n$ (see \cite{Corlay2019}): 
the number of pieces to evaluate is reduced to a linear number via $\mathcal{O}(n^2)$ reflections with respect to the bisector hyperplanes
between any pair of vectors in $\B \backslash \{ b_1 \}$. 
The proof presents no novelty and is deferred to Appendix~\ref{App_folding_const_A}.

As a result, $f$ can be computed by a ReLU network of depth $\mathcal{O}(n^2)$ and width $\mathcal{O}(n)$ 
(with the strategy explained in Appendix~\ref{sec_folding_relu}).

\subsection{Second basis of $D_n$}
We investigate a second basis of $D_n$.
All basis vectors have the same length 
but we have both $\pi/3$ and $\pi/2$ angles between the basis vectors. 
This basis is not VR but SVR.
It is defined by the following Gram matrix.
\begin{equation}
\label{eq_second_kind}
\Gamma_{D_{n}}^{(2)}=
\left(
\begin{array}{cccccccc}
2 & 0 &1 &. . . &1 \\
0 &2& 1& . . . & 1 \\
1 &1& 2&  . . . & 1 \\
. & . & .& . .  . & . \\
1 & 1 & 1  & . . . & 2
\end{array}
\right).
\end{equation}

\begin{theorem}
\label{theo_nbReg_Lin_Dn_second_kind}
Consider a $D_n$-lattice basis defined by the Gram matrix~\eqref{eq_second_kind}.
The decision boundary function $f$, defined on $\D(\B)$, has a number of affine pieces equal to
\begin{equation}
\label{eq_nbReg_Dn_second_kind}
\sum_{i=0}^{n-2} \left(   \underset{(l)}{\underbrace{\left[ 1+(n-2-i) \right]}}+  \underset{(ll)}{\underbrace{\left[\underset{(1)}{\underbrace{1+2(n-2-i)}}+ \underset{(2)}{\underbrace{\binom{n-2-i}{2}}} \right]}} \right) 
\times \underset{(o)}{\underbrace{\ \binom{n-2}{i}}}-1.
\end{equation}
\end{theorem}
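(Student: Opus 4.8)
The plan is to run the same simplex/neighbour argument as in Theorem~\ref{theo_nbReg_Lin_Dn_const_A}: every affine piece of $f$ is a Voronoi facet separating a corner $x\in\CBONE$ from a point of $\CN(x)\cap\CBZERO$, so the piece count is obtained by enumerating, for each $x\in\CBONE$, its neighbours in $\CN(x)\cap\CBZERO$ (the ``neighbour figure'' of Figure~\ref{fig_neighbor_D3_const_A}) and then correcting for facets that happen to lie in a common supporting hyperplane. First I would fix the relevant Voronoi vectors of $D_n$ in the basis \eqref{eq_second_kind}. Writing a $\CBZERO$-neighbour of $x=b_1+\sum_{j\in S}b_j$ as $x-v$, where $v$ is a minimal vector with $b_1$-coordinate $1$, a short norm computation using $\|b_i\|^2=2$, $b_1\cdot b_2=0$ and $b_i\cdot b_j=1$ otherwise shows the admissible $v$ are $b_1$, the vectors $b_1-b_k$ with $k\ge3$, and—this is exactly where the orthogonality enters—$b_1+b_2-b_k$ and $b_1+b_2-b_k-b_m$ with $k,m\ge3$; indeed $\|b_1+b_2-\sum_{j\in T}b_j\|^2=|T|^2-3|T|+4$, which attains the minimum $2$ precisely for $|T|\in\{1,2\}$. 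Since subtracting $b_2$ from a valid corner requires $z_2=1$, the last two families are available only to the $z_2=1$ corners, and this is what splits the count into the two bracketed terms of \eqref{eq_nbReg_Dn_second_kind}.

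Next I would organise the sum by indexing the corners $x\in\CBONE$ by the number $i$ of ones among $z_3,\dots,z_n$. There are $\binom{n-2}{i}$ such index sets—the factor $(o)$—and each carries one corner with $z_2=0$ and one with $z_2=1$. For a $z_2=0$ corner only $v=b_1$ and the $n-2-i$ vectors $b_1-b_k$ (with $k$ outside the chosen index set) yield neighbours, giving $(l)=1+(n-2-i)$. For a $z_2=1$ corner the two extra families open up: $v=b_1+b_2-b_k$ contributes a further $n-2-i$ neighbours and $v=b_1+b_2-b_k-b_m$ contributes $\binom{n-2-i}{2}$, which together with the $1+(n-2-i)$ neighbours of the first kind give $(ll)=[1+2(n-2-i)]+\binom{n-2-i}{2}$, the sub-terms $(1)$ and $(2)$ matching these two sources. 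Summing $[(l)+(ll)]\binom{n-2}{i}$ over $i$ then counts all the Voronoi facets that bound $f$, and a small Vandermonde-type identity recovers it in closed form if a non-summed expression is preferred.

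The main obstacle—and the origin of the trailing $-1$—is that this facet total is not yet the number of \emph{linear regions}, because the facets with normal $b_1$ are mutually parallel with bias $1+|\{j\ge3:z_j=1\}|$, a quantity insensitive to $z_2$; hence a $z_2=0$ corner and its $z_2=1$ partner contribute facets lying in the very same hyperplane, and wherever two such coplanar facets meet along a common edge of $\D(\B)$ they fuse into a single affine piece. The delicate claim is that, across the whole arrangement on $\PPB$, the net number of such fusions is exactly one, uniformly in $n$, rather than one per bias level; establishing this is where the bulk of the work lies. I would prove it by the same incidence analysis on the faces of $\PPB$ used for the first basis in Appendix~\ref{App_Dn_first_kind}, now carried out separately for the two values of $z_2$ and tracking precisely which coplanar pairs give adjacent regions. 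I expect the bookkeeping of these coincidences—not the neighbour enumeration, which is routine once the four vector families are in hand—to be the hard part.
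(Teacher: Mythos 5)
Your neighbour enumeration is exactly the paper's argument in Appendix~\ref{App_Dn_second_kind}: the paper likewise sorts corners $x+b_1\in\CBONE$ by whether $z_2=0$ (category $(l)$) or $z_2=1$ (category $(ll)$) and by the number $i$ of ones among $z_3,\dots,z_n$, obtaining $1+(n-2-i)$, respectively $1+2(n-2-i)+\binom{n-2-i}{2}$, elements of $\CN(x+b_1)\cap\CBZERO$, with multiplicity $\binom{n-2}{i}$. Your norm computation $\|b_1+b_2-\sum_{j\in T}b_j\|^2=|T|^2-3|T|+4$ is correct and is a welcome substantiation of the neighbour lists, which the paper only asserts as properties \eqref{eq_prop_Dn_2} and \eqref{eq_prop_Dn_2__2}. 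Up to this point the two arguments coincide.

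The genuine gap is the trailing $-1$, which you explicitly leave open (``establishing this is where the bulk of the work lies''). You correctly note that for \emph{every} $i$ and every index set, the facet of the pair $(x,x+b_1)$ and the facet of the pair $(x+b_2,x+b_2+b_1)$ lie in the same hyperplane $\{y:\ y\cdot b_1 = x\cdot b_1+1\}$, since $b_1\cdot b_2=0$; but you never determine how many of these coplanar pairs actually merge into a single linear region, so your argument does not yet produce the stated formula --- it only brackets it between the raw facet count and that count minus the number of coplanar pairs. The paper closes precisely this point: the double count occurs exactly once, at $i=n-2$, i.e.\ for the single pair $x=b_3+\cdots+b_n$ and $x+b_2$. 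What makes that level special is that both associated simplices are $1$-simplices, so each contributes its \emph{entire} bisector hyperplane as its boundary piece; the two coplanar, untruncated facets then constitute one and the same linear region of $f$. For $i<n-2$ the $(l)$ and $(ll)$ simplices possess additional oblique facets (with normals of the form $b_1-b_k$, $b_1+b_2-b_k$, $b_1+b_2-b_k-b_m$), and these intervene between the two coplanar $b_1$-facets inside $\D(\B)$, keeping them non-adjacent and hence counted as distinct pieces --- this is visible already for $D_3$, where the seven facets yield six pieces, the unique merge being at the corner $b_3$. Without this localisation (or your proposed incidence analysis actually carried out), the proposal is incomplete exactly at the step that produces the $-1$ in \eqref{eq_nbReg_Dn_second_kind}.
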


\begin{figure}[H]
\centering
\vspace{-8mm}
\begin{minipage}{.4\textwidth}
    \centering
	\vspace{8mm}
    \includegraphics[scale=0.35]{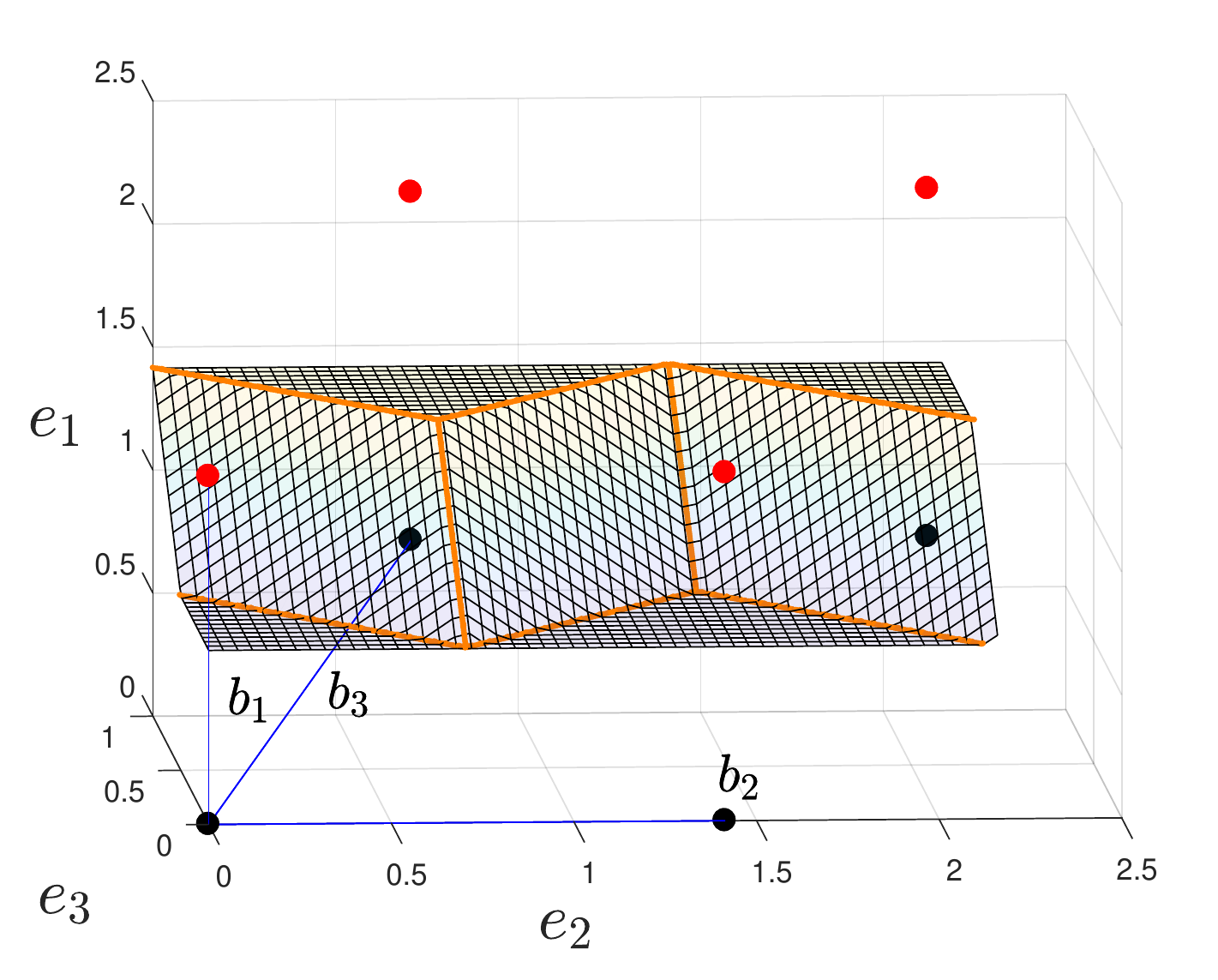}
     \caption{CPWL boundary function for $D_3$ defined by the second basis. 
	        The basis is rotated to better illustrate the symmetry: $b_1$ is collinear with $e_1$.}
     \label{fig_func_D3_second_kind}
\end{minipage}%
\hspace{8mm}
\begin{minipage}{.4\textwidth}
    \centering
    \includegraphics[scale=0.35]{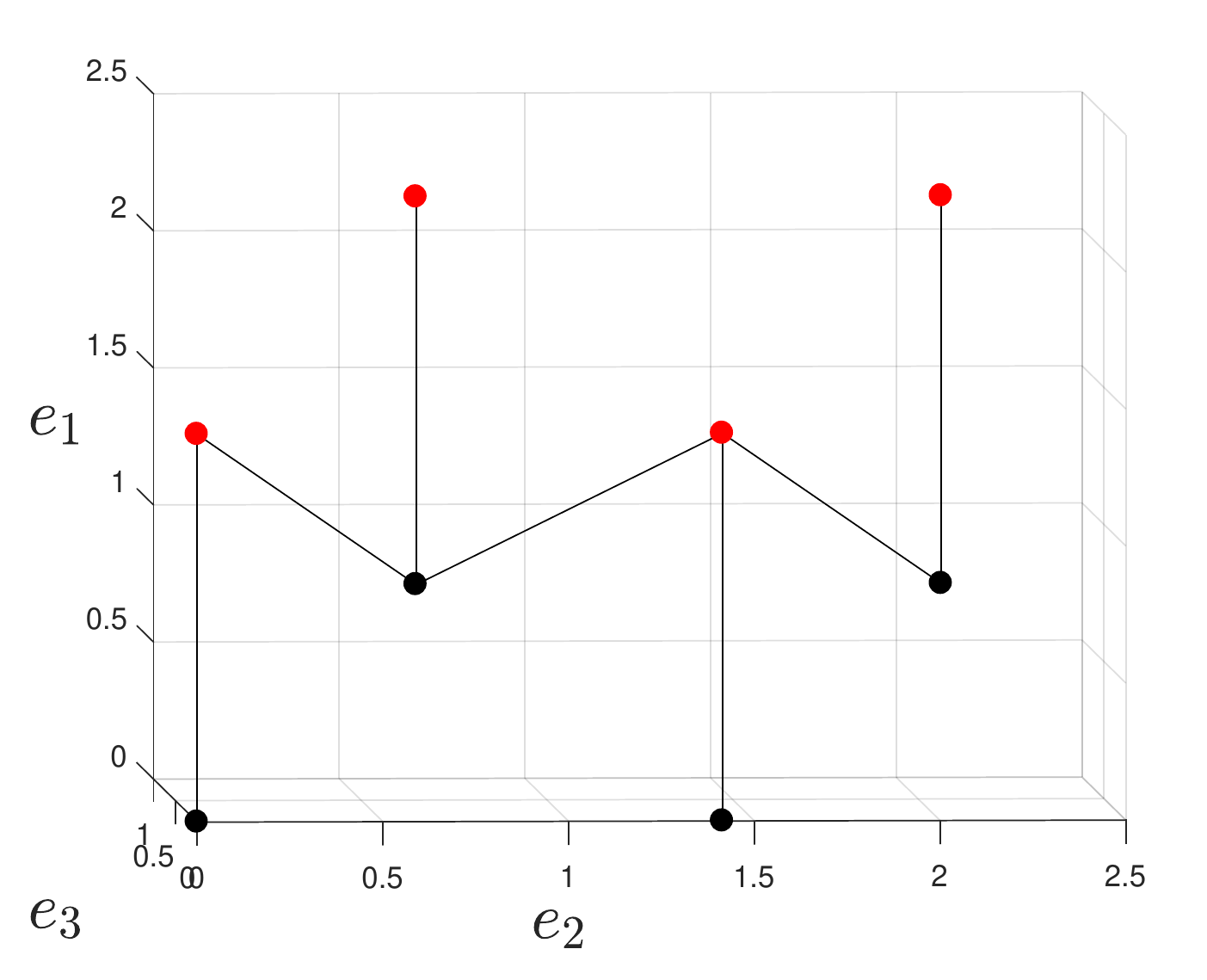}
     \caption{``Neighbor" figure of $\mathcal{C}_{\PPB}$ for $D_3$ defined by the second basis.}
     \label{fig_simplex_D3_2}
\end{minipage}
\vspace{-2mm}
\end{figure}

We give an example to gain insight into the above formula. The proof is deferred to Appendix~\ref{App_Dn_second_kind}.
The previous sketch of proof highlights that we need to count the
$i$-simplices to get the number of pieces of $f$.
This is achieved by finding the different ``neighborhood patterns" (this gives $(l)$ and $(ll)$) and 
counting the occurrence of $i$-simplices for each of these patterns (this gives $(o)$).
The following example presents the two different patterns encountered with this basis of $D_n$.
In the Appendix, we then count the number of simplices  (i.e. $(o)$) in each of these two categories.

\begin{example}
\label{ex_second_kind}
Consider the decision boundary function of Figure \ref{fig_func_D3_second_kind}. 
We are looking for the different ``neighborhood patterns" by studying Figure~\ref{fig_simplex_D3_2}: 
i.e. we are looking for the different ways to find the neighbors of $x \in \CBONE$ in $  \CN(x) \cap \CBZERO$, depending on the form of $x$.
In the sequel, $(l)$, $(ll)$, and $(1)$, $(2)$ refer to Equation~\eqref{eq_nbReg_Dn_second_kind}
and $\sum_j b_j$ denotes any sum of points in the set $\{0,b_j\}_{j=3}^{n}$. 
We recall that adding $b_1$ to any point $x \in \CBZERO$ leads to a point in $\CBONE$.

$(l)$ Firstly, we consider any point in $\CBONE$ of the form $\sum_j b_j + b_1$.
Its neighbors in $\CBZERO$ are $\sum_j b_j$ and any $\sum_j b_j+b_i$, 
where $b_i$ is any basis vector having an angle of $\pi/3$ with $b_1$ such that $\sum_j b_j+b_i$ is not outside $\PPB$.
For $n=3$, the closest neighbors of $0+b_1$  in $\CBZERO$ are $0$ and $b_3$. 
$b_2$ is perpendicular to $b_1$ and is not a closest neighbor of $b_1$. We get a $1+n-2$-simplex generating 2 pieces $f$.
The point $b_3+b_1$ also belongs to this category except that no basis vectors having an angle of $\pi/3$ with $b_1$ can be added to $b_3$ without leaving $\PPB$
(i.e $b_3+b_3+b_1$ is not in $\PPB$). 
Hence, the only closest neighbor of $b_3+b_1$ in $\CBZERO$ is $b_3$: we have a $1+n-2-1$-simplex.
Note that this pattern is the same as the (only) one encountered for $A_n$ with the basis given by Equation~$\eqref{eq_basis_An}$ (see Appendix~\ref{App_theo4}). 

$(ll)$ The second pattern is obtained with any point of the form $\sum_j b_j + b_2+b_1$ and its neighbors in $\CBZERO$, 
where $b_2$ is the basis vector orthogonal to $b_1$.
$\sum_j b_j + b_2$ and any $\sum_j b_j + b_2 +b_i$, $\sum_j b_j+b_k$ are neighbors of this point in $\CBZERO$, where $b_i$, $b_k$ are any basis vector having an angle of $\pi/3$ 
with $b_1$ such that (respectively) $\sum_j b_j + b_2 +b_i$, $\sum_j b_j+b_k$  are not outside $\PPB$.
For $n=3$, the closest neighbors of $0+b_2+b_1$  in $\CBZERO$ are $b_2$, $b_2 + b_3$, and
$b_3$. We get a $1+2 (n-2)$~-~simplex.
For $b_3+b_2+b_1$, it is the same pattern, except that in this case no $b_i$ can be added to $b_3+b_2$ without leaving $\PPB$: we have a $1+2 (n-2-1)$-simplex.
These terms generate $(1)$ in the formula.
Moreover, for $n=3$ one ``neighborhood case" is not happening: 
from $n=4$, the points $b_i+b_j \in \CBZERO$, $3 \leq i < j \leq n $, are also closest neighbors of $b_2 + b_1$. 
This~explains~the~binomial~coefficient $(2)$.
\end{example}

Given the basis orientation as in Theorem~\ref{th_func_VR},
the folding transformation
$F:\D(\B) \rightarrow \D'(\B)$ is defined as follows.
Let $\tilde{y} \in \D(\B)$, for all $3\le j < k \le~n$,
compute $\tilde{y} \cdot v_{j,k}$ (the first coordinate of $v_{j,k}$ is zero).
If the scalar product is non-positive, replace $\tilde{y}$
by its mirror image with respect to $BH(b_j, b_k)$.
There exist $\binom{n-2}{2}$ hyperplanes for mirroring.

\begin{theorem}
\label{theo_Dn_second_kind_lin}
Let us consider the lattice $D_n$ defined by the Gram matrix~\eqref{eq_second_kind}. 
We have (i) for all $\tilde{y} \in \D(\B)$, $f(\tilde{y})~=~f(F(\tilde{y}))$ and (ii) $f$ has exactly 
\vspace{-1.5mm}
\begin{equation}
6n-6
\vspace{-1.5mm}
\end {equation}
pieces on $\D'(\B)$.
This is to be compared with \eqref{eq_nbReg_Dn_second_kind}. 
\end{theorem}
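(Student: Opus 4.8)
The plan is to treat the two assertions separately: (i) is a symmetry statement that, as the authors note for the first basis, parallels the $A_n$ construction, while (ii) is a genuine combinatorial count of the pieces that survive in the fundamental domain of the folding group.

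For (i), I would first show that each elementary reflection $\sigma_{j,k}$ across $BH(b_j,b_k)$, $3\le j<k\le n$, is an isometry of $\R^n$ restricting to an automorphism of $\Lambda$. Since the Gram matrix~\eqref{eq_second_kind} is invariant under the simultaneous permutation of its $j$-th and $k$-th rows and columns whenever $3\le j,k\le n$ (the lower block is $J+I$ and the first two coordinates couple symmetrically to every $b_3,\dots,b_n$), the map swapping $b_j\leftrightarrow b_k$ and fixing all other basis vectors preserves all inner products, hence is realized by $\sigma_{j,k}$ with normal $v_{j,k}=b_j-b_k$. The crucial identities are $b_1\cdot v_{j,k}=1-1=0$ and $b_2\cdot v_{j,k}=1-1=0$: both $b_1$ (the decoding direction, collinear with $e_1$) and the orthogonal vector $b_2$ lie on the mirror, so they are fixed. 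This is exactly why the folding excludes $b_2$: a reflection $\sigma_{2,k}$ would have $b_1\cdot(b_2-b_k)=0-1\ne 0$ and would move $b_1$, destroying the decoding partition. Consequently $\sigma_{j,k}$ maps $\PPB$ onto itself by permuting $z_j\leftrightarrow z_k$, fixes $z_1$, and therefore sends $\CBONE$ to $\CBONE$ and $\CBZERO$ to $\CBZERO$; being a lattice isometry it carries Voronoi facets to Voronoi facets, so the decision boundary, and hence $f$, is invariant. Because $v_{j,k}$ has vanishing first coordinate, $\sigma_{j,k}$ descends to a reflection of $\D(\B)\subset\R^{n-1}$ and $f(\tilde y)=f(\sigma_{j,k}\tilde y)$. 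The group generated by all $\sigma_{j,k}$ is $S_{n-2}$ acting by permuting the directions $b_3,\dots,b_n$; $F$ is precisely the retraction onto its fundamental domain $\D'(\B)=\{\tilde y : \tilde y\cdot v_{j,k}\ge 0\ \forall\, 3\le j<k\le n\}$, and since $f$ is constant on orbits we get $f(\tilde y)=f(F(\tilde y))$, proving (i).

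For (ii), I would count the affine pieces of the restriction $f|_{\D'(\B)}$. Since $\D'(\B)\subseteq\D(\B)$ and $f$ is $S_{n-2}$-invariant, every piece of $f$ on $\D(\B)$ belongs to an $S_{n-2}$-orbit, and restricting to the fundamental domain keeps exactly one representative per orbit meeting the interior of $\D'(\B)$. The huge multiplicities $(o)=\binom{n-2}{i}$ in~\eqref{eq_nbReg_Dn_second_kind}, encoding the choice of \emph{which} $i$ of the directions $b_3,\dots,b_n$ are switched on, each form a single $S_{n-2}$-orbit and therefore collapse; this is the structural reason the count drops from exponential to polynomial. I would then organize the remaining enumeration along the two neighbourhood patterns of Example~\ref{ex_second_kind}: the ``$A_n$-like'' pattern attached to corners $\sum_j b_j+b_1$ (term $(l)$), and the pattern attached to corners $\sum_j b_j+b_2+b_1$ exploiting the orthogonal vector $b_2$ (term $(ll)$). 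For each pattern I would determine exactly which facets of the canonical (sorted) simplices stay distinct and active in $\D'(\B)$ — many facets of neighbouring simplices coincide along the sorted cone — and sum them, obtaining the sharp linear value $6n-6$. The reduction to a ReLU network of depth $\mathcal{O}(n^2)$ and width $\mathcal{O}(n)$ then follows verbatim as for the first basis (Theorem~\ref{theo_Dn_const_A_linear} and Appendix~\ref{App_folding_const_A}).

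The routine part is (i): once the Gram-matrix symmetry and the two orthogonality identities are checked, the invariance is immediate and formally identical to the $A_n$ and first-$D_n$-basis arguments. The main obstacle is the exact enumeration in (ii). The difficulty is not that the count is hard in principle but that one must simultaneously (a) verify that each surviving facet corresponds to a unique orbit representative inside the sorted cone, with no facet double-counted across the $(l)$ and $(ll)$ patterns, and (b) correctly handle the ``boundary'' simplices — those corners, such as $b_n+b_1$ or $b_n+b_2+b_1$, from which no further $b_i$ with $b_i\cdot b_1\ne 0$ can be added without leaving $\PPB$ — which contribute fewer pieces and are precisely what turns a naive multiple of $(n-1)$ into the constant $6n-6$. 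Pinning down these additive corrections, and confirming consistency with the small cases, is where the care lies.
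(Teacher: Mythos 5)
Your part (i) is correct and, if anything, more careful than the paper's own argument: the paper simply invokes ``the basis symmetry,'' whereas you verify the Gram-matrix invariance under the transposition $b_j \leftrightarrow b_k$ and the identities $b_1 \cdot v_{j,k} = b_2 \cdot v_{j,k} = 0$, which is exactly what makes each mirror fix the decoding direction, preserve $\CBONE$ and $\CBZERO$, and descend to a reflection of $\D(\B)$. This matches the paper's proof of (i) in substance and fills in its details; your remark on why $b_2$ must be excluded from the folding is a nice addition.

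Part (ii), however, has a genuine gap: the enumeration that is the actual content of the theorem is never carried out. The paper's proof rests on an explicit structural lemma (Lemma~\ref{lem_fold_2}): among the corners of $\PPB$, the only ones on the non-negative side of every mirror $BH(b_j,b_k)$, $3\le j<k\le n$, are the ``prefix'' points $x_1=b_3+\cdots+b_i$ and $x_2=b_3+\cdots+b_i+b_2$ together with $x_1+b_1$, $x_2+b_1$; and then, for each surviving corner, an explicit determination of which neighbours in $\CN(x)\cap\CBZERO$ also survive (for pattern $(l)$, only $x-b_1$ and $x-b_1+b_{i+1}$; for pattern $(ll)$, only the single choice $b_{i+1}$ in the type-$(1)$ terms and only $b_{i+1}+b_{i+2}$ in the type-$(2)$ term). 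Your proposal gestures at precisely these two steps --- ``determine exactly which facets stay distinct and active'' and ``handle the boundary simplices'' --- and correctly identifies them as the crux, but it supplies neither the lemma nor the per-term reduction, so the value $6n-6$ is asserted rather than derived. Moreover, your counting principle (``one representative per orbit meeting the interior of $\D'(\B)$'') is not the criterion the paper uses: the paper counts pairs $(x,x')$ with \emph{both} endpoints on the non-negative side of all mirrors. The distinction matters, since a single affine piece can be cut by mirrors or lie in a wall, so orbit-counting alone does not pin down the additive constant. That this constant is delicate is visible in the paper itself: the main-text sketch $(n-3)\times(2+4)+(2+3)+1$ and the appendix reductions $(n-2)\times 2+1$ and $(n-3)\times 4+3+1$ do not obviously reconcile with $6n-6$ --- exactly the kind of bookkeeping your outline leaves open.
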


\textbf{Sketch of proof} 
\ To count the number of pieces of $f$, defined on $\D'(\B)$, 
we need to enumerate the cases where both $x \in \CBONE$ and $x'\in \CN(x) \cap \CBZERO$ are 
on the non-negative side of all reflection hyperplanes. 
Among the points in $\mathcal{C}_{\PPB}$ only the points
\begin{enumerate}
\item $x_1=b_3+...+b_{i-1}+b_i$ and $x_1+b_1$,
\item $x_2=b_3+...+b_{i-1}+b_i+b_2$ and $x_2+b_1$,
\end{enumerate}
$i \leq n$, are on the non-negative side of all reflection hyperplanes.
Via Example~\ref{ex_second_kind}, it is then easily seen that the number of pieces of $f$, defined on $\D'(\B)$,
is given by equation~\eqref{eq_nbReg_Dn_second_kind} reduced as follows: 
the three terms $(n-2-i)$ (i.e. $2(n-2-i)$ counts for two), the term $\binom{n-2-i}{2}$, and the term $\binom{n-2}{i}$ become 1 at each step $i$,
for all $0 \leq i \leq n-3$ (except $\binom{n-2-i}{2}$ which is equal to 0 for $i=n-3$).
Hence, \eqref{eq_nbReg_Dn_second_kind} becomes $(n-3)\times(2+4)+(2+3)+1$, which gives the announced result. $\blacksquare$ \\

Consequently, $f$ can be computed by a ReLU network of depth $\mathcal{O}(n^2)$ and width $\mathcal{O}(n)$ 
(with the strategy explained in Appendix~\ref{sec_folding_relu}).

\subsection{$E_n$}

Finally, we investigate $E_n$, $6 \le n \le 8$. 
$E_8$ is one of the most famous and remarkable lattices due to its exceptional density relatively to its dimension (it was recently proved that $E_{8}$ 
is the densest packing of congruent spheres in 8-dimensions (\cite{Viazovska2017})). 
The basis we consider is almost identical to the basis of $D_n$ given by \eqref{eq_second_kind}, except one main difference:
there are two basis vectors orthogonal to $b_1$  instead of one. 
This basis is not VR but SVR.
It is defined by the following Gram matrix.

\begin{equation}
\label{eq_En}
\Gamma_{E_{n}}=
\left(
\begin{array}{cccccccc}
2 & 0 &0 &1&. . . &1 \\
0 &2& 1&1& . . . & 1 \\
0 &1& 2&1&  . . . & 1 \\
1 &1&1&2& . . .  &1 \\ 
. & . & .& . & . .  . & . \\
1 & 1 & 1 &1 & . . . & 2
\end{array}
\right).
\end{equation}

\begin{theorem}
\label{theo_nbReg_Lin_En}
Consider an $E_n$-lattice basis, $6 \le n \le 8$, defined by the Gram matrix~\eqref{eq_second_kind}.
The decision boundary function $f$, defined on $\D(\B)$, has a number of affine pieces equal to
\tiny
\begin{equation}
\label{eq_En}
\sum_{i=0}^{n-3} \left( \underset{(l)}{\underbrace{\left[ 1+(n-3-i) \right]}} +\underset{(ll)}{\underbrace{2 \left[ 1+ 2(n-3-i)+ \binom{n-3-i}{2} \right]}} + \underset{(lll)}{\underbrace{\left[ \underset{(1)}{\underbrace{1+3(n-3-i) }}+  \underset{(2)}{\underbrace{3\binom{n-3-i}{2}}} + \underset{(3)}{\underbrace{\binom{n-3-i}{3}}} \right]} }\right)\underset{(o)}{\underbrace{\binom{n-3}{n-i}}}-3.
\end{equation}
\normalsize
\end{theorem}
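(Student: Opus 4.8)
The plan is to mirror the simplex-counting methodology established in the sketches of Theorems~\ref{theo_nbReg_Lin_Dn_const_A} and~\ref{theo_nbReg_Lin_Dn_second_kind}, adapting it to the fact that the basis~\eqref{eq_En} now possesses \emph{two} vectors, $b_2$ and $b_3$, orthogonal to $b_1$ rather than a single one. As before, every affine piece of $f$ coincides with a facet of the local decision boundary of some $i$-simplex, where each simplex is specified by a corner $x \in \CBONE$ together with its closest neighbors in $\CN(x) \cap \CBZERO$, and an $i$-simplex contributes a number of pieces equal to its size. Thus the whole count reduces to (a) classifying the neighborhood patterns of the corners $x \in \CBONE$ and (b) for each pattern, recording both the resulting simplex size (the term $(l)$, $(ll)$, or $(lll)$) and the number of corners realizing it (the multiplicity $(o)$).

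First I would classify the corners of $\CBONE$ according to how many of the two orthogonal vectors $b_2, b_3$ they contain, which produces exactly the three patterns appearing in the bracketed factor of the theorem. Writing a generic base point as $\sum_j b_j + b_1$, with $\sum_j b_j$ ranging over sums of the $n-3$ vectors making angle $\pi/3$ with $b_1$, gives pattern $(l)$; adjoining one of $b_2, b_3$ gives pattern $(ll)$, and its prefactor $2$ records the two symmetric choices of which single orthogonal vector is included; adjoining both yields pattern $(lll)$. For each pattern I would enumerate the neighbors of $x$ inside $\CN(x) \cap \CBZERO$ that remain inside $\PPB$, exactly as in Example~\ref{ex_second_kind}. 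For $(l)$ this reproduces the $A_n$-type count $1+(n-3-i)$; for $(ll)$ one obtains the $D_n$-second-kind count $1+2(n-3-i)+\binom{n-3-i}{2}$; and for $(lll)$, where both orthogonal directions may now be combined with the $\pi/3$ directions, the neighbor bookkeeping splits into the four sub-terms $1$, $3(n-3-i)$, $3\binom{n-3-i}{2}$, and $\binom{n-3-i}{3}$ labelled $(1)$–$(3)$.

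Next I would count the multiplicity $(o)$ of each pattern by exploiting the structure of $\PPB$ exactly as in Appendix~\ref{App_Dn_second_kind}: selecting $i$ of the $n-3$ available $\pi/3$ vectors to form the base sum yields $\binom{n-3}{i}$, and summing the three pattern contributions over $i$ produces the announced bracketed factor. The residual constant $-3$ would be obtained, as the $-1$ was for~\eqref{eq_nbReg_Dn_second_kind}, by subtracting the degenerate simplices that are over-counted at the extreme values of $i$; I expect three such cases here rather than one, stemming from the configurations built on $b_2$, on $b_3$, and on $b_2+b_3$ permitted by the extra orthogonal direction.

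The main obstacle I anticipate is the neighbor enumeration for pattern $(lll)$: with two orthogonal vectors present simultaneously, verifying precisely which combinations $\sum_j b_j + b_2 + b_i$, $\sum_j b_j + b_3 + b_k$, $\sum_j b_j + b_i + b_k$, and $\sum_j b_j + b_2 + b_3 + b_i$ stay inside $\PPB$ (and hence lie in $\CN(x) \cap \CBZERO$) requires a careful case analysis to justify the coefficient $3$ and the three binomials; an analogous but simpler version of this difficulty already appeared in the binomial term $(2)$ of the $D_n$-second-kind proof. Pinning down the exact value of the correction $-3$ would likewise demand checking the low-index boundary terms, where several of the binomials vanish and the patterns coincide.
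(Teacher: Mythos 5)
Your proposal is correct and follows essentially the same route as the paper's proof: the same classification of corners $x \in \CBONE$ into patterns $(l)$, $(ll)$ (doubled for the two choices among $b_2, b_3$), and $(lll)$, the same simplex-size and multiplicity bookkeeping with $\binom{n-3}{i}$, and the same explanation of the $-3$ as the coincidence (at the extreme index) of the pieces generated by the patterns containing $b_2$, $b_3$, and $b_2+b_3$ with the piece from $(l)$, due to the orthogonality of $b_2, b_3$ with $b_1$. The case analysis you flag as the remaining obstacle for $(lll)$ is resolved in the paper exactly as you anticipate, by listing the neighbor families $\sum_j b_j + b_2 + b_k$, $\sum_j b_j + b_3 + b_k$, $\sum_j b_j + b_2 + b_3 + b_k$ (coefficient $3(n-3-i)$), the analogous three families with two $\pi/3$ vectors (coefficient $3\binom{n-3-i}{2}$), and the triples $\sum_j b_j + b_i + b_k + b_l$ (term $\binom{n-3-i}{3}$).
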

\vspace{-1mm}
\textbf{Sketch of proof}  We first highlight the similarities with the function of $D_n$ defined by~\eqref{eq_second_kind} (we use the same numbering as in the in Example~\ref{ex_second_kind}).  As with $D_n$, we have case $(l)$. Case $(ll)$ of $D_n$ is also present but obtained twice because of the two orthogonal vectors.  The terms $n-2-i$ in $(l)$ and $(ll)$ of Equation~\eqref{eq_nbReg_Dn_second_kind} are replaced by $n-3-i$ also because of the additional orthogonal vector.

There is a new pattern $(lll)$: any point of the form $\sum_j b_j + b_3+b_2+b_1$ and its neighbors in $\CBZERO$, where $\sum_j b_j$ represents any sum of points in the set $\{0,b_j\}_{j=4}^{n}$.
For instance, the closest neighbors in $\CBZERO$ of $b_3 + b_2+b_1 \in \CBONE$  are the following points, which we can sort in three groups as on Equation~\eqref{eq_En}: 
(1) $b_2+b_j$, $b_3+b_j$, $b_2+b_3+b_j$, (2) $b_j+b_k$, $b_2+b_j+b_k$, $b_3+b_j+b_k$, (3) $b_j+b_i+b_k$,
 $4 \leq i<j<k \leq n$. The formal proof is available in Appendix~\ref{App_func_En}. $\blacksquare$ \\

Given the basis orientation as in Theorem~\ref{th_func_VR},
the folding transformation
$F:\D(\B) \rightarrow \D'(\B)$ is defined as follows.
Let $\tilde{y} \in \D(\B)$, for all $4\le j < k \le~n$ and $j=2,k=3$,
compute $\tilde{y} \cdot v_{j,k}$ (the first coordinate of $v_{j,k}$ is zero).
If the scalar product is non-positive, replace $\tilde{y}$
by its mirror image with respect to $BH(b_j, b_k)$.
There exist $\binom{n-3}{2}$+1 hyperplanes for mirroring.
We get the following theorem, whose proof is available in Appendix~\ref{App_folding_En}.

\vspace{-1mm}
\begin{theorem}
\label{theo_En_folding}
Let us consider the lattice $E_n$, $6\leq n \leq 8$, defined by the Gram matrix~\eqref{theo_nbReg_Lin_Dn_second_kind}. 
We have (i) for all $\tilde{y} \in \D(\B)$, $f(\tilde{y})~=~f(F(\tilde{y}))$ and (ii) $f$ has exactly 
\vspace{-1.5mm}
\begin{equation}
12n-40
\vspace{-1.5mm}
\end {equation}
pieces on $\D'(\B)$.
This is to be compared with~\eqref{eq_En}. 
\end{theorem}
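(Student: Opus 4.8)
The plan is to prove the two assertions separately, treating (i) as a symmetry statement and (ii) as a refinement of the counting behind Theorem~\ref{theo_nbReg_Lin_En}. For (i), I would first check that every reflection used by $F$ is a lattice automorphism fixing $b_1$. The reflection $R_{j,k}$ across $BH(b_j,b_k)$ has normal $v_{j,k}=b_j-b_k$; since all vectors of $\B\setminus\{b_1\}$ have squared length $2$ and pairwise inner product $1$ (the corresponding Gram block is $J_{n-1}+I_{n-1}$), a one-line computation gives $R_{j,k}(b_j)=b_k$, $R_{j,k}(b_k)=b_j$ and $R_{j,k}(b_i)=b_i$ for $i\neq j,k$. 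Moreover $R_{j,k}(b_1)=b_1$, because $b_1\cdot(b_j-b_k)=0$ for every folding pair: both $b_4,\dots,b_n$ make angle $\pi/3$ with $b_1$, while $b_2,b_3$ are orthogonal to $b_1$. Hence $R_{j,k}$ permutes the corners $\mathcal{C}_{\PPB}$ without changing the coordinate $z_1$, so it preserves the partition into $\CBZERO$ and $\CBONE$ and, being an isometry of $\Lambda$, sends Voronoi facets to Voronoi facets; it therefore preserves the whole decision boundary. As $v_{j,k}\perp e_1$, the induced map on $\D(\B)\subset\R^{n-1}$ is exactly the mirror image across $BH(b_j,b_k)$, so $f$ is invariant under it. Since $F$ is a conditional composition of such reflections, $f(\tilde{y})=f(F(\tilde{y}))$. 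This argument is identical to the one used for $A_n$ and $D_n$ and presents no difficulty.

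For (ii), I would realise $\D'(\B)$ as the fundamental domain $\{\tilde{y}\in\D(\B):\tilde{y}\cdot v_{j,k}\ge 0\ \text{for all folding pairs}\}$ of the group generated by the $R_{j,k}$, namely $S_{\{2,3\}}\times S_{\{4,\dots,n\}}$. A piece of $f$ on $\D'(\B)$ is carried by a Voronoi facet between some $x\in\CBONE$ and some $x'\in\CN(x)\cap\CBZERO$ that meets $\D'(\B)$, i.e.\ both $x$ and $x'$ must lie on the non-negative side of every reflection hyperplane. The decisive observation is that for a corner $x=\sum_i z_i b_i$ the projection obeys $\tilde{x}\cdot(b_j-b_k)=z_j-z_k$, again by the $J_{n-1}+I_{n-1}$ structure; hence surviving the folding is the purely combinatorial condition $z_2\ge z_3$ together with $z_4\ge z_5\ge\cdots\ge z_n$, that is, the support among $\{b_4,\dots,b_n\}$ is an initial segment.

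It then remains to count surviving pairs by intersecting the neighbourhood patterns $(l)$, $(ll)$, $(lll)$ described in the proof of Theorem~\ref{theo_nbReg_Lin_En} with this condition, exactly as Example~\ref{ex_second_kind} is reduced to prove Theorem~\ref{theo_Dn_second_kind_lin}. The effect is that every choice multiplicity collapses: the outer binomial $\binom{n-3}{i}$ becomes $1$ (only the initial-segment support remains), each linear count $n-3-i$ and each of $\binom{n-3-i}{2},\binom{n-3-i}{3}$ collapses to $1$ whenever the extra vectors are still available, the factor $2$ attached to pattern $(ll)$ drops to $1$ since $z_2\ge z_3$ keeps only the $b_2$-representative of the two orthogonal vectors, and in $(lll)$ the neighbour families containing $b_3$ but not $b_2$ are discarded. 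Summing the surviving contributions of $(l)$, $(ll)$, $(lll)$ over the initial-segment length, and treating the short-prefix boundary terms (where a binomial vanishes) separately, should yield $12n-40$.

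The main obstacle is exactly this last bookkeeping. Because $E_n$ has two vectors orthogonal to $b_1$, it carries the additional pattern $(lll)$ and several factor-two degeneracies that $D_n$ does not, so one must verify case by case which listed $\CBZERO$-neighbours of each surviving $x$ still satisfy $z_2\ge z_3$ and the initial-segment condition, and track the boundary terms and the additive correction carefully in order to reach the exact value $12n-40$ rather than a loose linear estimate. I would also check that no surviving facet is suppressed by the $\wedge$-structure of~\eqref{eq_boundary_funct} and that distinct surviving pairs give distinct affine pieces, so that the count is sharp.
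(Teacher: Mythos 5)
Your part (i) is essentially the paper's argument (Appendix~\ref{App_folding_En}), carried out in more detail: the paper simply notes that folding via $BH(b_j,b_k)$ switches $b_j$ and $b_k$ and invokes basis symmetry, while you verify that each reflection fixes $b_1$, swaps $b_j,b_k$, fixes the remaining basis vectors, and therefore permutes the Voronoi facets making up the boundary. Likewise, your survival criterion for part (ii) — $\tilde{x}\cdot v_{j,k}=z_j-z_k$, hence a corner survives folding iff $z_2\ge z_3$ and the support in $\{b_4,\dots,b_n\}$ is an initial segment — is exactly the content of the paper's Lemma~\ref{lem_fold_3} (its three families $x_1$, $x_2=x_1+b_2$, $x_3=x_1+b_2+b_3$ are precisely your three admissible values of $(z_2,z_3)$), and your derivation of it is cleaner than the paper's, which merely points back to the proof of Lemma~\ref{lem_fold}. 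Your qualitative description of how the multiplicities collapse (outer binomial to $1$, one surviving choice per family, the $(ll)$ factor $2$ dropping because $(ll)$-$(B)$ violates $z_2\ge z_3$, the $(lll)$ families containing $b_3$ but not $b_2$ being discarded) also matches the paper's reductions.

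The genuine gap is that you never perform the counting that \emph{is} assertion (ii). You write that summing the surviving contributions ``should yield $12n-40$'' and yourself call the bookkeeping ``the main obstacle''; but since the theorem claims an exact formula, that bookkeeping is the theorem, and deferring it means the value $12n-40$ is never established. The paper does execute it term by term: pattern $(l)$ reduces to $(n-3)\cdot 2+1$, pattern $(ll)$ to $(n-3)\cdot 4+3+1$ (with $(ll)$-$(B)$ contributing nothing), and pattern $(lll)$ to $(n-4)\cdot 6+5+3+1$, the per-step collapses being $3(n-3-i)\to 2$, $3\binom{n-3-i}{2}\to 2$, $\binom{n-3-i}{3}\to 1$. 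This step is exactly where errors live: note that the paper's own reduced terms sum to $12n-28$, which does not match the stated $12n-40$, and even subtracting identifications of coincident pieces at the maximal prefix (the analogue of the $-3$ in \eqref{eq_En_bis}) does not reconcile the two. A proof that stops at ``should yield'' — and that also defers the check you rightly flag, namely that distinct surviving pairs give distinct pieces not absorbed by the $\wedge$-structure of \eqref{eq_boundary_funct} — therefore leaves the heart of the statement unproved.
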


\vspace{-1mm}

Consequently, $f$ can be computed by a ReLU network of depth $\mathcal{O}(n^2)$ and width $\mathcal{O}(n)$ 
(with the strategy explained in Appendix~\ref{sec_folding_relu}).
\clearpage


\bibliography{bibli}

\clearpage


\appendix

\section{Recent results on the expressive power of deep neural networks}
\label{App_expre_power_NN}

The ultimate goal of research on the expressive power of deep neural networks 
is to find a large function class that can only be addressed via deep neural networks
and no other ways, including shallow networks and ``conventional approaches" (i.e. not deep neural networks).
Results of research works in this field are usually either capacity bounds (i.e. what can do a deep neural network) or separation bounds.
These bounds can depend on (i) the approximation error, (ii) the dimension of the input
as well as (iii) the width and (iv) the depth of the neural network. 

Unfortunately, results on larger function class tend to be looser as the bounds
have to hold for the worst-case scenario. 
Moreover, one of the (empirically observed) strength of neural networks compared to other techniques is 
their ability to efficiently approximate a given function.
Therefore, stronger theorems can be obtained for specific functions
but are less representative. 

Consequently, papers in the literature can be sorted based on the ``size" of the function class addressed 
and whether or not the results depend on (i),(ii),(iii), and (iv).
The present work addresses a small function class
(even though it may be a starting point to study algebraic functions), (i), (ii), (iii) and (iv).
The following list is not exhaustive and does not include older results related to the field of circuit complexity.

\cite{Eldan2016} proved a separation theorem including (i), (ii), and (iii) for a large class of function, 
namely ``radial" functions. Nevertheless, this separation holds only for two-layer and three-layer neural networks, thus (iv) is missing.
Also, note that \cite{Daniely2017} found a simpler proof of this result and  \cite{Safran2017} extended this separation result between two-layer and three-layer network to a larger class of function including the Euclidean unit ball. 

\cite{Montufar2014} achieved the best capacity theorem for deep ReLU neural networks including (ii), (iii), and (iv).
Similarly to our work, this is achieved via a small function class.
As shown in the Appendix of \cite{Corlay2019}, these functions 
can be computed via conventional methods 
as they are based on a periodic one dimensional function.

\cite{Telgarsky2016} proved a separation theorem between shallow and 
deep networks (this separation theorem was improved by \cite{Arora2018} by re-using the same ideas) including (i), (iii) and (iv).
Since this theorem is based on a one dimensional triangle wave function (see Appendix~\ref{App_Telgarsky}), (ii) is missing
 (a multi-dimensional function is considered but the bound does not depend on (ii)).

\cite{Arora2018} achieved a multi-dimensional construction with an exponential number of
linear regions requiring only a polynomial number of parameters (part (i) of Theorem~3.9 in the paper) but the proof is based on the fact that 
the high dimensional part of this function can be computed by a conventional method (i.e. the function with ${w}^n$ pieces considered can be computed via a $w$ 2-max, as shown in the proof of Lemma~3.7).

\cite{Raghu2016} showed that any random deep ReLU network achieves an exponential number of linear region depending on (ii),(iii) and (iv).
Additionally, via  the trajectory  length, they
observed that  most of the  random linear regions in  trained networks
are    in   fact    noise   that    should   be    addressed   through
regularization.   

Finally, \cite{Poggio2017} and \cite{Petersen2018} are recent results addressing large function class.


\section{The triangle wave function of  \cite{Telgarsky2016}}
\label{App_Telgarsky}

Telgarsky considers a one dimensional triangle wave function. 
The key observation is that adding two (shifted) copies of a triangle wave function increases 
the number of pieces in an additive manner, while composition acts multiplicatively.
Within a neural network, increasing the width of a layer is equivalent to adding functions, 
while increasing the depth is equivalent to composing functions. 
Hence, a function computed by a deep network, say $f: \R \rightarrow \R $, can have many more oscillations than 
functions computed by networks with few layers, say $g:  \R \rightarrow \R$.
Roughly speaking, if the activation function in each neuron is a triangle wave function with $p$ pieces,
a two-layer $w$-wide network leads to a triangle wave function of $wp$ pieces while a $L$ layers network 
with $\mathcal{O}(1)$-width leads to $p^L$ pieces.  

The difference (or ``error") 
between $f$ and a line can be characterized 
by the triangle areas illustrated on Figure~\ref{fig_func_Tel}.
Hence, the $L^1$ error between $f$ and $g$ is then bounded from below after
summing the triangle areas above the line (resp. below the line)
whenever $g$ is below (resp. above) this same line. 
Indeed, since $g$ has a number of pieces inferior to $f$,
it can only cross this line a limited number of times compared to $f$.

\begin{figure}[H]
    \centering
    \includegraphics[scale=0.8]{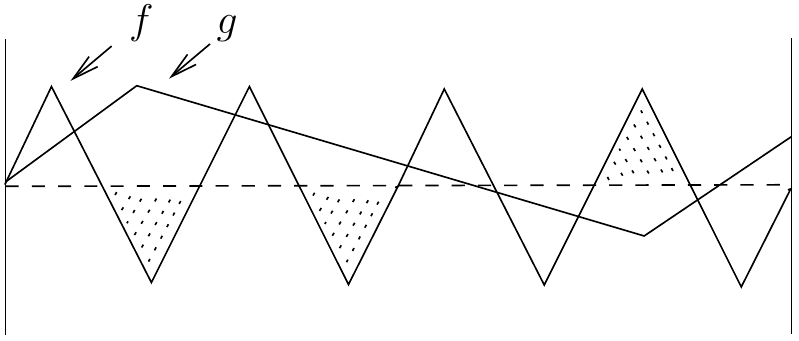}
     \caption{Triangle wave function considered by Telgarsky. 
		The doted triangle areas are used to get a lower bound of the error between $f$ and $g$.}
     \label{fig_func_Tel}
\end{figure}
 
This one-dimensional result is then extended to the $n$-dimensional case in the following manner.
A function $p_{\tilde{y}}(y_1) = ( y_1, \tilde{y})$ is defined. 
$\tilde{y}$ can be understood as an offset. 
The network is then only applied on $y_1$ but the error averaged in the cube $[0,1]^n$.

\begin{figure}[H]
    \centering
    \includegraphics[scale=1]{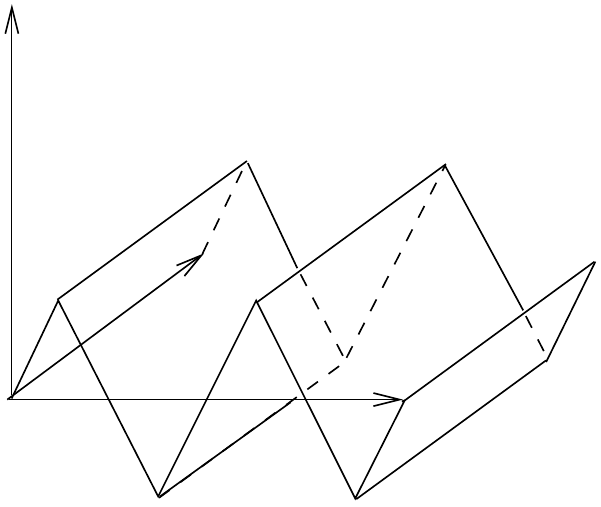}
     \caption{Triangle wave function with offset in $\R^3$. 
		    The number of pieces is not increased compared to the baseline function in $\R^2$.
		     There is no dimensional dependence.}
     \label{fig_func_Tel_2}
\end{figure}

\section{Deferred proofs}



\subsection{Proof of Proposition~\ref{lemma_superexp}: 
			a function with a superexponential number of pieces over a large compact set \label{lem_super_exp}}
\begin{proof}
First, let us define (without loss of generality) the 2-sawtooth ReLU activation function
as $ReLU(u)=u \mod 1$, $\forall \ u \in [0,2]$. 
This function allows to divide any interval into
two equal sub-intervals and then translates the point near the origin.
For illustration in $\R^2$,
as shown in 
Figures~\ref{fig_transla}\&\ref{fig_parti},
$(y_1, y_2)$ is multiplied by $G^{-1}$,
the 2-sawtooth ReLU is applied twice (on each coordinate), 
the output is subtracted from the other output to implement the floor operation,
and
then the result
is multiplied again by $G$. 
This corresponds to partitioning $\mathcal{P}(2\B)$
into four equal regions
$\{ \mathcal{P}(\B), \mathcal{P}(\B)+b_1, \mathcal{P}(\B)+b_2, \mathcal{P}(\B)+b_1+b_2 \}$.

\begin{figure}[H]
\centering
\begin{minipage}{.4\textwidth}
  \centering
  \includegraphics[width=.8\linewidth]{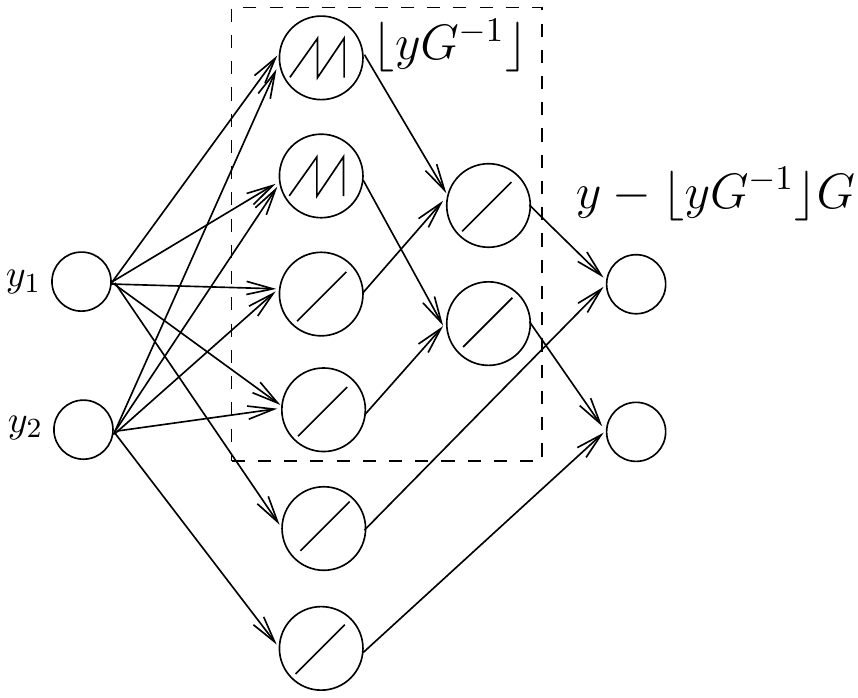}
  \caption{Translation block.}
  \label{fig_transla}
\end{minipage}%
\hspace{8mm}
\begin{minipage}{.4\textwidth}
  \centering
   \vspace{10mm}
   \includegraphics[width=.8\linewidth]{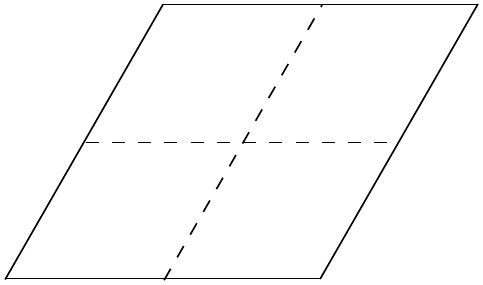}
  \caption{Partition of $\mathcal{P}(2\B)$ induced by one translation block.}
  \label{fig_parti}
\end{minipage}
\end{figure}

In $\R^n$, the 2-sawtooth ReLU is used to partition and translate
$\mathcal{P}(\alpha\B)$, where $\alpha=2^M$ and $M\ge 1$ is an integer.
At step $\ell$, $\ell=1 \ldots M$, a translation block similar to Figure~\ref{fig_transla}
executes the three operations: multiply by $G/2^{M-\ell}$, apply $n$ times a 2-sawtooth ReLU,
finally multiply by $2^{M-\ell}G$. $\mathcal{P}(\alpha\B)$ has $\alpha^n=2^{Mn}$ regions equivalent
to $\PPB$. Similarly, if we consider the set $\mathcal{P}(\{  b_1,  \alpha  b_2, \alpha  b_3,  \ldots,  \alpha
b_n\})$, there are $2^{M(n-1)}$ regions equivalent to $\PPB$ and 
the extended decision boundary function defined on the domain
$\D(\{  b_1,  \alpha  b_2, \alpha  b_3,  \ldots,  \alpha
b_n\})$ has $\Omega(2^{M(n-1)})$ pieces.

Hence, the extended boundary function is computed via two neural networks: 
a first neural network with $3M$ layers based on $M$ translation blocks
of maximum width $3(n-1)$
converts $y_0 \in \mathcal{P}(\{  b_1,  \alpha  b_2, \alpha  b_3,  \ldots,  \alpha
b_n\})$ into $y \in \mathcal{P}(\B)$.
Subsequently, the second neural network, evaluating $f$ defined on $\D(\B)$, takes $y$ as its input.
\end{proof}

Note that this result brings very little novelty as it is very similar to the results of \cite{Montufar2014} 
(we established a link between the function they use and Construction A in the Appendix of \cite{Corlay2019}) 
as well as to the one-dimensional composition argument used by \cite{Telgarsky2016} (see Appendix~\ref{App_Telgarsky}).
Moreover, it hardly justifies the superiority of deep neural networks as this operation can be handled 
by any conventional method. 

\subsection{Proof of Theorem~\ref{th_zero_error}}
\label{App_proof_theo_zero_error}

\subsubsection{Proof of 1.}
To prove 1, we compute an upper bound of the $L_1$ difference between $f$ and the function $h_{\Phi}$ 
defined by the hyperplane $\Phi = \{ y \in \R^n : \ y \cdot e_1  = \frac{1}{2}\times(b_1 \cdot e_1) \}$. 
We show that this bound goes to 0 for large $n$.  It is then obvious that an affine function
can be implemented via a one-neuron linear network.
\begin{proof}
Let $\mathcal{S}(\mathcal{B}_s \cup \{0\})$ denote the non-truncated $n$-simplex (illustrated in Figure~\ref{fig_dode} for $n=3$), 
defined by a basis $\mathcal{B}_s$.
The first step is to prove that all polytopes
$\{ y \in \PPB : y_1 \ge f_m(\tilde{y}), \ y \cdot e_1 \le \frac{1}{2}\times(b_1 \cdot e_1) \}$, 
$\{ y \in \PPB : y_1 \le f_m(\tilde{y}), \ y \cdot e_1 \ge \frac{1}{2}\times(b_1 \cdot e_1) \}$
are indeed truncated versions of $\mathcal{S}(\mathcal{B}_s \cup \{0\})$ and that there are $K=2^n$ distinct versions of them in $\PPB$.
We rely essentially on the proof of Theorem~4 in \cite{Corlay2019} (available in Appendix~\ref{App_theo4}): 
this proof shows that there are $\sum_{i=1}^{n} \binom{n-1}{n-i}=2^{n-1}$ distinct convex regions in $f$. 
Since for any convex region there is a corresponding concave region, there are $2^n$ of such polytopes.
This same proof also shows that the facets of any of these polytopes  
(except the facets lying in $\Phi$ or in a facet of $\PPB$) are orthogonal to 1-faces of a regular $i$-simplex 
(this simplex is not $\mathcal{S}(\mathcal{B}_s \cup \{0\} )$), $1\le i \le n$, where all these simplices
have one 1-face collinear with a vector defined by $x$ and $x+b_1$, $x \in \CBZERO$ (see Figure~\ref{fig_simplex_A3_neighbor} for the 3-dimensional case).
Hence, all these polytopes are truncated version of the same part of the Voronoi cell of $A_n$ and thus of $\mathcal{S}(\mathcal{B}_s \cup \{0\} )$.

The second step is to get an upper bound of the volume of each truncated simplex.
Clearly, it is inferior to the volume of the non-truncated regular $n$-simplex $\mathcal{S}(\mathcal{B}_s \cup \{0\})$. 
What is the volume of $\mathcal{S}(\mathcal{B}_s \cup \{0\})$? 
This volume is upper-bounded by $\text{Vol}(\PPB)/n!$ (see Subsection~\ref{app_vol_simp} below).

Finally, the $L_1$ distance between $f$ and $h_{\Phi}$ is bounded from above by the sum of the volumes of $K$ $\mathcal{S}(\mathcal{B}_s \cup \{0\})$.
If we take $\text{Vol}(\PPB)=1$, we get 

\begin{equation}
\label{equ_bound}
\int_{\D} |f(\tilde{y})-h_{\Phi}(\tilde{y})|d\tilde{y} < 2^n \cdot \frac{\text{Vol}(\PPB)}{n!} \sim \frac{1}{\sqrt{2 \pi n} 2^{n \log_2(n/e)-n}}, 
\end{equation}
where we used Stirling's approximation.



\end{proof}

\subsubsection{Proof of 2.}
\begin{proof}
We begin with the first part of the second result.
If the compact set $\mathcal{P}(\{ b_1, \alpha b_2, \alpha b_3, \ldots, \alpha b_n\})$,
where $\alpha=2^M$, is large enough, we can make the following approximation:
there are roughly as many Voronoi cell as parallelotopes $\PPB$ in $\mathcal{P}(\{ b_1, \alpha b_2, \alpha b_3, \ldots, \alpha b_n\})$.
This implies that the extended decision boundary $f$ ``contains" at least one non-truncated simplex for each $\PPB$.
With Proposition~\ref{lemma_superexp}, we get that there are $2^{M(n-1)}$ $\PPB$ in $\mathcal{P}(\{ b_1, \alpha b_2, \alpha b_3, \ldots, \alpha b_n\})$.
The volume of one non-truncated simplex is $\Omega \left(1/n^n\right)$ for an edge length of $\sqrt{2}$ (see Subsection~\ref{app_vol_simp} below).
Hence, if $K$ is the number of $\PPB$ in $\mathcal{P}(\{ b_1, \alpha b_2, \alpha b_3, \ldots, \alpha b_n\})$, the error between $f$ and $g$ is bounded from below by
\begin{equation}
\int_{\D} |f(\tilde{y})-g(\tilde{y})|d\tilde{y} = K \Omega \left(1/n^n\right),
\end{equation}
where $K = 2^{M(n-1)} = 2^{M(n-1)-n\log_2 (n)}\cdot 2^{n\log_2 (n)}$. 

Similarly to the strategy of Telgarsky (see Appendix~\ref{App_Telgarsky}), 
we can assume that each additional piece in $g$ cancels (at most) the volume of $\mathcal{O}(1)$ simplices in the bound.
Moreover, via Theorem~1 of \cite{Raghu2016} we know that no $L$-deep $w$-wide 
ReLU network with input in $\R^{n-1}$ can compute more than $\mathcal{O}(2^{(n-1)L\log_2(w)})$ pieces.
Consequently, the approximation error is bounded from below by
\begin{equation} 
a \times 2^{(n-1)(M-\log_2(n))-\log_2 (n)}-b \times 2^{(n-1)L\log_2(w)}, 
\end{equation}
where $a$ and $b$ are some constants.
As a result, if we choose $M \ge L \log_2(w)+n$,  then the approximation error is $\Omega \left(2^{(n-1)M-n\log_2(n)}\right)$.

The second part of the result is a direct consequence of Proposition~\ref{lemma_superexp}, 
where the part of $f$ on $\mathcal{\D}(\B)$ is evaluated as follows:
we implement the $\mathcal{O}(n^2)$ reflections, 
that enable to reduce the number of pieces to compute down to a linear number,
via a ReLU neural network of depth  $\mathcal{O}(n^2)$ and  width $\mathcal{O}(n)$ (see Theorem~5 in \cite{Corlay2019} or Section~\ref{sec_big_section}).
\end{proof}

\subsubsection{Volume of the non-truncated simplex\label{app_vol_simp}}

In this subsection, we show that the volume of the non-truncated simplex 
has a lower bound that behaves as $1/n^n$ and an upper bound given by $\text{Vol}(\PPB)/n!$.

Let $V_n$ be the volume of the non-truncated simplex described in Section~\ref{sec_approx}.
This simplex is equivalent to a hyperpyramid obtained by intersecting $V(0)$ with the hyperplane
$\Phi$ orthogonal to $e_1$ and located at a shift of $\frac{1}{2} b_1\cdot e_1$.
Figure~\ref{fig_casquette} illustrates the volume $V_n$ in pink color.
The blue color represents the regular simplex whose vertices are
$\{ 0, \frac{1}{2}b_1, \frac{1}{2}b_2, \ldots, \frac{1}{2}b_n\}$.

\begin{figure}[H]
\centering
\includegraphics[scale=0.9]{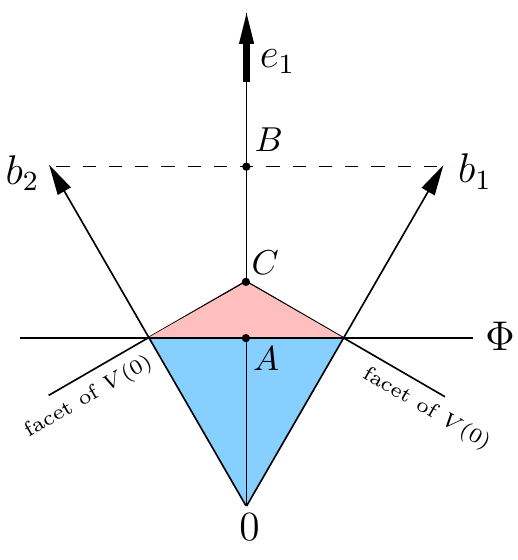}
\caption{Illustration of the non-truncated simplex (in pink on the figure).}
\label{fig_casquette}
\end{figure}

The volume of the hyperpyramid is $V_n=\frac{S \times h}{n}$,
where $S$ is the $n-1$-dimensional volume of this hyperpyramid facet lying on $\Phi$
and $h$ is the hyperpyramid height.

We start by determining $h$. Let $O$ be the point representing the origin in $\R^n$.
Denote by $C$ the centroid of the regular simplex whose vertices are $\{ 0, b_1, b_2, \ldots, b_n\}$.
The line $OC$ cuts $\Phi$ at the point $A$ and the hyperplane $\{b_i\}_{i=1}^n$ at the point $B$.
Then $h=OC-OA$ becomes
\[
h=OC-\frac{1}{2}OB=\frac{n}{n+1}OB-\frac{1}{2}OB=\frac{n-1}{2(n+1)} \sqrt{\frac{n+1}{n}}, 
\]
because $OB=\sqrt{\frac{n+1}{n}}$ is the height of the regular simplex with edge length $\sqrt{2}$.
The area $S$, i.e. the $n-1$-dimensional volume of the facet lying on $\Phi$,
is bounded from below by the area $S'$ of the blue simplex facet lying on $\Phi$.
Figure~\ref{fig_casquette} shows them equal in $\R^2$, but the facet of the pink simplex
will be larger that its blue counterpart for $n \ge 3$. From the formula of a regular simplex
volume, we get
\[
S \ge S'=\frac{a^{n-1}}{(n-1)!} \frac{\sqrt{n}}{2^{(n-1)/2}},~~~a=\frac{1}{2}\|b_1\|=\frac{1}{\sqrt{2}}.
\]
Finally, the lower bound of $V_n$ is
\begin{align*}
V_n \ge \frac{S' \times h}{n} &=\frac{n(n-1)}{2^n\times (n+1)^{3/2} \times n!}\\
&\sim \frac{1}{\sqrt{2\pi}} \frac{1}{(2n/e)^n}.
\end{align*}
Hence, the volume of the non-truncated simplex is $\Omega(1/n^n)$.

Moreover, the vectors defined by $n$ points of the simplex and their intersections generate a parallelotope.
This parallelotope is included in $\PPB$, its volume is thus inferior to the one of $\PPB$.
The volume of any  simplex is $n!$ times smaller than the volume of the parallelotope generated by the $n$ points.
Hence, the volume of the non-truncated simplex is bounded from above
by $\text{Vol}(\PPB)/n!$.

\subsection{Proof of Theorem~\ref{theo_nbReg_Lin_Dn_const_A}: number of pieces of $f$ with the basis of Construction A of $D_n$}
\label{App_Dn_first_kind}
This proof follows the same logic as the proof of the boundary function for $A_n$ (presented in \cite{Corlay2019} and available in Appendix~\ref{App_theo4}).
\begin{proof} We recall that any piece of $f$ is located in a hyperplane orthogonal
to a segment joining a point $x \in \mathcal{C}_{\PPB}^{1}$ and one of its neighbors $x' \in \CN(x) \cap \mathcal{C}_{\PPB}^{0}$.
For a given point in $\CBONE$, the neighbors of interest can be found via the following property of this basis of $D_n$:
\begin{gather}
\label{eq_prop_constA}
\begin{split}
&\forall \ x \in~\CBZERO, \ x' \in D_n \backslash \{b_j,0\} \backslash \{b_j + b_j \},\ 2 \leq i < j \leq n: \\
&\{x+ b_j\} \backslash \{x\} \in \CN(x+b_1),  \ \{x+ b_i+b_j\} \backslash \{x\} \in \CN(x+b_1),  \\ 
&x+ x' \not\in \CN(x+b_1) \cap \CBZERO.
\end{split}
\end{gather}
In other words, the two main differences with $A_n$ are that 
(i) summing two basis vectors $b_i+b_j \in \CBZERO$, $2 \leq i < j \leq n$, 
results in a point which is a closest neighbor of $b_1 \in \CBONE$
and (ii) $x \in \CBZERO$ is not a closest neighbor of $x + b_1$. 
This clearly appears on Figure~\ref{fig_neighbor_D3_const_A}. 
A point $x \in \mathcal{C}^1_{\PPB}$ and its neighbors $\CN(x)~\cap~\CBZERO$ form a $|\CN(x) \cap \CBZERO|$-simplex $\mathcal{S}$.

Now, consider the decision boundary function of a $i$-simplex separating the top corner 
(i.e. $\mathcal{C}^1_{\mathcal{S}}= \{ x \}, \, x \in \mathcal{C}^1_{\PPB}$) from all the other corners  (i.e. $\mathcal{C}^0_{\mathcal{S}}=\CN(x) \cap \CBZERO$). 
As long as no corner in $\mathcal{C}^0_{\mathcal{S}}$ has its first coordinate $e_1$ larger than the first coordinate
of the top corner, which is always the case with the orientation of the basis as in~Theorem~\ref{th_func_VR},
the function is convex and has $i$ pieces. 
The maximal size of such simplex in $\PPB$ is obtained
by taking the point $b_1$,  the $n-1$ points $b_j$, $2 \le j \le n$, 
and the $\binom{n-1}{2}$ points $b_i+b_j$, $2 \le i < j \le n$.
Hence, the decision boundary function $f$ has a number of affine pieces equal to
\begin{align}
\sum_{i=1}^{n-1 + \binom{n-1}{2}}i \times ( \text{$\#$ $i$-simplices}),
\end{align}
where, for each $i$-simplex, only one corner $x$ belongs to $\CBONE$
and the other corners constitute the set $\CN(x) \cap \CBZERO$.

We now count the number of $i$-simplices.
We walk in $\CBZERO$ and for each of the $2^{n-1}$ points $x \in \CBZERO$ 
we investigate the size of the simplex where the top corner is $x + b_1 \in \CBONE$. 
This is achieved by counting the number of elements in $\CN(x+~b_1)~\cap~\CBZERO$, 
via the property given by (\ref{eq_prop_constA}). 
Starting from the origin, one can form a $n-1 + \binom{n-1}{2}$-simplex with the point $b_1$,  the $n-1$ points $b_j$, $2 \le j \le n$, 
and the $\binom{n-1}{2}$ points $b_i+b_j$, $2 \le i < j \le n$. 
Then, from any $b_{j_1}$, $2 \leq j_1\leq n$, one can only use the $n-1$ remaining basis vectors to generate a simplex in $\PPB$. 
Indeed, if we add again~$b_{j_1}$, 
the resulting point  (i.e. the point $b_{j_1}+b_{j_1}$, which is a neighbor of $b_{j_1}+b_1$) is outside $\PPB$ and should therefore not be considered. 
Hence, we get a $(n-1)-1 + \binom{n-1 -1}{2}$-simplex and there are $\binom{n-1}{1}$
ways to choose $b_{j_1}$: any basis vector except~$b_1$. 
Similarly, if one starts the simplex from $b_{j_1}+b_{j_2}$, one can form a $(n-1)-2 + \binom{n-1-2}{2}$-simplex in $\PPB$ and there are $\binom{n-1}{2}$ ways to choose $b_{j_1}+b_{j_2}$. 
In general, there are $\binom{n-1}{i}$ ways to form a $n-1-i + \binom{n-1-i}{2}$-simplex.
\end{proof}

\subsection{Proof of Theorem~\ref{theo_Dn_const_A_linear}: folding of $f$ with the basis of Construction $A$ of $D_n$}
\label{App_folding_const_A}


\begin{lemma}
\label{lem_fold}
Among the elements of $\mathcal{C}_{\PPB}$, only the points
of the form $x=b_2+b_3+...+b_{i-1}+b_i$ and  $x+b_1$, $i \leq n$, 
are on the non-negative side of all $BH(b_j,b_k)$, $2\le  j < k \le~n$.  
\end{lemma}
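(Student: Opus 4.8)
The plan is to reduce the geometric membership condition to a combinatorial statement about the support of each corner. Every element of $\mathcal{C}_{\PPB}$ is of the form $x = zG = \sum_{i \in S} b_i$ for some support set $S \subseteq \{1, \dots, n\}$ (coming from $z \in \{0,1\}^n$). Since $BH(b_j,b_k)$ has normal $v_{j,k} = b_j - b_k$ and the folding leaves a point fixed exactly when its product with $v_{j,k}$ is non-negative, the corner $x$ lies on the non-negative side of all the mirrors iff $x \cdot v_{j,k} = x\cdot b_j - x \cdot b_k \ge 0$ for every $2 \le j < k \le n$; equivalently, iff the tuple $(x\cdot b_2, \dots, x\cdot b_n)$ is non-increasing. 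So the whole lemma reduces to characterizing this monotonicity.

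First I would compute $x \cdot b_j$ for $j \ge 2$ directly from the Gram matrix \eqref{eq_Dn_const_A}. Writing $S' = S \cap \{2, \dots, n\}$ and using $b_1 \cdot b_j = 2$, $b_j \cdot b_j = 2$, and $b_i \cdot b_j = 1$ for $i \ne j$, $i \ge 2$, one obtains
\[
x \cdot b_j = 2\,[\,1 \in S\,] + |S'| + [\,j \in S\,].
\]
Only the last term depends on $j$, so the constant part $2[\,1\in S\,] + |S'|$ cancels in every pairwise difference, giving the clean identity $x \cdot v_{j,k} = [\,j \in S\,] - [\,k \in S\,]$ for all $2 \le j,k \le n$.

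The conclusion is then immediate. The inequality $x \cdot v_{j,k} \ge 0$ fails for some $j < k$ precisely when there is a pair with $j \notin S$ and $k \in S$; hence $x$ survives all the mirrors iff $S'$ has no such gap, i.e. iff $S'$ is an initial segment $\{2, 3, \dots, i\}$ of $\{2, \dots, n\}$ (the empty segment being allowed, giving $x = 0$). These are exactly the corners $b_2 + b_3 + \dots + b_i$ of the statement. Finally, membership of $b_1$ in $S$ is irrelevant: since $b_1 \cdot v_{j,k} = b_1 \cdot b_j - b_1 \cdot b_k = 2 - 2 = 0$, adding $b_1$ changes none of the products $x \cdot v_{j,k}$, which accounts for both families $x$ and $x + b_1$.

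There is no genuine analytic difficulty here; the only care needed is the Gram-matrix bookkeeping and the observation that the $j$-independent part cancels, collapsing the $\binom{n-1}{2}$ inequalities to a single monotonicity condition on the support. The one point worth stating explicitly is the decoupling of the $b_1$-coordinate via $b_1 \cdot v_{j,k} = 0$, which is what produces the two parallel families in the claim.
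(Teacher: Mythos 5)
Your proof is correct and follows essentially the same route as the paper's: a support-gap argument showing that any corner whose sum contains $b_k$ but not $b_j$ ($2 \le j < k \le n$) lies on the negative side of $BH(b_j,b_k)$, combined with the observation that $b_1$ lies in every bisector hyperplane (equivalently $b_1 \cdot v_{j,k}=0$), which handles the second family $x+b_1$. Your explicit Gram-matrix identity $x \cdot v_{j,k} = [\,j\in S\,]-[\,k\in S\,]$ is a welcome sharpening of what the paper merely asserts, but it is the same method, not a different one.
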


\begin{proof}
In the sequel, $\sum_i b_i$ denotes any sum of points in the set $\{0,b_i\}_{i=2}^{n}$.
First, consider a point of the form $b_2+b_3+...+b_{j-1}+b_{j+1}+...+b_{i-1}+b_i$, $j+1<i-1\le n-1$.
This point is on the negative side of all $BH(b_j,b_k)$, $ j  < k \leq i$.
More generally, any point $\sum_i b_i$, where $\sum_i b_i$ includes in the sum $b_k$ but not $b_j$, $j<k \leq n$,
is on the negative side of $BH(b_j,b_k)$.
Hence, the only points in $\CBZERO$ that are on the non-negative side of all hyperplanes have the form $b_2+b_3+...+b_{i-1}+b_i$, $i \leq n$.

Moreover, if $x \in \CBZERO$ is on the negative side of one of the hyperplanes $BH(b_j,b_k)$, $2\le  j < k \le~n$, so is $x+b_1$
since $b_1$ is in all $BH(b_j,b_k)$.
\end{proof}

\begin{proof}(of Theorem~\ref{theo_Dn_const_A_linear})
(i) is the  direct result of  the symmetries  in the
$D_n$-lattice  basis where  the  $n-1$ vectors $\{b_j\}_{j=2}^{n}$ form  a regular  $n-1$-dimensional
simplex.  The folding  via $BH(b_j, b_k)$ switches $b_j$  and $b_k$ in
the hyperplane containing $\D(\B)$. $\D(\B)$ is orthogonal to both $e_1$ and $BH(b_j,b_k)$, $2\le  j < k \le~n$. 
Switching $b_j$
and $b_k$ does  not change the decision boundary because  of the basis
symmetry, hence  $f$ is unchanged.

Now, for (ii), how many pieces are left after all reflections?
To count the number of pieces of $f$, defined on $\D'(\B)$, 
we need to enumerate the cases where both $x \in \CBONE$ and $x'\in \CN(x) \cap \CBZERO$ are 
on the non-negative side of all reflection hyperplanes. 
Hence, for any given point $x \in \CBZERO$, that is on the proper side of all reflection hyperplanes, we 
count the number of elements in $\CN(x+b_1)~\cap~\CBZERO$ (via Equation~\eqref{eq_prop_constA}) 
that are also on the proper side of all bisector hyperplanes.


Starting from the origin, due to Lemma~\ref{lem_fold}, one can only form a $2$-simplex with $b_1$, $b_2$, and $b_2+b_3$: 
any other point $b_{j}$, $3 \le j \le n$, is on the negative side  of  $BH(b_2, b_j)$
and any point $b_j+b_k$, $2 \le j < k \le n$, except $b_2+b_3$, 
is on the negative  side  of at least one  $BH(b_i, b_j)$, $i \neq j$.
In general, due to Lemma~\ref{lem_fold}, all points in $\CBONE$ that are on the non-negative side of all hyperplanes, have the form $x=b_2+b_3+...+b_{i-1}+b_{i}+b_1$, $i\leq n$.
There are $n$ of them.
For any $1\leq i \leq n-1$, $x$ has only two neighbors in $\CBZERO$
on the non-negative side of all hyperplanes: $x-b_1$ and $x+b_{i+1}-b_1$
(for $i=n$, $x+b_{i+1}-b_1$ is outside $\PPB$ and $x$ has only one neighbor in $\CBZERO$). 
As a result, $f$, defined on $\D'(\B)$, has $(n-1)\times2+1$ pieces.
\end{proof}

\subsection{Proof of Theorem~\ref{theo_nbReg_Lin_Dn_second_kind}: 
number of pieces of $f$ with the second basis of $D_n$ \label{App_Dn_second_kind}}
\begin{proof}
Similarly to the proof of Theorem~\ref{theo_nbReg_Lin_Dn_const_A}, 
we count the number of simplices.
The number of pieces of $f$ is then obtained by summing 
the number of pieces of the boundary function of each simplex.
 
Hence, we walk in $\CBZERO$ and for each of the $2^{n-1}$ points $x \in \CBZERO$, 
we investigate the size of the simplex where the top corner is $x + b_1 \in \CBONE$. 
This is achieved by counting the number of elements in $\CN(x+~b_1)~\cap~\CBZERO$. 
In this scope, the points in $\CBZERO$ can be sorted into two categories: $(l)$ and $(ll)$. 
In the sequel, $\sum_j b_j$ denotes any sum of points in the set $\{0,b_j\}_{j=3}^{n}$.
These two categories and their properties, illustrated in Example~\ref{ex_second_kind}(see also Equation~\eqref{eq_proof_big} below), are:
\begin{gather}
\label{eq_prop_Dn_2}
\begin{split}
(l) \ &\forall \ x=\sum_j b_j \in~\CBZERO, \ x' \in D_n \backslash \{b_k,0\},\ 3 \leq k\leq n: \\
&x+ b_k \in \CN(x+b_1), \ x+ x' \not\in \CN(x+b_1) \cap \CBZERO. \\
\end{split}
\end{gather}
\begin{gather}
\label{eq_prop_Dn_2__2}
\begin{split}
(ll) \ &\forall \ x= \sum_j b_j + b_2\in~\CBZERO, \ x' \in D_n \backslash \{b_i,-b_2 + b_i,-b_2 + b_i +b_k,0\},\ 3 \leq i < k\leq n: \\
&(1)  \ (a) \ x+ b_i \in \CN(x+b_1), \ (b) \ x -b_2 + b_i \in \CN(x+b_1), \\
&(2) \ x -b_2 + b_i +b_k  \in \CN(x+b_1), \\
&(3) \  x+ x' \not\in \CN(x+b_1) \cap \CBZERO.
\end{split}
\end{gather}
We count the number of $i$-simplices per category.

$(l)$ is like $A_n$ (see Appendix~\ref{App_theo4}). Starting from the origin, one can form a $n-1$-simplex with $0$, $b_1$, and the $n-2$ other basis vectors except $b_2$ (because it is perpendicular to $b_1$). 
Then, from any $b_{j_1}$, $3 \leq j_1\leq n$, one can only add (to $b_{j_1}$) the $n-2$ remaining basis vectors (i.e. neither $b_1$ nor $b_{j_1}$) to generate a simplex in $\PPB$ where the top corner is $b_{j_1}+b_1$. 
Indeed, if we add again~$b_{j_1}$, the resulting point is outside $\PPB$ and should not be considered. 
Hence, we get a $n-2$-simplex and there are $\binom{n-2}{1}$
ways to choose $b_{j_1}$: any basis vector except~$b_1$ and $b_2$. 
Similarly, if one starts the simplex from $b_{j_1}+b_{j_2}$, one can form a $n-3$-simplex in $\PPB$ and there are $\binom{n-2}{2}$ ways to choose $b_{j_1}+b_{j_2}$. 
In general, there are $\binom{n-2}{i}$ ways to form a $n-1-i$-simplex.

$(ll)$ To begin with, we are looking for the neighbors of $b_2+b_1$. First (i.e. property $(1)$), we have the following $1+2 \times (n-2)$  points in $\CN(b_2+b_1) \cap \CBZERO$:
$b_2$, any $b_j+b_2$, $3 \leq j \leq n$, and any $b_j$, $3 \leq j \leq n$.  
Second (i.e. property $(2)$), the $\binom{n-2}{2}$ points $b_j + b_k$, $3 \le j<k\le n$, 
are also neighbors of $b_2+b_1$. Hence, $b_2+b_1$ has $1+2 \times (n-2)+\binom{n-2}{2}$ neighbors in $\CBZERO$.
Then, the points $b_1+b_2+b_{j_1} $, $3 \leq j_1 \leq n$, have $1+2 \times (n-2-1) + \binom{n-2-1}{2}$ neighbors of this kind, 
using the same arguments, and there are $\binom{n-2}{1}$ ways to chose $b_{j_1}$. 
In general, there are $\binom{n-2}{i}$ ways to form a $1+2 \times (n-2-i) + \binom{n-2-i}{2}$~-~simplex.

To summarize, each pattern replicates $\sum_i \binom{n-2}{i}$ times, where at each step $i$ the patterns yield respectively $(l)$ $1+(n-2-i)$-simplices and $(ll)$ $1+2 \times (n-2-i) + \binom{n-2-i}{2}$-simplices.
As a result, the total number of pieces of $f$ is obtained as
\begin{equation}
\label{eq_proof_big}
\sum_{i=0}^{n-2} \left(   \underset{(l)}{\underbrace{\left[ 1+(n-2-i) \right]}}+  \underset{(ll)}{\underbrace{\left[\underset{(1)}{\underbrace{1+2(n-2-i)}}+ \underset{(2)}{\underbrace{\binom{n-2-i}{2}}} \right]}} \right) 
\times \underset{(o)}{\underbrace{\ \binom{n-2}{i}}}-1,
\end{equation}
where the -1 comes from the fact that for $i=n-2$, the piece generated by $(l)$ and the piece generated by $(ll)$ are the same. Indeed, the bisector hyperplane of $x$, $x+b_1$ and the bisector hyperplane of $x+b_2$, $x+b_2+b_1$ are the same since $b_2$ and $b_1$ are perpendicular.
\end{proof}
\subsection{Proof of Theorem~\ref{theo_Dn_second_kind_lin}: 
folding of $f$ with the second basis of $D_n$}
\label{App_folding_second_kind}

\begin{lemma}
\label{lem_fold_2}
Among the elements of $\mathcal{C}_{\PPB}$, only the points
of the form 
\begin{enumerate}
\item $x_1=b_3+...+b_{i-1}+b_i$ and $x_1 +b_1$, 
\item $x_2=b_3+...+b_{i-1}+b_i+b_2$ and $x_2 + b_1$,
\end{enumerate}
$i \leq n$, 
are on the non-negative side of all $BH(b_j,b_k)$, $3\le  j < k \le~n$.  
\end{lemma}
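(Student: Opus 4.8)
The plan is to convert the geometric condition into an elementary sign condition on the integer coordinate vector $z \in \{0,1\}^n$ of a corner, exploiting the very rigid structure of the Gram matrix~\eqref{eq_second_kind}. Recall that every element of $\mathcal{C}_{\PPB}$ is of the form $x = \sum_{i=1}^n z_i b_i$ with $z \in \{0,1\}^n$, and that the bisector hyperplane $BH(b_j,b_k)$ has normal $v_{j,k} = b_j - b_k$. Since all basis vectors have equal length $\|b_j\| = \sqrt{2}$, this bisector passes through the origin, so its non-negative side is exactly the half-space $\{y : y \cdot (b_j - b_k) \ge 0\}$, and for a corner the admissibility condition is simply $x \cdot (b_j - b_k) \ge 0$.

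First I would compute $x \cdot (b_j - b_k)$ for a pair $3 \le j < k \le n$ directly from~\eqref{eq_second_kind}. The key observation is that for such $j,k$ every basis vector $b_i$ with $i \ne j,k$ (in particular $i = 1$ and $i = 2$) satisfies $b_i \cdot b_j = b_i \cdot b_k$, so all its contributions cancel, while $b_j \cdot (b_j - b_k) = 2 - 1 = 1$ and $b_k \cdot (b_j - b_k) = 1 - 2 = -1$. Hence
\[
x \cdot (b_j - b_k) = z_j - z_k, \qquad 3 \le j < k \le n.
\]
Consequently the requirement that $x$ lie on the non-negative side of every $BH(b_j,b_k)$, $3 \le j < k \le n$, is exactly $z_j \ge z_k$ for all such $j < k$, i.e. $z_3 \ge z_4 \ge \cdots \ge z_n$. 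Because each $z_i \in \{0,1\}$, a non-increasing binary string must have the form $z_3 = \cdots = z_i = 1$, $z_{i+1} = \cdots = z_n = 0$ for some $i$ with $2 \le i \le n$ (with $i = 2$ giving the empty prefix). In other words, the $\{b_3,\dots,b_n\}$-part of an admissible corner is precisely a prefix sum $b_3 + \cdots + b_i$.

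Finally I would note that $z_1$ and $z_2$ are entirely unconstrained: by the identity above they contribute nothing to $x \cdot (b_j - b_k)$ for $j,k \ge 3$, which is exactly the place where the orthogonality $b_1 \cdot b_2 = 0$ and the shared inner products $b_1 \cdot b_j = b_2 \cdot b_j = 1$ are used. Letting $(z_1, z_2)$ range over $\{0,1\}^2$ then yields exactly the four families $b_3 + \cdots + b_i$, its translate by $b_1$, its translate by $b_2$, and its translate by $b_1 + b_2$, which are precisely the points $x_1$, $x_1 + b_1$, $x_2$, $x_2 + b_1$ of the statement. Since this enumeration is both necessary and sufficient, it exhausts the admissible corners.

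There is no genuine obstacle here: the whole content is the cancellation giving $x \cdot (b_j - b_k) = z_j - z_k$, after which the combinatorics is immediate. The only point requiring care is reading the inner products off~\eqref{eq_second_kind} correctly and verifying that $b_1$ and $b_2$ truly drop out of every relevant difference — this is what permits both to be added freely and accounts for the extra factor of two (coming from the now-free coordinate $z_2$) relative to the Construction~A situation of Lemma~\ref{lem_fold}.
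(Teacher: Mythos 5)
Your proposal is correct and follows essentially the same route as the paper: the paper's proof (which simply invokes the argument of Lemma~\ref{lem_fold}) also rests on the facts that a corner containing $b_k$ but not $b_j$ lies on the negative side of $BH(b_j,b_k)$, and that $b_1$ and $b_2$ lie on every such bisector, so adding them never changes sides. Your explicit identity $x \cdot (b_j - b_k) = z_j - z_k$ is just a crisper formalization of these assertions, with the added merit of giving necessity and sufficiency in one stroke.
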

\begin{proof}
See the proof of Lemma~\ref{lem_fold}.
\end{proof}
\begin{proof} (of Theorem~\ref{theo_Dn_second_kind_lin})
(i) The folding  via $BH(b_j, b_k)$, $3 \leq j < k  \leq n$, switches $b_j$  and $b_k$ in
the hyperplane containing $\D(\B)$, which is orthogonal to $e_1$. Switching $b_j$
and $b_k$ does  not change the decision boundary because of the basis
symmetry, hence  $f$ is unchanged. 

Now, for (ii), how many pieces are left after all reflections?
To count the number of pieces of $f$, defined on $\D'(\B)$, 
we need to enumerate the cases where both $x \in \CBONE$ and $x'\in \CN(x) \cap \CBZERO$ are 
on the non-negative side of all reflection hyperplanes. 

Firstly, we investigate the effect of the folding operation on the term $\sum_{i=0}^{n-2}[1+(n-2-i)] \times \binom{n-2}{i}$ in Equation~\eqref{eq_proof_big}.
Remember that it is obtained via $(l)$ (i.e.  Equation~\eqref{eq_prop_Dn_2}).
Due to the reflections, among the points in $\CBONE$ of the form $\sum_j b_j+b_1$ only $x=b_3+b_4+...+b_{i-1}+b_i+b_1$, $j\leq n$, is on the non-negative side
of all reflection hyperplanes (see result 1. of Lemma~\ref{lem_fold_2}).
Similarly, among the elements in $\CN(x) \cap \CBZERO$, only $x-b_1$ and $x-b_1+b_{i+1}$ (instead of  $x-b_1+b_k$, $3\leq k \leq n$)  are on the non-negative side of all reflection hyperplanes.
Hence, at each step $i$, the term $[1+(n-2-i)]$ becomes 2 (except for $i=n-2$ where it is 1).
Therefore, the folding operation reduced the term $\sum_{i=0}^{n-2}[1+(n-2-i)] \times \binom{n-2}{i}$ 
to $(n-2)\times2 + 1$.

Secondly, we investigate the reduction of the term $\sum_{i=0}^{n-2}\left[1+2(n-2-i)+\binom{n-2-i}{2}\right] \times \binom{n-2}{i}$ obtained via $(ll)$ (i.e.  Equation~\ref{eq_prop_Dn_2__2}).
The following results are obtained via item 2. of Lemma~\ref{lem_fold_2}.
Among the points denoted by $\sum_j b_j+b_2+b_1 \in \CBONE$ only $x=b_3+b_4+...+b_{i-1}+b_i+b_2+b_1$ is on the proper side of all reflection hyperplanes.
Among the neighbors of any of these points, of the form $(ll)-(2)$, only $x+b_{i+1}+b_{i+2}$ is on the proper side of all hyperplanes.
Additionally, among the neighbors of the form $(ll)-(1)$ and $(ll)-(b)$, i.e. $x+b_k$ or $x-b_2+b_k$, $3\leq k \leq n$, $b_k$ can only be $b_{i+1}$.
Therefore, the folding operation reduces the term  $\sum_{i=0}^{n-2}[1+2(n-2-i)+\binom{n-2-i}{2}] \times \binom{n-2}{i}$ 
to $(n-3)\times4 + 3+1$.

\end{proof}

\subsection{Proof of Theorem~\ref{theo_nbReg_Lin_En}: number of pieces of $f$ for $E_n$}
\label{App_func_En}
\begin{proof}
Similarly to the proof of Theorem~\ref{eq_nbReg_Dn_second_kind}, 
we count the number of simplices and investigate their size.
The number of pieces of $f$ is then obtained by summing 
the number of pieces of the boundary functions of each simplex 
(again, see the proof of Theorem~\ref{theo_nbReg_Lin_Dn_const_A}).
This is achieved by counting the number of elements in $\CN(x+~b_1)~\cap~\CBZERO$, for all $x \in \CBZERO$. In this scope, we group the lattice points $x \in \CBZERO$ within three categories.
The numbering of these categories matches the one given 
in the sketch of proof (see also Equation~\ref{eq_En_bis} below). $\sum_j b_j$ denotes any sum of points in the set $\{0,b_j\}_{j=4}^{n}$.

\begin{gather}
\label{eq_En_1}
\begin{split}
(l): \ &\forall \ x=\sum_j b_j \in~\CBZERO, \ x' \in D_n \backslash \{b_j,0\},\ 4 \leq k\leq n: \\
&x+ b_k \in \CN(x+b_1), \ x+ x' \not\in \CN(x+b_1) \cap \CBZERO. \\
\end{split}
\end{gather}
\begin{gather}
\label{eq_En_2}
\begin{split}
(ll)-A \ &\forall \ x= \sum_j b_j + b_2\in~\CBZERO, \ x' \in D_n \backslash \{b_i,-b_2 + b_i,-b_2 + b_i +b_k,0\},\ 4 \leq i < k\leq n: \\
&(1)  \ x+ b_i \in \CN(x+b_1), \ x -b_2 + b_i \in \CN(x+b_1), \\
&(2) \ x -b_2 + b_i +b_k  \in \CN(x+b_1), \\
&(3) \ x+ x' \not\in \CN(x+b_1) \cap \CBZERO. 
\end{split}
\end{gather}
\begin{gather}
\label{eq_En_3}
\begin{split}
(ll)-B \ &\forall \ x= \sum_j b_j + b_3\in~\CBZERO, \ x' \in D_n \backslash \{b_i,-b_3 + b_i,-b_3 + b_i +b_k,0\},\ 4 \leq i < k\leq n: \\
&(1) \ x+ b_i \in \CN(x+b_1), \ x -b_3 + b_i \in \CN(x+b_1), \\
&(2) \ x -b_3 + b_i +b_k  \in \CN(x+b_1), \\
&(3) \ x+ x' \not\in \CN(x+b_1) \cap \CBZERO. 
\end{split}
\end{gather}
\begin{gather}
\label{eq_En_4}
\begin{split}
(lll) \ &\forall \ x= \sum_j b_j + b_2+b_3 \in~\CBZERO, x' \in D_n \backslash \{b_i,b_i + b_k,b_i + b_k + b_l,0\}, \ 4 \leq i < k <l\leq n: \\
&(1) \ x -b_2+ b_k \in \CN(x+b_1), \ x -b_3+ b_k \in \CN(x+b_1), \ x  + b_k \in \CN(x+b_1),\\
&(2) \ x -b_3 - b_2 + b_i + b_k \in \CN(x+b_1), \\
& x  - b_2 + b_i + b_k \in \CN(x+b_1), \ x -b_3 + b_i + b_k \in \CN(x+b_1),\\
&(3) \ x+ b_i + b_k +b_l \in \CN(x+b_1), \\
& (4) \ x+ x' \not\in \CN(x+b_1) \cap \CBZERO. 
\end{split}
\end{gather}
We count the number of $i$-simplices per category.

$(l)$ is like $A_n$ (see Appendix~\ref{App_theo4}). Starting from the origin, one can form a $n-2$-simplex with $0$, $b_1$, and the $n-2$ other basis vectors except $b_2$ and $b_3$ (because they are perpendicular to $b_1$). 
Then, from any $b_{j_1}$, $4 \leq j_1\leq n$, one can only add (to $b_{j_1}$) the $n-3$ remaining basis vectors to generate a simplex in $\PPB$ where the top corner is $b_{j_1}+b_1$. 
Indeed, if we add again~$b_{j_1}$, the resulting point (i.e. $b_{j_1} + b_{j_1}$) is outside $\PPB$. 
Hence, we get a $n-3$-simplex and there are $\binom{n-3}{1}$
ways to choose $b_{j_1}$: any basis vector except~$b_1,b_2,b_3$. 
Similarly, if one starts the simplex from $b_{j_1}+b_{j_2}$, one can form a $n-4$-simplex in $\PPB$ and there are $\binom{n-3}{2}$ ways to choose $b_{j_1}+b_{j_2}$. 
In general, there are $\binom{n-3}{i}$ ways to form a $n-2-i$-simplex.

$(ll)$ is like the second basis of $D_n$ (see $(ll)$ in the proof in Appendix~\ref{App_Dn_second_kind}), repeated twice because we now have two basis vectors orthogonal to $b_1$ instead of one.
Hence, we get that there are $\binom{n-3}{i}$ ways to form a $2 \times \left(1+2(n-3-i)+ \binom{n-3-i}{2}\right)$-simplex.

$(lll)$ is the new category. We investigate the neighbors of a given point $x=\sum_j b_j+ b_3+b_2+b_1$.
First (1), any $\sum_j b_j+b_3+ b_2$ is in $\CN(x)\cap \CBZERO$. Any $\sum_j b_j + b_2+b_k$, $\sum_j b_j+b_3+b_k$, and $\sum_j b_j+ b_3+b_2+b_k$, where  $4\leq k \leq n$ and $k \not\in \{ j\}$ are also in $\CN(x)\cap \CBZERO$. 
Hence, there are $3 \times (n-3-i)$ of such neighbors, where $i = |\{j\}|$ (in $\sum_j b_j$). Then, (2) any $\sum_j b_j + b_i+b_k$, $\sum_j b_j + b_2+ b_i+b_k$, and $\sum_j b_j + b_3+ b_i+b_k$,  where $4\leq i<k \leq n$ and $i,k \not\in \{ j\}$,
are in $\CN(x)\cap \CBZERO$. There are $3\times \binom{n-3-i}{2}$ possibilities, where $i = |\{j\}|$. 
Finally (3), any $\sum_j b_j + b_i+ b_k+b_l$, $4 \leq i<k<l \leq n$ and $i,k,l \not\in \{ j\}$ are in $\CN(x)\cap \CBZERO$. There are $\binom{n-3-i}{3}$ of them, where $i = |\{j\}|$.

To summarize, each pattern replicates $\sum_i \binom{n-3}{i}$ times, where at each step $i$ the patterns yield $(l)$ $1+n-3-i$-simplices, $(ll)$  $2 \times \left(1+2(n-3-i)+ \binom{n-3-i}{2}\right)$-simplices, and $(lll)$
$1 + 3 \times (n-3-i) + 3\times \binom{n-3-i}{2}+ \binom{n-3-i}{3}$-simplices. As a result, the total number of pieces of $f$ is obtained as
\tiny
\begin{equation}
\label{eq_En_bis}
\sum_{i=0}^{n-3} \left( \underset{(l)}{\underbrace{\left[ 1+(n-3-i) \right]}} +\underset{(ll)}{\underbrace{2 \left[ 1+ 2(n-3-i)+ \binom{n-3-i}{2} \right]}} + \underset{(lll)}{\underbrace{\left[ \underset{(1)}{\underbrace{1+3(n-3-i) }}+  \underset{(2)}{\underbrace{3\binom{n-3-i}{2}}} + \underset{(3)}{\underbrace{\binom{n-3-i}{3}}} \right]} }\right)\underset{(o)}{\underbrace{\binom{n-3}{n-i}}}-3,
\end{equation}
\normalsize
where the -3 comes from the fact that for $i=n-3$, the four pieces generated by $(l)$, $(ll)$, and $(lll)$ are the same. 
Indeed, the bisector hyperplane of $x$, $x+b_1$, is the same as the one of $x+b_2$, $x+b_2+b_1$, of $x+b_3$, $x+b_3+b_1$,
and of $x+b_2+b_3$, $x+b_2+b_3+b_1$, since both $b_2$ and $b_3$ are perpendicular to $b_1$.
\end{proof}

\subsection{Proof of Theorem~\ref{theo_En_folding}: folding of $f$ for $E_n$}
\label{App_folding_En}

\begin{lemma}
\label{lem_fold_3}
Among the elements of $\mathcal{C}_{\PPB}$, only the points
of the form 
\begin{enumerate}
\item $x_1=b_4+...+b_{i-1}+b_i$ and $x_1 +b_1$, 
\item $x_2=b_4+...+b_{i-1}+b_i+b_2$ and $x_2+b_1$,
\item $x_3=b_4+...+b_{i-1}+b_i+b_2+b_3$ and $x_3+b_1$, 
\end{enumerate}
$i \leq n$, 
are on the non-negative side of all $BH(b_j,b_k)$, $4\le  j < k \le~n$.  
\end{lemma}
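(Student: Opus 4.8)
The plan is to follow verbatim the mechanism of the proof of Lemma~\ref{lem_fold}, reducing everything to a single scalar-product computation against each reflection normal and then reading off the surviving corners. I write an arbitrary corner as $x=\sum_{i\in S}b_i$ with $S\subseteq\{1,\dots,n\}$, so that being on the non-negative side of $BH(b_j,b_k)$ means $x\cdot v_{j,k}=x\cdot(b_j-b_k)\ge 0$. First I would compute this product from the Gram matrix~\eqref{eq_En}. For $4\le j<k\le n$ every generator other than $b_j,b_k$ is equidistant from $b_j$ and $b_k$: indeed $b_1\cdot(b_j-b_k)=1-1=0$, $b_2\cdot(b_j-b_k)=b_3\cdot(b_j-b_k)=0$, and $b_i\cdot(b_j-b_k)=0$ for $i\ge 4$, $i\neq j,k$, while $b_j\cdot(b_j-b_k)=1$ and $b_k\cdot(b_j-b_k)=-1$. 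Hence $x\cdot v_{j,k}=[j\in S]-[k\in S]$ (Iverson brackets), exactly as in the $A_n$/$D_n$ computations, so the non-negativity condition for all $4\le j<k\le n$ forces $S\cap\{4,\dots,n\}$ to be a prefix $\{4,5,\dots,i\}$, i.e. a tail $b_4+\cdots+b_i$ as in the three displayed families.

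Next I would isolate the three low generators. Since $b_1,b_2,b_3$ each contribute $0$ to every $x\cdot v_{j,k}$ with $4\le j<k\le n$, they are invisible to those reflections, so a priori any of the eight subsets of $\{b_1,b_2,b_3\}$ could appear in the head. The role of $b_1$ settles immediately: because $b_1\cdot(b_j-b_k)=0$ for all reflection normals (it is orthogonal to $b_2,b_3$ and equidistant from every pair $b_j,b_k$ with $j,k\ge 4$), the points $x$ and $x+b_1$ always lie on the same side of every hyperplane, which is precisely the ``and $x+b_1$'' clause attached to each of $x_1,x_2,x_3$. It then remains to cut the eight head-possibilities down to the three listed patterns $\emptyset$, $\{b_2\}$, $\{b_2,b_3\}$ (each optionally with $b_1$).

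The hard part — and the genuine difference from the single-orthogonal-generator situation of Lemma~\ref{lem_fold_2} — is that $E_n$ carries \emph{two} generators orthogonal to $b_1$, namely $b_2$ and $b_3$, coupled by $b_2\cdot b_3=1$. The clean way to resolve the head is to bring in the companion reflection $BH(b_2,b_3)$ that the folding map defined just above Theorem~\ref{theo_En_folding} also prescribes: with normal $b_2-b_3$ one gets $x\cdot(b_2-b_3)=[2\in S]-[3\in S]$, so any head containing $b_3$ but not $b_2$ falls on the negative side and is mirrored onto the corresponding head containing $b_2$ but not $b_3$. This discards exactly the two subsets $\{b_3\}$ and $\{b_1,b_3\}$ and leaves the six canonical heads $\emptyset,\{b_1\},\{b_2\},\{b_1,b_2\},\{b_2,b_3\},\{b_1,b_2,b_3\}$, which are precisely $x_1,x_1+b_1,x_2,x_2+b_1,x_3,x_3+b_1$. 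I would stress that the reflections indexed by $4\le j<k\le n$ by themselves leave a fourth, $b_3$-headed family on the non-negative side, so the crux of the argument is correctly accounting for the interaction of the two $b_1$-orthogonal generators through $BH(b_2,b_3)$, rather than the routine prefix argument inherited from Lemma~\ref{lem_fold}.
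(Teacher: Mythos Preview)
Your approach matches the paper's exactly: its proof of Lemma~\ref{lem_fold_3} is simply ``See the proof of Lemma~\ref{lem_fold}'', i.e.\ the same scalar-product computation $x\cdot v_{j,k}=[j\in S]-[k\in S]$ and prefix constraint on $S\cap\{4,\dots,n\}$ that you carry out.

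You have also correctly spotted something the paper glosses over. The hyperplanes $BH(b_j,b_k)$ with $4\le j<k\le n$ alone do \emph{not} eliminate the $b_3$-headed family $b_4+\cdots+b_i+b_3$ (and its $+b_1$ twin), since $b_2\cdot v_{j,k}=b_3\cdot v_{j,k}=0$ for all those indices. So the lemma as literally stated---``only'' three families survive under the reflections indexed by $4\le j<k\le n$---is not quite right; a fourth family survives too. Your fix is the correct one: the extra reflection $BH(b_2,b_3)$ is part of the folding map $F$ defined just before Theorem~\ref{theo_En_folding}, and the paper itself invokes it separately inside that theorem's proof (the line ``since any $\sum_j b_j+b_3+b_1$ is on the negative side of the hyperplane $BH(b_2,b_3)$''). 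The lemma statement should really read ``$4\le j<k\le n$ and $(j,k)=(2,3)$'' to match the folding; with that amendment your argument is complete and coincides with the paper's intent.
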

\begin{proof}
See the proof of Lemma~\ref{lem_fold}.
\end{proof}

\begin{proof}(of Theorem~\ref{theo_En_folding})
(i) The folding via $BH(b_j, b_k)$, $4 \leq j < k  \leq n$ and $j=2,k=3$, switches $b_j$  and $b_k$ in
the hyperplane containing $\D(\B)$, which is orthogonal to $e_1$. Switching $b_j$
and $b_k$ does  not change the decision boundary because  of the basis
symmetry, hence $f$ is unchanged. 

Now, for (ii), how many pieces are left after all reflections?
To count the number of pieces of $f$, defined on $\D'(\B)$, 
we need to enumerate the cases where both $x \in \CBONE$ and $x'\in \CN(x) \cap \CBZERO$ are 
on the non-negative side of all reflection hyperplane. 

Firsly, we investigate the effect of the folding operation on the term $\sum_{i=0}^{n-3}[1+n-3-i] \times \binom{n-3}{i}$ in Equation~\eqref{eq_En_bis}.
Remember that it is obtained via $(l)$ (i.e.  Equation~\eqref{eq_En_1}).
Due to result 1 of Lemma~\ref{lem_fold_3} and similarly to the corresponding term in the proof of Theorem~\ref{theo_Dn_second_kind_lin}, this term reduces to
$(n-3)\times2 + 1$.

Secondly, we investigate the reduction of the term $2 \left[ 1+ 2(n-3-i)+ \binom{n-3-i}{2} \right] \times \binom{n-3}{i}$, obtained via $(ll)$ (i.e.  Equation~\ref{eq_En_2}).
The following results are obtained via item 2 of Lemma~\ref{lem_fold_3}. 
$\binom{n-3}{i}$ reduces to 1 at each step $i$ because in $\CBONE$, only the points $x=b_2+b_3+b_{i-1}+b_i+b_1$ are on the non-negative side of all hyperplanes, $i\leq n$.
Then, since any $\sum_j b_j + b_3 +b_1$ is on the negative side of the hyperplane $BH(b_2,b_3)$, $(ll)-(B)$ generates no pieces in $f$ (defined to $\D'(\B)$).
$(ll)-(A)$ is the same situation as the situation $(ll)$ in the proof of Theorem~\ref{theo_Dn_second_kind_lin}.
Hence, the term reduces to $(n-3)\times(4)+3+1$.

Finally, what happens to the term $\left[1+3(n-3-i) +  3\binom{n-3-i}{2} + \binom{n-3-i}{3} \right] \binom{n-3}{n-i}$, obtained via $(lll)$ (i.e.  Equation~\ref{eq_En_3})?
The following results are obtained via item 3 of Lemma~\ref{lem_fold_3}. 
As usual, $\binom{n-3}{n-i}$ reduces to 1 at each step $i$.
Then, $3(n-3-i)$, due to $(lll)-(1)$, becomes $2\times 1$ at each step $i$ because any $x-b_2+b_k$ (in $(lll)-(1)$), $k \leq 4 \leq n$, is on the negative side of $BH(b_2,b_3)$. 
For $x-b_3+b_k$ and $x+b_k$, only one valid choice of $b_k$ remains at each step $i$, as explained in the proof  of Theorem~\ref{theo_Dn_second_kind_lin}.
Regarding the term $3\binom{n-3-i}{2}$, due to $(lll)-(2)$, any point $x-b_2+b_i+b_k$ (in $(lll)-(2)$) is on the negative side of $BH(b_2,b_3)$ and 
at each step $i$ there is only one valid way to
chose $b_j$ and $b_k$ for both $x-b_3-b_2+b_j+b_k$ and $x-b_3+b_j+b_k$.
Eventually, for the last term due to $(lll)-(3)$ only one valid choice remain at each step $i$.
Therefore, the term due to $(lll)$ is reduced to
to $(n-4)\times6 + 5+3+1$.
\end{proof}

\section{From folding to a deep ReLU network}
\label{sec_folding_relu}

For the sake of simplicity and without loss of generality, in addition to the standard ReLU activation function ReLU$(a)=\max(0, a)$, we also allow the function $\max(0,-a)$ and the identity as activation functions in the network.

To implement a reflection, one can use the following strategy.
\begin{enumerate} 
\item Step~1: rotate the axes to have the $i$-th axis $e_i$ perpendicular to the reflection hyperplane and shift the point (i.e. the $i$-th coordinate) to have the reflection hyperplane at the origin.
\item Step~2: compute the absolute value of the $i$-th coordinate. 
\item Step~3: do the inverse operation of step 1.
\end{enumerate}
Now consider the ReLU network illustrated in Figure~\ref{fig_relu_ref}. 
The edges between the input layer and the hidden layer represent the rotation matrix, where the $i$-th column is repeated twice, and $p$ is a bias applied on the $i$-th coordinate.
Within the dashed square, the absolute value of the $i$-th coordinate is computed
and shifted by $-p$.
Finally, the edges between the hidden layer and the output layer represent the inverse rotation matrix. 
This ReLU network computes a reflection. We call it a reflection block.

\begin{figure}[H]
    \centering
    \includegraphics[scale=0.75]{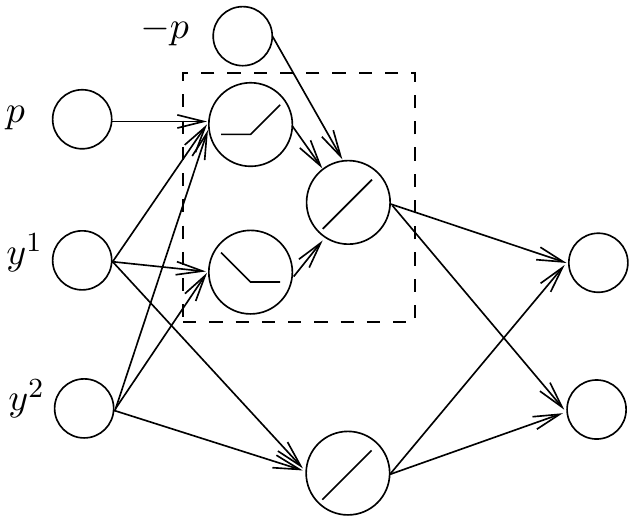}
     \caption{Reflection ReLU network (called reflection block).}
     \label{fig_relu_ref}
\end{figure}

All reflections can be naively implemented by a simple
concatenation of reflection blocks.  
Since the $\mathcal{O}(n)$ remaining operations to perform are negligible compared to the previous folding operations (see e.g. Appendix~\ref{App_remaining_pieces}),
the depth of this network increases linearly with the number of reflections and its width is linear in the dimension.


\section{Additional material}
\label{App_addi_mat}
The following figure shows the decision boundary function $f^{n=3}$ defined on $\D(B)$ oscillating around the hyperplane $\Phi^{n=3}$.
We can also observe the truncated simplices.
The black edges on the figure connect a point $x \in \CBONE$ to an element of $\CN(x) \cap \CBZERO$. 
Any piece of $f$ is orthogonal to one of these edges.
\begin{figure}[H]
    \centering
    \includegraphics[scale=0.5]{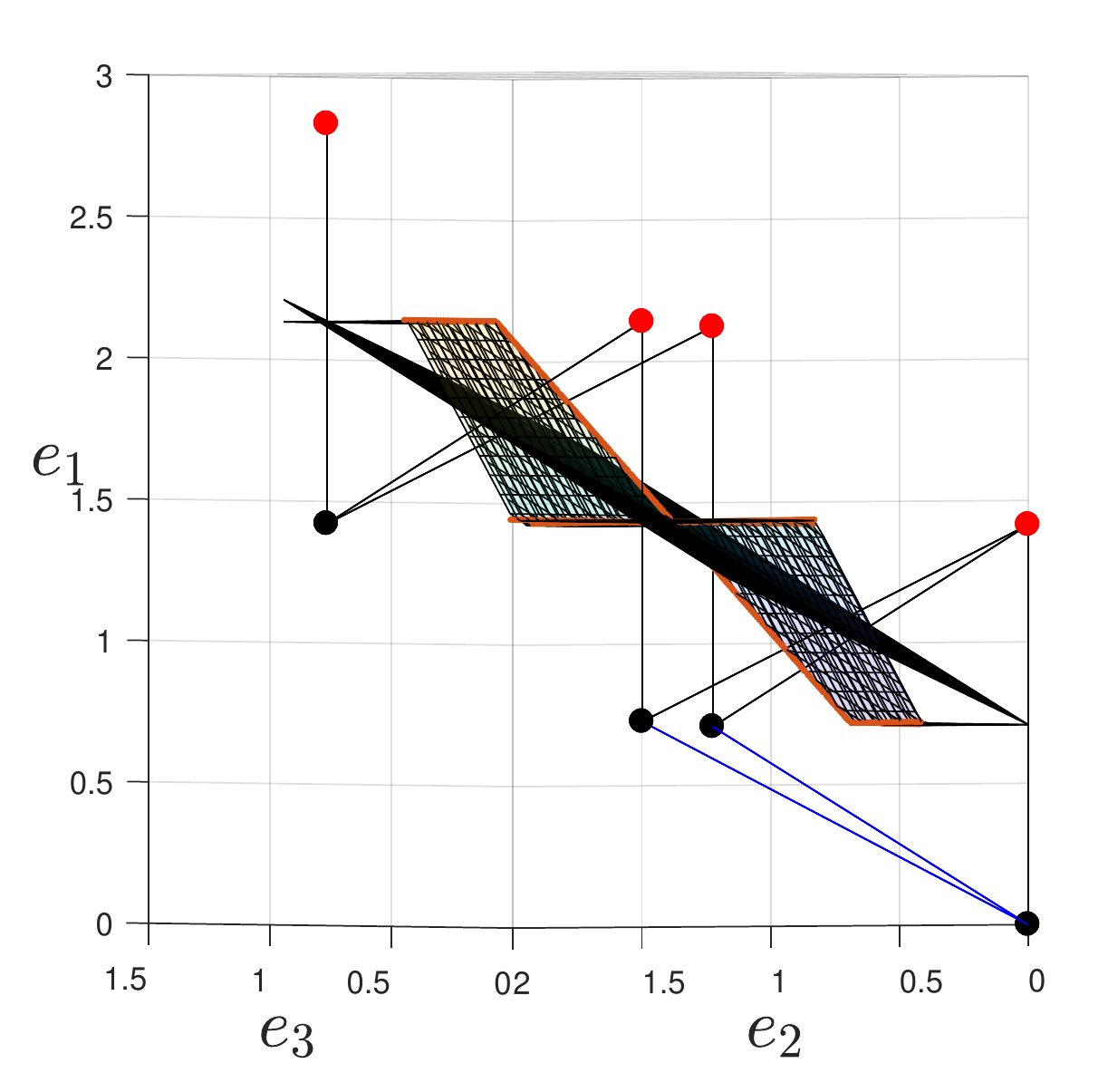}
     \caption{Decision boundary function for $A_3$ oscillating around the plane $\Phi^{n=3}$.}
     \label{fig_func_A3_other}
\end{figure}

\begin{figure}[H]
    \centering
    \includegraphics[scale=0.5]{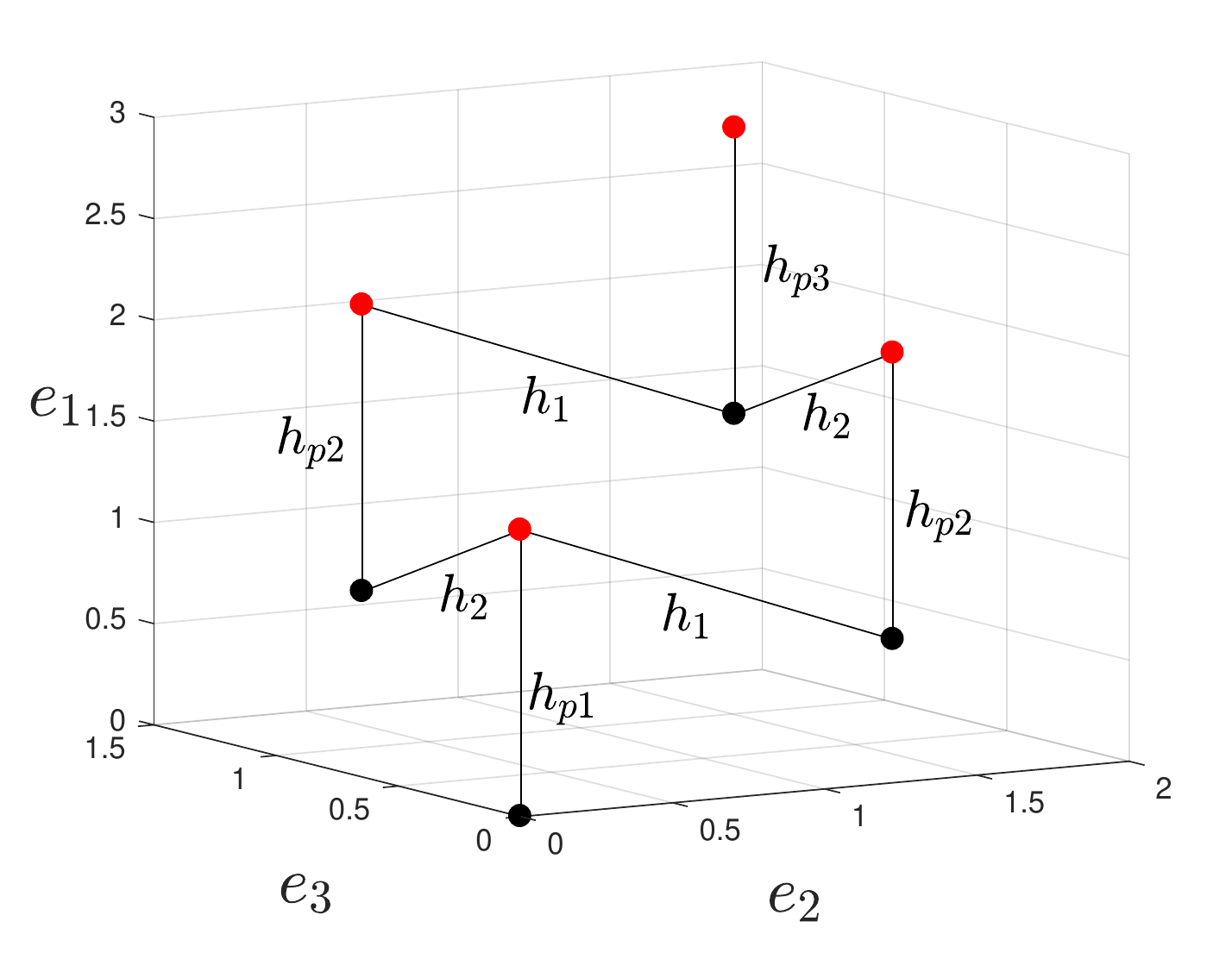}
     \caption{``Neighbor" figure of $\mathcal{C}_{\PPB}$ for $A_3$. Each edge connects a point $x \in \CBONE$ to an element of $\CN(x) \cap \CBZERO$.
	          The $i$ edges connected to a point $x \in \CBONE$ are $1$-faces of a regular $i$-simplex. The pieces of $f$ are orthogonal to these edges.}
     \label{fig_simplex_A3_neighbor}
\end{figure}

\section{Theorem~4 of \cite{Corlay2019} and its proof}
\label{App_theo4}

The purpose of this part is to count the number of pieces, and thus linear regions,
of the decision boundary function $f$ for $A_n$.
We start with the following lemma involving $i$-simplices, which is then used to prove the following theorem.

\begin{lemma}
\label{lem_simplex}
Consider an $A_n$-lattice basis defined by the Gram matrix~\eqref{eq_basis_An}.
The decision boundary function $f$ has a number of affine pieces equal to
\begin{align}
\sum_{i=0}^{n}i \times ( \text{$\#$ regular $i$-simplices}),
\end{align}
where, for each $i$-simplex, only one corner $x$ belongs to $\CBONE$
and the other corners constitute the set $\CN(x) \cap \CBZERO$.
\end{lemma}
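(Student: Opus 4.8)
The plan is to reduce the count of affine pieces of $f$ to a count over pairs $(x,x')$ with $x \in \CBONE$ and $x' \in \CN(x) \cap \CBZERO$, and then to regroup those pairs by their common apex $x$. First I would invoke Theorem~\ref{th_func_VR} together with the description of the decision boundary preceding it: every affine piece of $f$ lies in a boundary hyperplane, and each boundary hyperplane carries the Voronoi facet shared by a pair $(x,x')$ with $x \in \CBONE$ and $x' \in \CN(x) \cap \CBZERO$. Conversely, each such pair contributes the facet between $x$ and $x'$, which lies on the decision boundary separating $\CBZERO$ from $\CBONE$. Thus the linear regions of $f$ are in correspondence with the facets $\{x \mid x'\}$ appearing in the boundary, and counting pieces amounts to counting these facets.

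Next I would fix an apex $x \in \CBONE$ and consider the set $\CN(x) \cap \CBZERO$ of its relevant neighbors on the $z_1=0$ side. Because the $A_n$ basis of~\eqref{eq_basis_An} has equal-length vectors meeting pairwise at angle $\pi/3$, the apex together with these $i := |\CN(x) \cap \CBZERO|$ neighbors spans a \emph{regular} $i$-simplex, precisely the simplex named in the statement. The portion of the global boundary attached to $x$ is then the decision boundary of this isolated simplex, i.e. the CPWL function separating the single apex $x$ from the $i$ base vertices in $\CN(x) \cap \CBZERO$. Under the orientation of Theorem~\ref{th_func_VR} the apex has the largest $e_1$-coordinate, so no base vertex sits above it; the resulting sub-boundary is a maximum of the $i$ affine functions carrying the bisectors $\{x \mid x'\}$, hence convex with exactly $i$ pieces.

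I would then argue that these per-apex sub-boundaries assemble into $f$ without overlap. Each affine piece of $f$ determines its underlying facet, which in turn determines the pair $(x,x')$ and hence the unique endpoint $x \in \CBONE$ of the segment whose perpendicular bisector carries the piece; so no facet is shared by two distinct simplices, and every facet of every simplex genuinely survives as a linear region of $f$. Summing the $i$ pieces contributed by each apex and grouping apices by the size $i$ of their neighbor set gives
\[
\mathcal{N}(f) = \sum_{x \in \CBONE} |\CN(x) \cap \CBZERO| = \sum_{i=0}^{n} i \times (\text{$\#$ regular $i$-simplices}),
\]
which is the claimed formula; the $i=0$ term vanishes, consistent with $0$-simplices (apices with no relevant $\CBZERO$-neighbor) contributing no piece.

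The step I expect to be the main obstacle is the tiling/no-overlap claim: one must check that the local simplex boundaries glue into the global $f$ with no merging or cancellation of pieces, that is, that each bisector facet $\{x \mid x'\}$ appears as a distinct region and that distinct pairs never yield the same region. This rests on the precise adjacency structure of the corners $\mathcal{C}_{\PPB}$ of $A_n$ — in particular on a clean characterization of $\CN(x) \cap \CBZERO$ — so the substantive work lies in the geometric verification of the neighbor relation rather than in the elementary regrouping of the sum, which is why the explicit evaluation of the simplex counts is deferred to the theorem that follows.
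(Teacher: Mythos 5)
Your proposal is correct and follows essentially the same route as the paper's proof: group the pieces of $f$ by their apex $x \in \CBONE$, observe that $x$ together with $\CN(x) \cap \CBZERO$ forms a regular simplex whose separating boundary is convex with $|\CN(x) \cap \CBZERO|$ pieces, and sum over apices. The only difference is presentational: the paper states the neighbor characterization explicitly as its key property \eqref{eq_prop}, whereas you defer it as the remaining geometric verification; both arguments rest on exactly that fact.
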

\begin{proof}
A key property of this basis is 
\begin{gather}
\label{eq_prop}
\begin{split}
&\forall \ x \in~\CBZERO, \ x' \in A_n \backslash \{b_j,0\},\ 2 \leq j\leq n: \\
&x+ b_j \in \CN(x+b_1), \ x+ x' \not\in \CN(x+b_1) \cap \CBZERO.
\end{split}
\end{gather}
It is obvious that $\forall \ x \in~\CBZERO$: $x + b_1 \in~\CBONE$. 
This implies that any given point $x \in \mathcal{C}^1_{\PPB}$
and its neighbors $\CN(x)~\cap~\CBZERO$ form a regular simplex
$\mathcal{S}$ of dimension $|\CN(x) \cap \CBZERO|$. 
This clearly appears on Figure~\ref{fig_simplex_A3_neighbor}.
Now, consider the decision boundary function of a $k$-simplex separating the top corner (i.e. $\mathcal{C}^1_{\mathcal{S}}$) from all the other corners  (i.e. $\mathcal{C}^0_{\mathcal{S}}$). 
This function is convex and has $k$ pieces. 
The maximal dimension of such simplex is obtained
by taking the points 0, $b_1$, and the $n-1$ points $b_j$, $j\ge 2$.
\end{proof}

\begin{theorem}
\label{theo_nbReg_Lin}
Consider an $A_n$-lattice basis defined by the Gram matrix~\eqref{eq_basis_An}.
The decision boundary function $f$ has a number of affine pieces equal to
\begin{equation}
\label{eq_An_pieces}
\sum_{i=1}^{n}i \cdot \binom{n-1}{n-i}.
\end{equation}
\end{theorem}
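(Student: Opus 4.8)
The plan is to combine Lemma~\ref{lem_simplex} with an explicit enumeration of the simplices it refers to. Lemma~\ref{lem_simplex} already reduces the problem to counting, for each $i$, the number of regular $i$-simplices whose unique top corner lies in $\CBONE$ and whose remaining $i$ corners form the set $\CN(x)\cap\CBZERO$. By the key property~\eqref{eq_prop}, every point $x\in\CBZERO$ produces exactly one such simplex, with top corner $x+b_1\in\CBONE$, so it suffices to walk through all $2^{n-1}$ points of $\CBZERO$ and record the dimension of the simplex each one generates.

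To carry out the count, I would parametrize each $x\in\CBZERO$ by the basis vectors it contains, writing $x=b_{j_1}+\dots+b_{j_i}$ as a sum of $i$ distinct vectors from $\{b_2,\dots,b_n\}$. By~\eqref{eq_prop}, the neighbors of $x+b_1$ inside $\CBZERO$ are $x$ itself (recovered by dropping $b_1$) together with every $x+b_j$ for which $b_j$ is not already present in $x$; adding a basis vector already present would push the point outside $\PPB$ and is therefore excluded. Hence the number of such neighbors is $1+(n-1-i)=n-i$, so $x$ generates an $(n-i)$-simplex. Counting choices, there are exactly $\binom{n-1}{i}$ ways to pick a sum of $i$ distinct basis vectors, so there are $\binom{n-1}{i}$ simplices of dimension $n-i$.

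Reindexing by the simplex dimension through the substitution $k=n-i$ converts ``$\binom{n-1}{i}$ simplices of dimension $n-i$'' into ``$\binom{n-1}{n-k}$ simplices of dimension $k$'', with $k$ running from $1$ (when $i=n-1$, i.e. $x=b_2+\dots+b_n$) to $n$ (when $i=0$, i.e. $x=0$). Substituting this into the expression of Lemma~\ref{lem_simplex} gives $\sum_{k=1}^{n}k\cdot\binom{n-1}{n-k}$, which is the announced formula~\eqref{eq_An_pieces}.

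I expect the main obstacle to be the careful justification that this enumeration is both exhaustive and free of double counting. Distinct $x\in\CBZERO$ always yield distinct simplices, since the top corner $x+b_1$ determines $x$, so double counting is not an issue. The more delicate point is establishing that property~\eqref{eq_prop} genuinely captures all of $\CN(x+b_1)\cap\CBZERO$ and nothing more --- in particular that two-vector sums such as $x+b_i+b_j$ are never Voronoi-neighbors of $x+b_1$ in the $A_n$ geometry. This is exactly what distinguishes $A_n$ from the denser lattices treated later: compare Theorem~\ref{theo_nbReg_Lin_Dn_const_A}, where such two-vector sums do contribute extra neighbors, so the analogous count acquires the additional $\binom{n-1-i}{2}$ term. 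The verification therefore rests on the minimum-distance structure of $A_n$ encoded in the Gram matrix~\eqref{eq_basis_An}.
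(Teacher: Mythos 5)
Your proposal is correct and follows essentially the same route as the paper's own proof: invoke Lemma~\ref{lem_simplex}, walk through the $2^{n-1}$ points of $\CBZERO$ parametrized by how many basis vectors they contain, use property~\eqref{eq_prop} to conclude that a point built from $i$ basis vectors yields an $(n-i)$-simplex in $\binom{n-1}{i}$ ways, and reindex to obtain~\eqref{eq_An_pieces}. Your added remarks (injectivity of $x \mapsto x+b_1$ ruling out double counting, and the observation that the real content lies in verifying~\eqref{eq_prop} against the $A_n$ minimum-distance structure) are sound refinements of the same argument, not a different one.
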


\begin{proof}
From Lemma~\ref{lem_simplex},
what remains to be done is to count the number of $k$-simplices.
We walk in $\CBZERO$ and for each of the $2^{n-1}$ points $x \in \CBZERO$ 
we investigate the dimension of the simplex where the top corner is $x + b_1 \in \CBONE$. 
This is achieved by counting the number of elements in $\CN(x+~b_1)~\cap~\CBZERO$, 
via the property given by (\ref{eq_prop}). 
Starting from the origin, one can form a $n$-simplex with $0$, $b_1$, and the $n-1$ other basis vectors. 
Then, from any $b_{j_1}$, $j_1\neq 1$, one can only add the $n-1$ remaining basis vectors to generate a simplex in $\PPB$. 
Indeed, if we add again~$b_{j_1}$, the point goes outside $\PPB$. 
Hence, we get a $n-1$-simplex and there are $\binom{n-1}{1}$
ways to choose $b_{j_1}$: any basis vector except~$b_1$. 
Similarly, if one starts the simplex from $b_{j_1}+b_{j_2}$, one can form a $n-2$-simplex in $\PPB$ and there are $\binom{n-1}{2}$ ways to choose $b_{j_1}+b_{j_2}$. 
In general, there are $\binom{n-1}{k}$ ways to form a $n-k$-simplex.
Applying the previous lemma and summing over $k=n-i=0 \ldots n-1$ gives
the announced result.
\end{proof}

\section{Computing the $\mathcal{O}(n)$ remaining pieces of $f$ after folding}
\label{App_remaining_pieces}

The remaining pieces of $f$ can be evaluated via $\mathcal{O}(\log(n))$ additional hidden layers. 
First, compute the $\mathcal{O}(n)$ $\vee$ via $\mathcal{O}(1)$ layers of size $\mathcal{O}(n)$ containing several ``max ReLU networks" 
(see e.g. Figure~3 in \cite{Arora2018}). 
Then, compute the $n$-$\wedge$ via $\mathcal{O}(\log(n))$ layers. 

\section{Additional material on lattices}
\label{App_latt}

The Gram matrix is $\Gamma=GG^T=(GQ)(GQ)^T$, 
where $Q$ is any $n \times n$ orthogonal matrix.
All bases defined by a Gram matrix are equivalent modulo rotations and reflections.
A lower triangular generator matrix is obtained from the Gram matrix 
by Cholesky decomposition. 

$\PPB$ and $\mathcal{V}(x)$ are fundamental regions of the lattice: one can perform a tessellation of $\R^n$ with these regions.

The fundamental parallelotope of $\Lambda$, defined by a basis $\B$, is given by
\begin{equation}
\label{equ_PB}
\mathcal{P}(\mathcal{B}) = \{ y \in \R^n : y=\sum_{i=1}^{n}\alpha_{i}g_{i}, \ 0 \leq \alpha_{i} < 1  \}.
\end{equation}
The fundamental volume of $\Lambda$ is $\det(\Lambda)=|\det(G)|=\vol(\mathcal{V}(x))=\vol(\mathcal{P}(\mathcal{B}))$.
The Voronoi cell of $x$ is:
\begin{equation}
\mathcal{V}(x)=\{ y \in \R^n : \|y-x\| \le \|y-x'\|, \forall \ x' \in \Lambda \}.
\end{equation}

A surface in $\R^n$ defined by a function $g$ of $n-1$ arguments is written as
$\text{Surf}(g)=\{ (g(\tilde{y}), \tilde{y}) \in \R^n : \tilde{y} \in \R^{n-1} \}$.

\begin{definition}
\label{def_semi-Voronoi-reduced}
Let $\B= \{b_i\}_{i=1}^{n}$ be a basis of $\Lambda$. Suppose that the $n-1$ points $\mathcal{B} \backslash \{ b_1\}$
belong to the hyperplane $\{y \in \R^n : \ y \cdot e_1 =0\}$. The basis is called
semi-Voronoi-reduced (SVR) if there exists at least two points $x_1, x_2 \in \CBONE$
such that 
$\text{Surf}(\vee_{k=1}^{\ell_1}g_{1,k}) \bigcap \text{Surf}(\vee_{k=1}^{\ell_2}g_{2,k})
\ne \varnothing$, where $\ell_1,\ell_2\geq 1$,
$g_{1,k}$ are the facets between $x_1$ and all points in $\CN(x_1) \cap \CBZERO$,
and $g_{2,k}$ are the facets between $x_2$ and all points in $\CN(x_2) \cap \CBZERO$.
\end{definition}

\end{document}